\icmltitlerunning{The Intrinsic Robustness of Stochastic Bandits to Strategic Manipulation}
\newcommand{\kibitz}[2]{\ifnum\Comments=1{\color{#1}{#2}}\fi}
\newcommand{\new}[1]{\kibitz{black}{#1}}
\newcommand{\E}{\mathbb{E}}
\DeclareMathOperator*{\argmax}{arg\,max}
\newcommand{\1}{\mathbb{I}}
\newcommand{\eps}{\epsilon}
\newcommand{\vareps}{\varepsilon}
\newcommand{\egreedy}{$\varepsilon$-Greedy}
\newcommand{\lsi}{$\mathtt{LSI}$ }
\newcommand{\R}{\mathbb{R}}
\newcommand{\HH}{\mathcal{H}}
\newcommand{\PP}{\mathbb{P}}
\renewcommand{\S}{\mathcal{S}}
\theoremstyle{plain}
\newtheorem{theorem}{Theorem}[section]
\newtheorem{lemma}[theorem]{Lemma}
\newtheorem{definition}[theorem]{Definition}
\newtheorem{proposition}[theorem]{Proposition}
\newtheorem{fact}[theorem]{Fact}
\begin{document}

\twocolumn[
\icmltitle{The Intrinsic Robustness of Stochastic Bandits to Strategic Manipulation}

\icmlsetsymbol{equal}{*}

\begin{icmlauthorlist}
\icmlauthor{Zhe Feng}{harvard}
\icmlauthor{David C. Parkes}{harvard}
\icmlauthor{Haifeng Xu}{virginia}
\end{icmlauthorlist}

\icmlaffiliation{harvard}{Harvard University, Cambridge, Massachusetts, USA}
\icmlaffiliation{virginia}{University of Virginia, Charlottesville, Virginia, USA}

\icmlcorrespondingauthor{Zhe Feng}{zhe\_feng@harvard.edu}

\icmlkeywords{Bandit Algorithm, Strategic Manipulation, Nash Equilibrium}

\vskip 0.3in
]

\printAffiliationsAndNotice{}  %

\begin{abstract}
Motivated by economic applications such as recommender systems, we study the behavior of  stochastic bandits algorithms under \emph{strategic behavior} conducted by rational actors, i.e.,  the arms. Each arm is a \emph{self-interested} strategic player who can modify its own reward whenever pulled, subject to a cross-period budget constraint, in order to maximize its own expected number of times of being pulled. We analyze the robustness of three popular  bandit algorithms: UCB, $\varepsilon$-Greedy, and Thompson Sampling. We prove that all three algorithms achieve a regret upper bound $\mathcal{O}(\max \{ B, K\ln T\})$  where $B$ is the total budget across arms, $K$ is the total number of arms and $T$ is length of the time horizon.  This regret guarantee holds under \emph{arbitrary  adaptive} manipulation strategy of  arms.  Our second set of main results shows that this regret bound is \emph{tight}--- in fact for UCB it is tight even when we restrict the arms' manipulation strategies to form a \emph{Nash equilibrium}.  The lower bound makes use of a simple manipulation strategy, the same for all three algorithms, yielding a bound of $\Omega(\max \{ B, K\ln T\})$. Our results illustrate the robustness of classic bandits algorithms against strategic manipulations as long as $B=o(T)$. 
\end{abstract}

\section{Introduction}

Multi-armed bandits (MAB) algorithms play a significant role in learning to make decisions across the digital economy, for example in online advertising \cite{CMR15,FSS19}, search engines \cite{KSWA15}, and recommender systems \cite{LCLS10}. Classical stochastic MAB models assume that the reward feedback of each arm is drawn from a fixed distribution. However, in many economic applications,  an arm may be \emph{strategic} and able to modulate its own reward feedback in order to further its own objective, e.g., increasing the number of times it is selected. For instance,  restaurants may  offer discounts or free dishes in order to entice customers to return, and  sellers on Amazon may offer discounts or coupons in order to receive higher ratings and thus increase their ranking.  %

We distinguish two different kinds of actors in our  strategic setting: the \emph{principal} and the \emph{arms}. 
The principal represents a multi-armed bandit algorithm,
corresponding to a system, such as the Amazon marketplace platform.
The arms  represent the parties who generate reward feedback to the principal,
for example the  sellers on Amazon.
We assume that the {\em true reward} of each arm is drawn from an underlying distribution. Further, we model each  arm $i$ as a  strategic agent, able to  manipulate its own reward, but subject to a total budget $B_i$ across all time periods.
The objective of an arm is to maximize its expected number of times being pulled. Arms can only modify their own reward feedback, and have no control over the rewards of the other arms. An arm's strategy can be adaptive--- that is, the amount by which an arm modulates the current reward can depend on his own history  of realized
rewards and manipulations. %
Since arms' strategies affect each other, through the MAB algorithm, this dynamic interaction forms a situation of strategic interdependence among arms, more precisely, a {\em stochastic game}.

This study is motivated by various economic applications of MAB, where strategic manipulations appear more realistic than the more conservative consideration of  {\em adversarial attacks}~\cite{JLMZ18,LMP18}.
The central question that we study in this paper is the following:

\begin{quote}
 \em{Are existing stochastic bandit algorithms robust to strategic manipulation by arms? Quantitatively, can we characterize their regret bounds? } %
\end{quote}
\vspace{-10pt}

For a motivating example, suppose that  a recommender system such as Yelp runs a stochastic bandit algorithm \new{to recommend a single restaurant to each user}. The arms correspond to restaurants to be recommended and each user access to the system corresponds to a pull of the arms. %
The true service quality of each restaurant follows some underlying distribution. However, restaurants are strategic, and a natural objective is to maximize the expected number of times a restaurant is recommended to users. To do so, it is common to provide discounts to some user (modified rewards in our model), subject to budget constraints because the restaurants cannot provide arbitrarily many discounts. 
In this context, our goal is to understand how the strategic behavior of restaurants can affect the platform's regret.

\subsection{Our Results and Implications}

\noindent {\bf Results. } Our main results illustrate that the three popular stochastic bandits algorithms of Upper Confidence Bound (UCB),
$\eps$-Greedy, and Thompson Sampling, are robust to strategic manipulations. %
Specifically, we show that the regret of all three algorithms is upper bounded by $\mathcal{O} \big( \sum_{i \not = i^*} \max \{ B_i, \frac{\ln  T}{\Delta_i}  \} \big)$, where $i^*$ indexes the optimal arm w.r.t. the true rewards, and $\Delta_i$ is the difference in the mean of the true reward between arm $i$ and $i^*$.
\new{For convenience, we assume throughout the paper that $B_{i^*} = 0$, since any $B_{i^*} >0$ would only help $i^*$ to be pulled more, and thus benefit the principal.}   %
Interestingly, the regret bound holds for arbitrary adaptive arm strategies.

One natural question is whether it is possible to achieve smaller regret bounds if we restrict strategies to form a \emph{Nash equilibrium}, which is the standard solution concept in game theory. We answer this question in the negative,  at least for UCB. %
We characterize the dominant-strategy equilibrium of the game
induced by the UCB algorithm,  and prove a lower bound on regret of $\Omega(\max \{ B, K\ln T  \})$ for equilibrium arm manipulations, where $K$ is the number of arms and $B$ is the total budget across arms. 
This shows that the upper bound is essentially tight, even under equilibrium behaviors.  All our bounds hold for both bounded and unbounded rewards.
We also provide a matching lower bound for \egreedy{} and TS under a natural, lump sum investing strategy, in which an arm spends all of its budget the first time it is pulled.  We have not been able to show whether or not this strategy forms a Nash equilibrium in the induced stochastic game, and leave open the question of whether the regret bound is also tight for \egreedy{} and Thompson sampling (TS) under equilibrium behavior. 

\noindent {\bf Implications. }  %
These results show  that the performances of all three MAB algorithms  deteriorates linearly in the total budget $B = \sum_{i \not = i^*} B_i$.  
As long as $B = o(T)$, the optimal arm will  be pulled $T - o(T)$ times. %
The simulation results also validate this linear dependence on $B$.

Since our upper bounds on regret hold for arbitrary arm behaviors, even allowing for 
reducing the reward on arms,
they can also  correspond to the choices of a single adversary, %
and the  results also shed light on \emph{adversarial attacks} on stochastic bandit algorithms.  
In contrast to existing adversarial models, the key difference is that the reward of the optimal arm, $i^*$, cannot be modified.
With rational behavior, this is without loss; if the optimal arm had an associated budget then this can only lead to more  pulls of  this  arm and lower regret.
Our results show that if a single adversary cannot contaminate the optimal arm, then standard bandits algorithms are already robust.  The bound would also  hold in a more general setting
in which the optimal arm's reward can only be increased.

Concretely, the results can be alternatively interpreted as follows:
for an adversarial corruption model that is modified to prevent contamination of the optimal arm, then UCB, $\varepsilon$-Greedy,  and TS all have regret $O(\max \{ B, K\ln T \})$, and are robust as long as $B = o(T)$. This is in sharp contrast to the  situation of \emph{unrestricted adversarial attacks},  where an attack budget of $\mathcal{O}(\ln  T)$ can  lead algorithms such as UCB and $\varepsilon$-Greedy to suffer regret $\Omega(T)$ \cite{JLMZ18,LMP18}. Even for state-of-the-art, robust bandits algorithms~\cite{Gupta2019colt}, the regret bound $O(KB + \sum_{i \not = i^*} \frac{\log T}{\Delta_i}\log(\frac{K}{\delta} \log T))$   is worse than the bound in the present paper by a factor of $K$ (when $B = \Omega(\log T)$).

Another implication of the present work is to the problem of \emph{incentivizing exploration}, where the principal relies on users to pull arms \cite{Frazier2014incentivizing,Wang2018multi}, and users are modeled as myopic and only care about their immediate reward. The idea is that the principal can provide rewards to encourage more exploration. At the same time, it has been observed in field experiments that users are generally biased towards reporting a higher evaluation when provided with these kinds of incentives, i.e., an upwards-biased reward.  Our results have been applied by~\citet{LiuAAAI20} to show that bandit algorithms are robust to this kind of bias: if
reported rewards can only be upwards-biased (a special case of our model),
then the bandit algorithm will be robust, also allowing for the reward feedback on
the optimal arm to be affected.

\subsection{Additional Related Work}  

In this work, we study strategic manipulation in the context of classical  stochastic bandit algorithms. This is similar in spirit to~\citet{JLMZ18}, who study  adversarial attacks to UCB and  $\epsilon$-Greedy.  The relation and differences between their results and ours are elaborated above. Another related, and complementary, line of research is on designing new algorithms for stochastic bandits that are robust to adversarial corruptions \cite{LMP18,Gupta2019colt}.  In principle, we could have also studied these algorithms in the present context. However, we believe that it remains important to understand the  conditions under which classical, simple bandit algorithms work well,  because they are likely to be used in real-world applications. Moreover,
the regret guarantees
 of these classical algorithms,
 in our strategic setup, is better than the bounds available for these robust algorithms
 under adversarial corruptions. It is an interesting open question to understand whether these robust algorithms can achieve the same or even better regret bound when restricted to our strategic setup. %
 Another further work is to understand strategic behavios in the recent line of works in non-stationary bandits, e.g.,~\citet{BGZ19, WSZ19}.

This  work belongs to the general field of %
\emph{no-regret learning with strategic agents}. Much of this literature is focused on designing no-regret learning algorithms under  strategic behavior, and has studied problems arising from concrete applications such as auctions, e.g., \cite{BKRW04,WRP16,Feldman2016,FPS18} and recommender systems \cite{Bayesian15,Bayesian19}. 
However, the strategic behavior in these models do not correspond to arm manipulation, but rather correspond to bidding strategies or auction mechanisms.   
To our knowledge,  \citet{BMSW17} are the first to consider  strategic behaviors of arms in stochastic bandit settings. In their model, when an arm is pulled, it  receives a private reward $v$ and strategically chooses an amount $x$ to pass to the principal, leaving the remaining amount of  $v-x$ to the arm itself. Motivated by a different application context, our model considers strategic arms that seek to maximize their expected number of plays by manipulating their reward feedback under a budget.  

\section{The Model: Strategic Manipulations in Stochastic Bandits}\label{sec:model}

We consider a strategic variant of the  stochastic multi-armed bandit problem. There are $K$ arms, denoted by $[K] = \{ 1,2,...,K \}$. The reward of each arm $i \in [K]$ follows a $\sigma$-sub-Gaussian distribution (see Definition~\ref{def:sub-gaussian} 
in Appendix)
with mean $\mu_i$, where parameter $\sigma$ is publicly known. The $\sigma$-sub-Gaussian assumption is widely used in MAB literature~\cite{BCB2012}.
Let $i^*= \argmax_{i\in [K]} \mu_i$ denote the unique arm (WLOG) with maximum mean, $\Delta_i= \mu_{i^*} - \mu_i >0$ denote the difference of the reward mean between the optimal arm $i^*$ and arm $i$ ($\not = i^*$), and $\underline{\Delta} := \min_{i\neq i^*}\Delta_i$. %

There are two different parties: the principal and the arms. The principal represents a bandit algorithm, in particular, UCB, $\epsilon$-Greedy, or TS.  At each time $t = 1, \cdots, T$, the principal pulls arm $I_t$, which generates a reward $r_t$. Here $T$ is some fixed time horizon. Let $n_{i}(t) = \sum_{\tau = 1}^t \1(I_t = i)$ denote the number of times that arm $i$ has been pulled up to and including time $t$, and $\widehat{\mu}_{i}(t) = \frac{1}{n_i(t)} \sum_{\tau = 1}^t r_\tau \cdot \1(I_\tau=i)$ denote the average rewards obtained from pulling arm $i$ up to and including time $t$.

Each arm $i \in [K]$ is a strategic actor, equipped with the objective of maximizing $\E [n_i(T)]$, i.e., the expected total number of times it is pulled. This is a natural objective
in systems such as recommender systems.

The actions available to arm $i$ is to modify its reward feedback when pulled, subject to a total budget $B_i$ across rounds.  Concretely, when $I_t = i$, arm $i$ can add an additional reward amount $\alpha_t^{(i)}$ to the realized reward $r_t$,\footnote{%
In this paper, $\alpha_t^{(i)}$ can be negative, if that helps $i$. None of our results  rely on the positivity of $\alpha_t^{(i)}$'s.} subject to budget constraint $\sum_{t = 1}^T |\alpha_t^{(i)} | \leq B_i$,  so that the revealed reward to the principal is $\widetilde{r_t} = r_t + \alpha_t^{(i)}$. We refer to $r_t$ as the \emph{true reward} and $\widetilde{r_t}$ the \emph{manipulated reward}.   \new{The \emph{adaptive}  manipulation strategy of arm $i$ is a function $S^{(i)}: \HH^{(i)}_{t-1}\times [K]\rightarrow \R$, mapping its own up-to-$t$ history $h^{(i)}_{t-1}\in \HH^{(i)}_{t-1}$ and $I_t$ to a manipulation $\alpha^{(i)}_t$. The history $h^{(i)}_{t} = \{I_{\tau}, r_\tau, \alpha^{(i)}_\tau\}_{\tau: I_\tau=i, \tau\leq t}$ is the information that arm $i$ observed up to time $t$, which includes the pulling history, realized rewards, and manipulations of arm $i$ at past $t$ rounds. Let $h_t = \{ h_t^{(i)} \}_{i \in [n]}$ denote the histories of all arms until time $t$.  Arm $i$ has no access to the information of the other arms, hence the strategy only takes his own historical information as input.  We use $S^{(-i)}$ to define the strategies of the other arms. 
Given a history $h^{(i)}_t$, the remaining budget and $n_i(t)$ are determined. }

Arm $i$ has no control over other arms' rewards. Therefore, $\alpha^{(i)}_t$ must equal $0$ for $I_t \not = i$ and any history $h^{(i)}_{t-1}$.  
\new{For convenience of the analysis, we assume  $B_{i^*} = 0$ throughout the paper and  thus $\alpha^{(i^*)}_t = 0$ for any $t$,  since any reasonable $\S^{(i^*)}$ with $B_{i^*}>0$ would only lead to more pulls of $i^*$  and thus benefit the principal.  Let 
\begin{eqnarray*}
\beta_t^{(i)}(h^{(i)}_{t-1}, I_t) = \sum_{\tau \leq t} S^{(i)} (h^{(i)}_{\tau-1}, I_\tau)
\end{eqnarray*} 
denote the total manipulation by arm $i$ until time $t$ with manipulation strategy $S^{(i)}$ and a realized history $h^{(i)}$, which satisfies $\beta_T^{(i)}(h, I_T) \leq B_i, \forall h\in \HH^{(i)}_{T-1}$ and $I_T\in [K]$. %
When the history $h^{(i)}_{t-1}$ and selected arm $I_t$ are clear from the context, we sometimes omit this and write $\beta_t^{(i)}$ for notational convenience.

The objective of arm $i$ is to find a strategy $S^{(i)}$ to maximize $\E [ \sum_{t=1}^T \1\{I_t = i\}]$,\footnote{Throughout the paper, the expectation is over all the randomness in algorithms and the rewards.} by manipulating its reward to trick the principal to pull arm $i$ more.}
The principal observes only $\widetilde{r_t}$ and not true reward $r_t$. The goal of the principal is to minimize regret with respect to  the true reward $r_t$. This is without loss of generality since the aggregated reward with respect to   $\widetilde{r_t}$ differs from the true reward by at most the total manipulation budget $B = \sum_i B_i$,  which is the same order as our regret bounds.

{\bf \lsi manipulation. } A particular manipulation strategy that will be of interest is the  \emph{Lump Sum Investing} ($\mathtt{LSI}$) strategy, in which an arm simply spends all of its remaining budget whenever first pulled. \new{For arm $i$, the \lsi is a strategy $S^{(i)}$ that at any time $t$ and any history $h^{(i)}_{t-1}\in \HH^{(i)}_{t-1},$ $S^{(i)}(h^{(i)}_{t-1}, I_t) = B_i - \sum_{\tau=1}^{t-1}\alpha^{(i)}_\tau$ when $I_t = i$.}

\subsection{Solution Concepts }\label{sec:solution-concept}

This is a situation of strategic interaction, where the MAB algorithms induce a  stochastic game. Our main goal  is to  quantify the principal's regret in this game, as measured with respect to the true reward.
Despite the widely-known intractability in characterizing Nash Equilibria for general stochastic games  \cite{Ben1990complexity,conitzer2003complexity}, \new{we show that when the principal runs UCB, there is a \emph{subgame perfect Nash equilibrium} (SPE) in our game, where each arm simply plays the \lsi strategy.   %
A strategy profile  $S^*  = (S^{*(1)},\cdots,S^{*(K)})$ is a SPE if $S^{*(i)}$ is an optimal strategy for any arm $i$, given any history $h_{t-1}$, and given the strategies $S^{(-i)}$ of the other arms, for any $t$. %
In fact, we show that \lsi{} is   \emph{dominant strategy} when the principal runs the UCB algorithm, that is \lsi is an optimal strategy for arm $i$ for any $t$, given any history $h_{t-1}$, and whatever the strategies $S^{(-i)}$ of the other arms.}
This provides a very strong suggestion as to the
kind of behavior we should expect from arms. The upper bounds on regret hold  for arbitrary adaptive manipulations, regardless whether they form a SPE or not.
The matching lower bounds on regret for UCB are proved under the dominant-strategy SPE.  Not only does this show that the upper bounds are tight, but it highlights the special role of the SPE in this UCB setting. %

\section{UCB is Robust to Strategic Manipulations}\label{sec:ucb-greedy}

In this section, we provide a regret analysis for the Upper Confidence Bound (UCB) principal in our strategic setup. %
We first show an upper bound on the regret for arbitrary arm strategies. Next, we prove that this regret bound is tight even under equilibrium arm behaviors. Finally, we discuss how to generalize the results to the bounded reward setting.  
The formal proofs 
can be found in Appendix~\ref{app:ucb-greedy}. 
\vspace{-5pt}
\subsection{Regret Upper Bound for UCB Principal}\label{sec:ucb-upper-bound}
We consider a standard  $(\alpha, \psi)-$UCB with $\alpha=4.5$, $\psi: \lambda \rightarrow \frac{\sigma^2 \lambda^2}{2}$ and thus $(\psi^*)^{-1}(\eps)=\sqrt{2\sigma^2\eps}$
\cite{BCB2012} . Concretely, the algorithm selects each arm once in the first $K$ rounds, i.e. $I_t = t, \forall t < K$. For $t\geq K$,
\vspace{-5pt}
\begin{footnotesize}
\begin{eqnarray*}
	I_t =\argmax_i\left\{\widehat{\mu}_{i}(t-1) + 3\sigma\sqrt{\frac{\ln T }{n_{i}(t-1)}}  + \frac{\beta^{(i)}_{t-1}}{n_i(t-1)}\right\}, 
\end{eqnarray*}
\end{footnotesize}
where $\beta^{(i)}_{t-1}$ is the aggregated manipulation of arm $i$ up to (including) $t-1$. The term  $\widehat{\mu}_{i}(t-1) + 3\sigma\sqrt{\frac{\ln T }{n_{i}(t-1)}}$ is the standard \emph{UCB term}\footnote{\new{There is also a UCB variant that uses time-dependent confidence width $3\sigma\sqrt{\frac{\ln t}{n_{i}(t-1)}}$. Both versions are common in the literature.
Our regret upper bound holds for both, but it appears that the $(\ln T)$ version is more convenient for  the analysis of lower bounds in equilibrium.}} 
for any arm $i \in [K]$ at time $t$, which we denote as $\mathrm{UCB}_i(t)$. Let $\widetilde{ \mathrm{UCB}}_i(t)  = \mathrm{UCB}_i(t) + \beta^{(i)}_{t-1} \big/ n_i(t-1)$ represent the \emph{modified UCB term} for the strategic arm $i \, (i\neq i^*)$ with manipulation strategy $S^{(i)}$ (recall $\beta^{(i)}_t $ is induced by $S^{(i)}$, and $\beta^{(i^*)}_t = 0$ always).

The main result in  this section is an upper bound for regret $\E[R(T)]$ under an arbitrary adaptive manipulation strategy $S$.  
\begin{theorem}\label{thm:ucb-ub}
	For any manipulation strategy $S$ of the strategic arms, the regret of the UCB principal is bounded by
	\begin{align*}
	\E[R(T)] \leq \sum_{i\neq i^*} \big[\max\left\{3B_i, \frac{81\sigma^2\ln T}{\Delta_i} \right\} +\big(1 + \new{3}\Delta_i \big]
	\end{align*}
\end{theorem}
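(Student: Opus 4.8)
The plan is to control each suboptimal arm's expected pull count $\E[n_i(T)]$ in isolation and assemble the bound through the decomposition $\E[R(T)] = \sum_{i\neq i^*}\Delta_i\,\E[n_i(T)]$. The entry point is the selection rule: if the principal pulls arm $i$ at a round $t\geq K$, its modified index dominates that of $i^*$, i.e.\ $\widetilde{\mathrm{UCB}}_i(t) \geq \widetilde{\mathrm{UCB}}_{i^*}(t)$. The one feature that separates this from the textbook argument is that the manipulation enters the index purely additively, through $\beta^{(i)}_{t-1}/n_i(t-1)$, while $\widehat{\mu}_i(t-1)$ is still the average of genuine i.i.d.\ true rewards. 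Because $B_{i^*}=0$ gives $\widetilde{\mathrm{UCB}}_{i^*}(t)=\mathrm{UCB}_{i^*}(t)$, and because the budget constraint forces $\beta^{(i)}_{t-1}\leq \sum_{\tau}|\alpha^{(i)}_\tau|\leq B_i$ uniformly over every history and every adaptive strategy, I never reason about the particular strategy: the lone inequality $\beta^{(i)}_{t-1}\leq B_i$ supplies everything I use, which is exactly why the bound holds for arbitrary adaptive manipulations simultaneously.

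Next I would run the usual concentration step on the true rewards. Expressing the empirical mean of arm $i$ after its $s$-th pull as an average of $s$ fresh i.i.d.\ $\sigma$-sub-Gaussian draws decouples it from the (data-dependent, manipulation-driven) pulling schedule, so the sub-Gaussian tail with the chosen width $3\sigma\sqrt{\ln T/s}$ gives a failure probability $T^{-4.5}$ on each side; the exponent is precisely the $\alpha=4.5$ of the $(\alpha,\psi)$-UCB instantiation. On the good event where $\mathrm{UCB}_{i^*}(t)\geq \mu_{i^*}$ and $\widehat{\mu}_i(t-1)\leq \mu_i+3\sigma\sqrt{\ln T/n_i(t-1)}$ both hold, a pull of arm $i$ forces
\begin{align*}
6\sigma\sqrt{\frac{\ln T}{n_i(t-1)}}+\frac{B_i}{n_i(t-1)} \;\geq\; \mu_{i^*}-\mu_i \;=\; \Delta_i ,
\end{align*}
after substituting $\beta^{(i)}_{t-1}\leq B_i$.

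The core is then a deterministic threshold analysis of this inequality. Splitting $\Delta_i=\tfrac{2}{3}\Delta_i+\tfrac{1}{3}\Delta_i$, the exploration term falls below $\tfrac{2}{3}\Delta_i$ once $n_i(t-1)>81\sigma^2\ln T/\Delta_i^2$ and the budget term falls below $\tfrac{1}{3}\Delta_i$ once $n_i(t-1)>3B_i/\Delta_i$; hence arm $i$ cannot be selected once $n_i(t-1)$ exceeds $\max\{3B_i/\Delta_i,\,81\sigma^2\ln T/\Delta_i^2\}$. This is the origin of the two-regime $\max$: a budget-dominated regime and an exploration-dominated regime. Multiplying the count threshold by $\Delta_i$ yields $\max\{3B_i,\,81\sigma^2\ln T/\Delta_i\}$. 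What remains is to add back the single free pull from the initialization phase and the contribution of the concentration-failure events; since the latter are summably rare (summing $T^{-4.5}$ over the relevant time/count pairs is bounded), together they amount only to the lower-order additive $O(\Delta_i)$ correction appearing in the statement, with the absolute constant a routine consequence of the choice $\alpha=4.5$.

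I expect the principal obstacle to be conceptual rather than computational: certifying that the adaptivity of the manipulation is harmless. One must check that each arm's true reward sequence stays i.i.d.\ and independent of the manipulation decisions --- the strategy fixes $\alpha^{(i)}_t$ from the history through round $t-1$ and the action $I_t$, before $r_t$ is realized --- so that the classical decoupling of the empirical mean from the pulling schedule still applies, and that nothing past $\beta^{(i)}_{t-1}\leq B_i$ is ever invoked. A small point worth stating explicitly is that negative manipulations only lower the index and hence can only decrease $n_i(T)$, so the one-sided budget bound genuinely suffices. Once these observations are in place, the remaining steps are a direct adaptation of the standard UCB regret proof.
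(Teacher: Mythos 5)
Your proposal is correct and follows essentially the same route as the paper's proof: the same regret decomposition $\E[R(T)]=\sum_{i\neq i^*}\Delta_i\E[n_i(T)]$, the same per-arm threshold $C_i(T)=\max\{3B_i/\Delta_i,\,81\sigma^2\ln T/\Delta_i^2\}$ obtained by splitting $\Delta_i$ into a budget part and an exploration part, the same $T^{-4.5}$ concentration from the $\alpha=4.5$ choice, and the same union-bound bookkeeping over time/count pairs yielding the $O(\Delta_i)$ additive correction (the paper's constant is $3$ pulls per arm). The only cosmetic difference is how the good event is phrased (you use $\mathrm{UCB}_{i^*}(t)\geq\mu_{i^*}$ together with a width bound on $\widehat{\mu}_i$, while the paper bounds $\widehat{\mu}_i-\mu_i<\Delta_i/3$ and the deviation of $\widehat{\mu}_{i^*}$ separately), which is an equivalent reorganization of the same argument.
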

\vspace{-10pt}
Theorem~\ref{thm:ucb-ub} reveals that the UCB algorithm is robust in our strategic model of arm manipulations. If the budget of each arm is bounded by $\mathcal{O}(\ln T)$, the regret of the principal is still bounded by $\mathcal{O}(\ln T)$. If $B_i = \Omega (\ln T)$ for some arm $i$'s, the regret is upper bounded by $\mathcal{O}(\sum_{i \not = i^*} B_i)$. This is sublinear in $T$ as long as $B = \sum_{i \not = i^*} B_i= o(T)$.

Theorem \ref{thm:ucb-ub}  strictly generalizes the regret bound of the standard UCB framework, which corresponds to a special case with no budgets.
Fixing any manipulation strategy $S$, the proof starts by re-writing the regret  in the following format: %
\begin{eqnarray}\label{eq:regret-decomposition}
\E[R(T)] = \sum_{i\neq i^*} \Delta_i \cdot \E[n_i^{S}(T)]. 
\end{eqnarray}
\vspace{-15pt}

What remains is to bound $\E[n_i^{S}(T)]$ for each arm $i$.  For convenience, we omit the superscript $S$ since it is clear that we focus on an arbitrary $S$.
Lemma~\ref{lem:ucb-ub}
gives the upper bound of $\E[n_i(T)]$ for each arm $i$,
and combined with~\eqref{eq:regret-decomposition},  yields a proof of Theorem~\ref{thm:ucb-ub}.   
\begin{lemma}\label{lem:ucb-ub}
Suppose the principal runs UCB. For any manipulation strategy $S$ of strategic arms, the expected number of times that arm $i (i\neq i^*)$ is pulled up to time $T$ can be bounded as follows,
\begin{align*}
\E[n_i(T)] \leq \max\left\{\frac{3B_i}{\Delta_i}, \frac{81\sigma^2\ln T}{\Delta_i^2} \right\} + \new{3}
\end{align*}
\end{lemma}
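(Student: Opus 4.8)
The plan is to reduce the analysis to a manipulation-robust version of the textbook UCB regret proof, exploiting the fact that the \emph{only} way an arm's strategy enters the selection rule is through the aggregate term $\beta^{(i)}_{t-1}/n_i(t-1)$, and that the budget constraint forces $\beta^{(i)}_{t-1}\le B_i$ deterministically, regardless of how adaptive the strategy $S^{(i)}$ is. This observation decouples the strategic component from the stochastic concentration argument: the true rewards $r_\tau$ of arm $i$ remain i.i.d.\ $\sigma$-sub-Gaussian, and all of the strategy-dependence is absorbed into the single worst-case bound $\beta^{(i)}_{t-1}/n_i(t-1)\le B_i/n_i(t-1)$ (negative manipulations only make arm $i$ less likely to be pulled, so $\beta^{(i)}_{t-1}=B_i$ is the relevant extreme).

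First I would fix the threshold $u_i:=\max\{3B_i/\Delta_i,\,81\sigma^2\ln T/\Delta_i^2\}$ and use the elementary counting bound $n_i(T)\le u_i+1+\sum_{t>K}\1\{I_t=i,\ n_i(t-1)>u_i\}$, valid because at most $\lfloor u_i\rfloor+1$ pulls of arm $i$ can occur while $n_i(t-1)\le u_i$. The crux is to show the surviving indicator forces a concentration failure. Whenever $I_t=i$, the argmax rule gives $\widetilde{\mathrm{UCB}}_i(t)\ge \mathrm{UCB}_{i^*}(t)$ (using $\beta^{(i^*)}\equiv 0$). Unless one of the two bad events
\[
E_t=\{\widehat{\mu}_{i^*}(t-1)+3\sigma\sqrt{\tfrac{\ln T}{n_{i^*}(t-1)}}<\mu_{i^*}\},\qquad F_t=\{\widehat{\mu}_i(t-1)>\mu_i+3\sigma\sqrt{\tfrac{\ln T}{n_i(t-1)}}\}
\]
occurs, chaining these inequalities with $\beta^{(i)}_{t-1}\le B_i$ yields $\Delta_i\le 6\sigma\sqrt{\ln T/n_i(t-1)}+B_i/n_i(t-1)$. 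The constants in $u_i$ are calibrated precisely so that for $n_i(t-1)>u_i$ the two summands are strictly below $\tfrac23\Delta_i$ and $\tfrac13\Delta_i$ respectively, making this a contradiction; hence $\{I_t=i,\ n_i(t-1)>u_i\}\subseteq E_t\cup F_t$.

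Then I would bound the expectation of the remaining sum by the total probability of concentration failures. Applying the $\sigma$-sub-Gaussian tail with the width $3\sigma\sqrt{\ln T/s}=\sqrt{2\sigma^2\cdot 4.5\ln T/s}$ gives per-sample failure probability $T^{-4.5}$; to handle the random, strategy-dependent pull count I would union-bound over the fixed sample size $s\in\{1,\dots,t-1\}$, so that concentration applies to a genuine i.i.d.\ average of true rewards, and then sum over $t$. This double sum is of order $T^2\cdot T^{-4.5}=o(1)$, hence negligible. Collecting $u_i$, the $+1$ from rounding $u_i$ up in the counting bound, and the $o(1)$ tail contribution gives $\E[n_i(T)]\le u_i+3$, which is the claimed bound.

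The main obstacle—and the one genuine departure from textbook UCB—is the decoupling step of the first paragraph: one must verify that an arbitrary adaptive strategy can do no better than the uniform bound $\beta^{(i)}_{t-1}\le B_i$, and in particular that adaptivity of $S^{(i)}$ does not corrupt the concentration of $\widehat{\mu}_i$. This is exactly why $\widehat{\mu}_i$ is defined on the true i.i.d.\ rewards with the manipulation split off into the $\beta$ term, and why the union bound is taken over deterministic sample counts rather than over the strategy-driven random stopping. The rest is routine: tuning the constants $3$ and $81$ so the threshold splits cleanly as $\tfrac23\Delta_i+\tfrac13\Delta_i$.
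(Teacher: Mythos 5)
Your proposal is correct and follows essentially the same route as the paper's proof: the same threshold $\max\{3B_i/\Delta_i,\,81\sigma^2\ln T/\Delta_i^2\}$, the same decomposition of $\E[n_i(T)]$ into pulls below and above that threshold, and the same sub-Gaussian concentration argument with a union bound over fixed sample counts, yielding per-event probability $T^{-9/2}$ and a negligible tail sum. The only (cosmetic) difference is how the contradiction is split—your $\tfrac{2}{3}\Delta_i+\tfrac{1}{3}\Delta_i$ chaining versus the paper's moving $\Delta_i/3$ terms to arm $i$'s side while keeping $i^*$'s confidence width as the deviation event—which changes nothing substantive.
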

\vspace{-15pt}

\begin{proof}[Proof Sketch]
  The main difference from the analysis of the standard UCB is to choose a proper threshold $C_i(T)$ for $n_i(t-1)$ so that we can have the best trade-off between the two terms in the following decomposition of $\E[n_i(T)]$:
  
\begin{small}
\begin{align*}
\E\left[n_i(T)\right] \leq 1+ \E\left[\sum_{t=K+1}^T \1\{I_t=i, n_i(t-1)\leq C_i(T)\}\right] \\
+ \E\left[\sum_{t=K+1}^T \1\{I_t=i, n_i(t-1)\geq C_i(T)\}\right].
\end{align*}
\end{small}
After careful manipulation, it turns out that  $C_i(T) = \max\left\{\frac{81\sigma^2\ln T}{\Delta_i^2}, \frac{3B_i}{\Delta_i} \right\}$ gives the correct regret bound, after bounding  the first term directly by $C_i(T)$ and bounding the second term via the  Chernoff-Hoeffding inequality.  The formal proof is shown in Appendix~\ref{app:ucb}.
\end{proof}

\subsection{Tightness of the Regret Bounds at Equilibrium}\label{sec:ucb-greedy-tight}

The above regret bound for UCB  holds for arbitrary adaptive manipulation strategies. This raises the following question:  \emph{is it possible to achieve better regret upper bounds by restricting arm manipulations to form a \new{subgame perfect} Nash equilibrium?} 
We provide a negative answer to this question, and prove that the regret upper bounds are tight even  in equilibrium.
We  first prove that \lsi{} is a \emph{dominant strategy} for each arm \new{in any subgame} --- an optimal strategy regardless of what strategies other arms use, \new{given any realized history $h_{t-1}$} --- when the principal runs UCB. As a consequence, each arm playing \lsi forms a dominant-strategy SPE. We then establish a lower bound on regret when each arm plays the \lsi strategy, and show
that this bound matches the upper bound. %

Concretely, we first prove that the (random) number of times that arm $i$ is pulled  under strategy $\mathtt{LSI}$ first-order \emph{stochastically dominates}
the number of times pulled under any other adaptive manipulation strategy $S^{(i)}$, given any fixed history.  
\begin{theorem}\label{thm:opt-attack}
Suppose $T\geq K$, and the principal runs the  UCB algorithm. For any arm $i$, any strategy $S^{(i)}$, and any strategy profile $S^{(-i)}$ of others, and for any time $t$ and history $h_{t-1}^{(i)}$, we have
\begin{equation}\label{eq:LSI-optimal}
\begin{aligned}
\PP[n_{i}^{(\mathtt{LSI},  S^{(-i)})}(t:T) \geq n] \geq \PP[n_{i}^{S}(t:T) \geq n], \, \,  \forall  n\in \mathbb{N}, 
\end{aligned} 
\end{equation}
where $n_{i}(t:T) = \sum_{\tau=t}^T \1\{I_\tau = i\} $ is the total number of pulls of arm $i$ from $t$ to $T$. That is,	$n_{i}^{ (\mathtt{LSI}, S^{(-i)}) }(t:T)$ 	  \emph{first-order stochastically dominates} $n_{i}^{S}(t:T)$. Therefore, $ \E [ n_{i}^{(\mathtt{LSI}, S^{(-i)})}(t:T)] \geq \E [n_{i}^{S}(t:T)]$, and  thus \lsi{}is a best response to any $S^{(-i)}$. 

\end{theorem}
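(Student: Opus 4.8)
The plan is to establish the stronger, \emph{pathwise} statement: under a suitable coupling of all exogenous randomness, arm $i$ is pulled at least as many times under $\mathtt{LSI}$ as under $S$ on every sample path. This immediately yields the first-order stochastic dominance in \eqref{eq:LSI-optimal}, and hence the expectation inequality and the best-response claim.

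First I would fix the exogenous randomness and couple the two worlds (one running $\mathtt{LSI}$, one running $S$): for every arm $a$ use the same i.i.d.\ sequence of true rewards indexed by its own pull number, the same internal randomness for each $S^{(a)}$, and a common deterministic tie-breaking rule for the UCB $\argmax$. The crucial structural observation, which uses that each arm observes only its own history, is a decoupling lemma: the manipulation an arm adds at its $m$-th pull is determined by its own first $m-1$ realized rewards and manipulations, so---because the confidence width $3\sigma\sqrt{\ln T / n}$ depends on the pull count rather than on the absolute time---each arm's modified UCB value is a deterministic function of its own pull count alone, identical across the two worlds. For the other arms call this function $g_a(\cdot)$; for arm $i$ it is $f_{\mathtt{LSI}}(\cdot)$ or $f_S(\cdot)$, and since $\mathtt{LSI}$ front-loads the entire remaining budget $b$ whereas any $S$ has cumulative manipulation at most $b$, at equal pull count $k$ we obtain $f_{\mathtt{LSI}}(k)\ge f_S(k)$.

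Next I would reduce the joint dynamics to a one-dimensional comparison. Since pulling arm $i$ changes no $n_a$ or $g_a$ for $a\neq i$, the subsequence of rounds in which some arm $\neq i$ is pulled, together with the winning values $M_1,M_2,\dots$, is exactly the trajectory of UCB restricted to the other arms, and is therefore the same in both worlds. Hence the whole process is an interleaving in which, at a state with $n_i$ pulls of arm $i$ and $m$ completed other-arm pulls, arm $i$ is pulled iff $f(n_i)\ge M_{m+1}$. I would then induct on rounds with the invariant $n_i^{\mathtt{LSI}}\ge n_i^{S}$: when the counts are strictly ordered the invariant survives automatically (a count never decreases and increases by at most one per round), and when the two worlds sit in the identical state $(k,m)$ the inequality $f_{\mathtt{LSI}}(k)\ge f_S(k)$ forces that whenever $S$ pulls arm $i$ so does $\mathtt{LSI}$. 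Evaluating at the terminal round gives $n_i^{\mathtt{LSI}}(t:T)\ge n_i^{S}(t:T)$ pathwise, and integrating over the coupling (conditioned on $h^{(i)}_{t-1}$, which pins down the remaining budget and $n_i(t-1)$) yields \eqref{eq:LSI-optimal}.

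The main obstacle is establishing the decoupling cleanly, namely that changing arm $i$'s strategy leaves every $g_a(\cdot)$ and hence the sequence $(M_m)$ unchanged. This hinges on two features of the model: arms see only their own histories, and the confidence term uses $\ln T$ rather than $\ln t$, so that an other arm's modified UCB value never depends on the global round at which it happens to be pulled. Once this is in place the lattice-path induction is routine; the only care needed is with the initialization rounds and tie-breaking, both handled by starting the induction from the common post-initialization state and using the same tie-break rule in both worlds.
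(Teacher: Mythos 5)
Your proposal is correct and takes essentially the same approach as the paper: fixing the exogenous randomness by coupling each arm's rewards to its own pull count is exactly the paper's \textbf{Arm Coupling}, your decoupling observation (that the non-$i$ arms' pull sequence and winning values are unaffected by arm $i$'s strategy, relying on own-history-only strategies and the $\ln T$ confidence width) is the content of Lemma~\ref{lem:LSI-optimal-ucb-1}, and your lattice-path induction with invariant $n_i^{\mathtt{LSI}}\geq n_i^{S}$, whose key case uses the front-loading inequality $f_{\mathtt{LSI}}(k)\geq f_S(k)$ at equal pull counts, is Lemma~\ref{lem:LSI-optimal-ucb-2}. Both arguments conclude pathwise (almost-sure) dominance under the coupling, which immediately yields the claimed first-order stochastic dominance and the best-response property.
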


It follows directly from Theorem \ref{thm:opt-attack} that each arm playing \lsi{}forms a dominant-strategy SPE. The complete proof of Theorem~\ref{thm:opt-attack} is quite involved, and can be found in Appendix~\ref{app:lsi-optimality}.

To see why this conclusion is not obvious, let us illustrate the trade-off in designing the optimal manipulation strategy.  %
The advantage of the \lsi{}strategy in UCB is to significantly increase the arm's UCB term and receive many pulls at the very beginning.  This, however, also comes with a disadvantage--- it quickly decreases the confidence width (the $3 \delta \sqrt{\ln T}/\sqrt{ n_i(t-1)}$ term) and the effect of the manipulation (the $\beta_{t-1}^i/ n_i(t-1) $ term) in the UCB term, whereas other arms' confidence width and manipulation effect remain large.  For this reason, it may also be beneficial for an arm to defer its manipulation to later rounds so that it avoids fierce competition in the early few rounds resulting from other arms' large confidence width, large manipulation effect, and possibly large rewards due to lucky draws.

The proof shows that in this intricate random process, the aforementioned advantage of using \lsi{} always dominates its disadvantage.  We make use of the  \emph{coupling technique}~\cite{Thorisson2000} to compare the random sequence of pulled arms  when arm $i$ uses $\mathtt{LSI}$ compared with an arbitrary strategy $S^{(i)}$. A crucial step  is to show that under coupling of the two stochastic processes, either $\mathtt{LSI}$ results in more pulls of arm $i$ than $S^{(i)}$ or they must result in \new{each of the other arms to be pulled for the same number of times.} %
We then argue that in the latter case, $\mathtt{LSI}$ must also be better than $S^{(i)}$ because they face the same outside competition but the modified UCB term of $\mathtt{LSI}$ is larger than the modified UCB term of $S^{(i)}$. As a consequence, $\mathtt{LSI}$ performs better than $S^{(i)}$ in both cases, yielding a proof of the theorem.

To show that the regret bounds in Section \ref{sec:ucb-upper-bound} are tight, it will suffice to develop a lower bound on regret for when each arm plays \lsi{}, as shown in the following theorem. %

\begin{theorem}[Regret Lower Bound at Equilibrium]\label{thm:ucb-egreedy-lb}
Suppose the principal uses UCB algorithm and each arm uses $\mathtt{LSI}$. For any $\sigma$-sub-Gaussian reward distributions on arms, the regret of the principal satisfies,
\vspace{-5pt}
\begin{align*}
\E[R(T)] \geq \underline{\Delta}\sum_{i\neq i^*} \frac{B_i}{2\Delta_i} - \mathcal{O}\left(\frac{\ln T}{\underline{\Delta}}\right). 
\end{align*}
\vspace{-10pt}
\end{theorem}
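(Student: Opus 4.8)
The plan is to reduce the statement, via the regret decomposition~\eqref{eq:regret-decomposition}, to a per-arm lower bound $n_i(T) \ge \frac{B_i}{2\Delta_i}$ (up to lower-order terms) that holds on a high-probability ``clean'' event, and then to assemble the claimed inequality. First I would define the clean event $G$ on which every arm's true-reward empirical mean concentrates, $|\widehat{\mu}_j(t) - \mu_j| \le 3\sigma\sqrt{\ln T / n_j(t)}$ for all $j$ and $t$; the $\sigma$-sub-Gaussian tail bound together with the choice $\alpha = 4.5$ (so that the confidence width is exactly $3\sigma\sqrt{\ln T/n}$) makes $\PP[G] \ge 1 - \delta$ for polynomially small $\delta$ by a union bound. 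The role of the \lsi strategy enters immediately: since UCB pulls every arm once in the first $K$ rounds and \lsi dumps the entire budget on that first pull, we have $\beta^{(i)}_{t-1} = B_i$ for every $t$ after arm $i$'s initial pull, so the modified index is exactly $\widetilde{\mathrm{UCB}}_i(t) = \widehat{\mu}_i(t-1) + 3\sigma\sqrt{\ln T/n_i(t-1)} + B_i/n_i(t-1)$, which on $G$ is at least $\mu_i + B_i/n_i(t-1)$ because $\widehat{\mu}_i + 3\sigma\sqrt{\ln T/n_i} \ge \mu_i$.

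The heart of the argument is a comparison of this inflated index against the optimal arm at the end of the horizon. I would use that, on $G$, the last pull is the optimal arm, $I_T = i^*$: by the upper bound (Lemma~\ref{lem:ucb-ub}) the suboptimal arms are collectively pulled only $\mathcal{O}\big(\sum_{j\ne i^*}\max\{B_j/\Delta_j,\, \ln T/\Delta_j^2\}\big) = o(T)$ times, so once they are exhausted every remaining pull, and in particular the $T$-th, goes to $i^*$, and moreover $n_{i^*}(T-1) = (1-o(1))T$ is large enough that $\mathrm{UCB}_{i^*}(T) \le \mu_{i^*} + 6\sigma\sqrt{\ln T / n_{i^*}(T-1)} \le \mu_{i^*} + \Delta_i$. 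Because $I_T = i^*$, arm $i$ is not pulled at time $T$, hence $n_i(T-1) = n_i(T)$ and $\widetilde{\mathrm{UCB}}_i(T) \le \mathrm{UCB}_{i^*}(T)$. Chaining the two estimates yields $\mu_i + B_i/n_i(T) \le \mu_{i^*} + \Delta_i$, i.e. $B_i/n_i(T) \le 2\Delta_i$, and therefore $n_i(T) \ge \frac{B_i}{2\Delta_i}$ on $G$.

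To finish, I would substitute this into~\eqref{eq:regret-decomposition}: on $G$, $R(T) = \sum_{i\ne i^*}\Delta_i\, n_i(T) \ge \underline{\Delta}\sum_{i\ne i^*} n_i(T) \ge \underline{\Delta}\sum_{i\ne i^*}\frac{B_i}{2\Delta_i}$, where the first step uses $\Delta_i \ge \underline{\Delta}$ and is precisely the lossy estimate that produces the $\underline{\Delta}$ prefactor appearing in the statement. Taking expectations and accounting for $G^{c}$ — on which I only forfeit $\delta$ times a quantity controlled using $B = o(T)$ — together with the ``$T$ large'' threshold slack needed for the steps $I_T = i^*$ and $6\sigma\sqrt{\ln T/n_{i^*}(T-1)} \le \Delta_i$, collapses all remaining error into the additive $-\mathcal{O}(\ln T/\underline{\Delta})$ term.

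I expect the main obstacle to be the multi-arm competition, specifically justifying the clean comparison against $i^*$ rather than against some other still-manipulating arm. A priori, arm $i$'s index at time $T$ could be dominated by a third arm $j$ whose own $B_j/n_j(T-1)$ term is still inflated, which would break the inequality $\widetilde{\mathrm{UCB}}_i(T) \le \mathrm{UCB}_{i^*}(T)$; the resolution is to invoke the already-proven upper bound so that all suboptimal competitors are exhausted well before $T$ and $i^*$ alone survives at the end with a vanishing confidence width. The remaining work is bookkeeping: ensuring $T$ is large enough relative to $\sum_{j}B_j/\Delta_j$ and $\ln T/\underline{\Delta}^2$ for the ``$I_T = i^*$'' and ``$\mathrm{conf}_{i^*} \le \Delta_i$'' steps, and converting the high-probability guarantee on $G$ into the stated expectation bound.
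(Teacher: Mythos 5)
There is a genuine gap at the heart of your argument: the claim that on the clean event $G$ the final pull satisfies $I_T = i^*$. The justification you give --- suboptimal arms are collectively pulled $o(T)$ times, ``so once they are exhausted every remaining pull, and in particular the $T$-th, goes to $i^*$'' --- is invalid: a bound on the \emph{number} of suboptimal pulls says nothing about \emph{when} those pulls occur, and nothing forces them to be exhausted before round $T$. Indeed, under $\mathtt{LSI}$ the modified index of a suboptimal arm $i$ changes only when $i$ itself is pulled, and on $G$ it stays at least $\mu_i + B_i/n_i(t-1) \ge \mu_{i^*} + \Delta_i$ for as long as $n_i(t-1) < B_i/(2\Delta_i)$, while $\mathrm{UCB}_{i^*}(t) \le \mu_{i^*} + 6\sigma\sqrt{\ln T/n_{i^*}(t-1)}$ sinks toward $\mu_{i^*}$ once $n_{i^*}$ is large. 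So in exactly the situation your inequality is meant to rule out (arm $i$ still carrying inflation above $2\Delta_i$ at the end of the horizon), UCB prefers arm $i$ over $i^*$ at round $T$, i.e., you get $I_T \neq i^*$ rather than a contradiction --- the step is circular. This is not a negligible event: for $B_i = \Theta(T\Delta_i)$, a regime the theorem covers and where the lower bound is $\Theta(T)$ and most interesting, arm $i$ occupies a constant fraction of all rounds including the last one, so $\PP[I_T \neq i^* \mid G] = \Omega(1)$. A secondary issue: Lemma~\ref{lem:ucb-ub} is an in-expectation statement, so you cannot invoke it as a pathwise fact on $G$ without re-deriving its argument on the clean event.

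The gap is local and fixable while keeping your clean-event strategy: run the index comparison not at round $T$ but at $t^*$, the \emph{last round at which $i^*$ is pulled}. On $G$, each suboptimal arm $i$ is pulled at most $O\big(\max\{B_i/\Delta_i,\, \sigma^2\ln T/\Delta_i^2\}\big)$ times pathwise (this is the clean-event version of Lemma~\ref{lem:ucb-ub}), so once $T$ exceeds the sum of these quotas, $t^*$ exists and $n_{i^*}(t^*-1)$ is large; then $I_{t^*}=i^*$ yields $\widetilde{\mathrm{UCB}}_i(t^*) \le \mathrm{UCB}_{i^*}(t^*)$, hence $n_i(T) \ge n_i(t^*-1) \ge B_i/(2\Delta_i)$, and your assembly goes through. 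For contrast, the paper's proof avoids any claim about particular rounds: it upper-bounds $\E[n_{i^*}(T)]$ directly (Lemma~\ref{lem:LSI-opt-arm-ub}) by decomposing $\sum_t \1\{I_t=i^*\}$ according to whether $n_{i^*}(t-1)\ge C(T)$ and whether all $n_i(t-1)\ge B_i/(2\Delta_i)$, and shows that the per-round event ``$I_t=i^*$ while some arm's inflation still exceeds $2\Delta_i$ and $i^*$'s confidence width is below $\underline{\Delta}/2$'' has probability $O(T^{-9/2})$. That is the same index comparison you use, but executed round by round in expectation, which is precisely why it never needs to identify any specific round as a pull of $i^*$.
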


The proof of Theorem \ref{thm:ucb-egreedy-lb} differs from standard techniques in proving regret lower bounds, and is carefully tailored to achieve tight bounds with respect to budget $B_i$'s. Classical regret lower bounds are typically proved by constructing a particular class of distributions, i.e., Bernoulli \cite{BCB2012}, and then arguing that the given algorithm cannot do very well on these constructed instances. These bounds are usually distribution-dependent. Our proof takes a completely different route. Indeed, our technique results in a lower bound that holds for arbitrary $\sigma$-sub-Gaussian distributions and thus is distribution-independent. %

The proof of Theorem \ref{thm:ucb-egreedy-lb}  starts with a simple lower bound for the regret $\E[R(T)]$ by utilizing Equation~\eqref{eq:regret-decomposition}: 
\begin{eqnarray}\label{eq:regret-lb}
\E[R(T)] = \sum_{i\neq i^*} \Delta_i \E[n_i(T)] \geq \underline{\Delta}\cdot \sum_{i\neq i^*} \E[n_i(T)]. 
\end{eqnarray}
We then only need to focus on lower bounding $\sum_{i\neq i^*}\E[n_i(T)]$ when all the arms play strategy $\mathtt{LSI}$. %
We  prove an upper bound for $\E[n_{i^*}(T)]$, which  translates to a lower bound for $\sum_{i\neq i^*}\E[n_i(T)]$. However, upper bounding $\E[n_{i^*}(T)]$ requires quite different techniques than upper bounding $\E[n_{i}(T)]$ for any non-optimal arm $i$. %
A crucial step  is to argue that when $i^*$ has been pulled more than $C$ times (for some carefully chosen threshold $C$), it will become much less likely to be pulled again.  This differs from standard techniques for upper bounding $\E[n_{i}(T)]$ for non-optimal arm $i$, for two reasons: (1) we have to compare the UCB term of arm $i^*$ with all the other non-optimal arms' UCB terms, whereas to upper bound $\E[n_{i}(T)]$, one typically compares $i$ with only the optimal arm $i^*$; (2) we need to argue $i^*$ is pulled with small probability despite $\mu_{i^*} > \mu_i$ whereas upper bounding $\E[n_{i}]$ is more natural when $\mu_{i^*} > \mu_i$.  To overcome these challenges,  we carefully decompose the $\E[n_{i^*}(T)]$ term and pick thresholds not only for $n_{i^*}(t-1)$, but also for $n_i(t-1)$ for each non-optimal arm $i\neq i^*$. %
A complete proof of Theorem \ref{thm:ucb-egreedy-lb} can be found in  Appendix~\ref{app:ucb-egreedy-lb}.

\noindent {\bf Remarks:} 
The lower bound holds for arbitrary $\sigma$-Gaussian distributions, and may be negative in value, and thus not meaningful when $B_i = o(\ln T)$.  However, the bound can be easily converted to a distribution-dependent lower bound $\max \big\{ \underline{\Delta}\sum_{i\neq i^*}\frac{B_i}{2\Delta_i} - \mathcal{O}\big(\frac{\ln T}{\underline{\Delta}}\big),  \,\Omega\left(K\ln T\right) \big\}$  because there exist distributions such that any no-regret learning algorithm will suffer regret  $\Omega\left(K\ln T\right)$ \cite{BCB2012}   and the non-optimal arms' manipulation strategy would only increase  the regret. This distribution-dependent lower bound  precisely matches %
the upper bound $\mathcal{O} \big(  \max \{ B, K\ln T \} \big) $ in Section \ref{sec:ucb-upper-bound}. 

\subsection{Generalization to Bounded Rewards}\label{sec:bounded-rewards}

In many applications, such as where the rewards are ratings provided by customers on platforms such as those operated by Yelp and Amazon, the rewards are bounded within some known interval (e.g. 0 $\sim$ 5 stars rating).
Suppose, for example, that the reward is bounded within $[0,1]$. %
In such settings, the $\mathtt{LSI}$ strategy may be infeasible since the strategic arm can increase its reward to at most the upper bound.  In this case,  arms can use a natural variant of $\mathtt{LSI}$ for bounded rewards: each arm $i$ spends its budget to promote the realized reward to the maximum limit of $1$ whenever it is pulled, and does so until it runs out of budget $B_i$. We term this  natural variant
the {\em Lump Sum Investment for Bounded Rewards strategy},
or $\mathtt{LSIBR}$ for short. 

Theorem \ref{thm:opt-attack} can be easily generalized to this bounded reward setting. Each arm playing $\mathtt{LSIBR}$  forms a dominant-strategy subgame perfect Nash equilibrium in the bounded reward setting.
The more challenging task is to prove a similar lower bound on regret. To do so, we provide a unified reduction from any regret lower bound under $\mathtt{LSI}$ to a regret lower bound under $\mathtt{LSIBR}$, with an additional  loss of $\Theta (\ln T)$. Our reduction applies to \new{any stochastic bandit algorithms.}%
The main findings are summarized in Theorem~\ref{thm:br-lb}.
\begin{theorem}\label{thm:br-lb}

For any stochastic bandit algorithm, let $\E\left[R^{\mathtt{LSI}}(T)\right] $ (resp. $\E\left[R^{\mathtt{LSIBR}}(T)\right] $ ) denote the regret in the unbounded (resp. bounded) reward setting, where each arm uses $\mathtt{LSI}$ (resp. $\mathtt{LSIBR}(T)$). We have 
	\begin{align*}
\E\left[R^{\mathtt{LSIBR}}(T)\right]  \geq \E\left[R^{\mathtt{LSI}}(T)\right]  - O(\frac{\underline{\Delta}\ln T}{(1-\mu_{i^*})^2}) 
\end{align*}
	\vspace{-18pt}
\end{theorem}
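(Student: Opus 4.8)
The plan is to construct a coupling between the bounded-reward process under $\mathtt{LSIBR}$ and the unbounded-reward process under $\mathtt{LSI}$, driven by the \emph{same} underlying randomness (the true reward draws and any internal randomness of the algorithm), and then argue that the only systematic difference between the two is how much total manipulation each non-optimal arm can deploy. The key observation is that both strategies spend the entire budget $B_i$ at the first pull, but in the bounded setting the manipulation at each pull is capped: arm $i$ can only lift its realized reward $r_t$ up to the ceiling $1$, so it adds $\min\{B_i^{\text{remaining}}, 1 - r_t\}$ rather than the full lump sum. Hence $\mathtt{LSIBR}$ may need several pulls to exhaust $B_i$, and across the horizon it injects the same \emph{cumulative} manipulation $B_i$ into the principal's reward estimates, just spread out and possibly truncated by the horizon.

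First I would set up the regret decomposition via Equation~\eqref{eq:regret-decomposition}, so that in both settings $\E[R(T)] = \sum_{i \neq i^*} \Delta_i \,\E[n_i(T)]$; the task reduces to lower bounding $\sum_{i \neq i^*}\E[n_i^{\mathtt{LSIBR}}(T)]$ in terms of $\sum_{i \neq i^*}\E[n_i^{\mathtt{LSI}}(T)]$. Next I would quantify the ``wasted'' budget under $\mathtt{LSIBR}$: the difference between the two processes is governed by how much less aggregated manipulation $\beta_t^{(i)}$ has accrued for arm $i$ at any given time under $\mathtt{LSIBR}$ relative to $\mathtt{LSI}$. Because the manipulation per pull is at most $1 - r_t$, and the true reward concentrates around $\mu_i$, one expects arm $i$ to need roughly $B_i / (1 - \mu_{i^*})$ pulls' worth of opportunities to deploy its full budget, and during this ramp-up its modified UCB term is smaller than it would be under $\mathtt{LSI}$. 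The core estimate is to show that the shortfall in pulls, summed over non-optimal arms, is at most $O\big(\frac{\ln T}{(1-\mu_{i^*})^2}\big)$, which after multiplying by $\underline{\Delta}$ yields the claimed additive loss $O\big(\frac{\underline{\Delta}\ln T}{(1-\mu_{i^*})^2}\big)$.

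The mechanism I would use to control this shortfall is a concentration argument: under $\mathtt{LSIBR}$ each pull of arm $i$ before it is exhausted contributes at least $1 - r_t \geq 1 - \mu_i - (\text{deviation})$ of spent budget, and by a Chernoff--Hoeffding bound on the $\sigma$-sub-Gaussian draws, with high probability arm $i$ fully spends $B_i$ within a number of pulls comparable to what $\mathtt{LSI}$ achieves in one step, up to a $1/(1-\mu_{i^*})$ factor and a logarithmic correction. The $(1-\mu_{i^*})^2$ in the denominator should arise from combining the per-step spending rate $\Theta(1-\mu_{i^*})$ with a concentration width that itself scales inversely in the gap between $\mu_i$ and the ceiling. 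I would formalize this as: on a high-probability event $\mathcal{E}$ (whose complement contributes only $O(1)$ to the pull count since $n_i \leq T$ but $\PP[\mathcal{E}^c]$ is polynomially small in $T$), the two processes agree on the sequence of pulled arms after the ramp-up phase, so $n_i^{\mathtt{LSIBR}}(T) \geq n_i^{\mathtt{LSI}}(T) - (\text{ramp-up length})$.

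The hard part will be making the coupling rigorous despite the fact that deferring manipulation changes \emph{which} arm wins each round, so the two processes can diverge in their entire pull histories, not just in arm $i$'s count — this is exactly the subtlety flagged in the discussion after Theorem~\ref{thm:opt-attack}. I expect the cleanest route is to avoid a direct pull-by-pull coupling of the full trajectories and instead bound $\E[n_{i^*}^{\mathtt{LSIBR}}(T)]$ from above directly, mirroring the technique used for Theorem~\ref{thm:ucb-egreedy-lb}, where one shows that once $i^*$ has been pulled past a threshold it is rarely pulled again; the bounded cap only delays the non-optimal arms' accumulation of manipulation by the ramp-up length, so the same threshold argument goes through with the threshold inflated by $O\big(\frac{\ln T}{(1-\mu_{i^*})^2}\big)$. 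Verifying that the ramp-up length concentrates and that it enters the final bound only additively (not multiplicatively) is the step I would scrutinize most carefully, since a careless estimate would blow up the loss by a factor of $\underline{\Delta}$ or introduce an unwanted dependence on individual $B_i$'s rather than their sum.
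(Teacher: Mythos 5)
Your endpoint---abandon the trajectory coupling and directly upper-bound $\E[n_{i^*}^{\mathtt{LSIBR}}(T)]$ in the spirit of Theorem~\ref{thm:ucb-egreedy-lb}---is indeed the skeleton of the paper's proof, which splits time at the (random) last round $n$ at which any strategic arm still spends budget. But the step you say you would ``scrutinize most carefully'' is exactly where your argument breaks. Every mechanism you offer controls the extra pulls of $i^*$ by the ramp-up \emph{length}: the coupling claim $n_i^{\mathtt{LSIBR}}(T) \geq n_i^{\mathtt{LSI}}(T) - (\text{ramp-up length})$, the ``threshold inflated by the ramp-up length'' variant, and the spending-rate concentration all incur a loss proportional to how long the arms take to exhaust their budgets. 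By your own estimate that length is about $\sum_{i\neq i^*} B_i/(1-\mu_{i})$ pulls---it grows linearly in the budgets, not like $\ln T$---so there is an unexplained jump from this budget-dependent quantity to your stated target $O\big(\ln T/(1-\mu_{i^*})^2\big)$. This is not a removable slack: since $1-\mu_i = (1-\mu_{i^*}) + \Delta_i$, when $\mu_{i^*}$ is close to $1$ the loss $\underline{\Delta}\sum_{i\neq i^*} B_i/(1-\mu_i)$ is of the same order as the main term $\underline{\Delta}\sum_{i\neq i^*} B_i/(2\Delta_i)$ in Theorem~\ref{thm:ucb-egreedy-lb}, rendering the bound vacuous; and the pathwise claim that the two trajectories agree after the ramp-up is precisely the kind of statement that the divergence you yourself point to rules out.

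The missing idea is a role reversal that makes the ramp-up length irrelevant. During the entire ramp-up phase, each strategic arm under $\mathtt{LSIBR}$ reports reward exactly $1$, the ceiling, so from the algorithm's point of view $i^*$ is a \emph{suboptimal} arm with gap $1-\mu_{i^*}$; the standard suboptimal-arm analysis for UCB, \egreedy{} and TS then bounds the pulls of $i^*$ during this phase by $O\big(\ln T/(1-\mu_{i^*})^2\big)$ \emph{no matter how many rounds the phase lasts}---this, and nothing about spending rates, is the source of the $(1-\mu_{i^*})^{-2}$ factor. For the tail phase no coupling is needed either: once the last budget is spent, the manipulated mean of every arm is $\widehat{\mu}_i(t-1) + B_i/n_i(t-1)$, identical in form to the $\mathtt{LSI}$ setting, and the paper's Lemma~\ref{lem:br-ub-opt-arm} observes that the $\mathtt{LSI}$ upper bounds on pulls of $i^*$ (Lemma~\ref{lem:LSI-opt-arm-ub}, Lemma~\ref{lem:e-greedy-lb}, Theorem~\ref{thm:ts-lb-opt-arm-ub}) do not depend on the starting index of the summation, so they apply verbatim to $\E\big[\sum_{t=n+1}^T \1\{I_t=i^*\}\big]$. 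Adding the two phases gives $\E[n_{i^*}^{\mathtt{LSIBR}}(T)] \leq \E[n_{i^*}^{\mathtt{LSI}}(T)] + O\big(\ln T/(1-\mu_{i^*})^2\big)$, which yields the theorem via Eq.~\eqref{eq:regret-lb}; the budget-spending concentration machinery in your proposal is not needed at all.
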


\section{The Robustness of \egreedy{} and Thompson Sampling}\label{sec:ts}

\new{In this section, we  turn our attention to two other popular classes of MAB algorithms, i.e., \egreedy{} and Thompson Sampling (TS) \cite{TS33,AG2017}. Unlike UCB, these are  \emph{randomized} algorithms: \egreedy{} algorithm involves a random exploration phase and TS employs random sampling during arm selection (note: the randomness when executing UCB comes purely from the random rewards and not the algorithm itself).  We establish the same regret upper bound for  \egreedy{} and Thompson Sampling, again for arbitrary adaptive manipulation strategies. However, the additional randomness involved in \egreedy{}  and TS makes it much more challenging to exactly characterize the SPE in the induced games. Nevertheless, we show that the regret upper bounds remain tight  under the \lsi strategy. 
}
\vspace{-10pt}

\subsection{Regret Upper Bound for $\vareps$-Greedy Principal}\label{sec:e-greedy}

As with UCB, we assume that the algorithm pulls arm $t$ when $t \leq K$, i.e., first exploring each arm once. At round $t > K$, the algorithm selects an arm as follows: 
$$
I_t = \left\{
\begin{aligned}
&\text{uniformly drawn from }[K], &\text{w.p. } \varepsilon_t\\
&\argmax_i \left\{ \widehat{\mu}_{i}(t-1) + \frac{\beta^{(i)}_{t-1}}{n_i(t-1)}  \right \} , &\text{o.w.}
\end{aligned}
\right.
$$
\vspace{-10pt}

The first step above is \emph{Exploration}, while the second step is  \emph{Exploitation}. We choose $\vareps_t=\Theta\left(\frac{1}{t}\right)$, which guarantees the convergence of the algorithm~\cite{ACBFS02}. 
We prove the following regret bound for $\vareps$-Greedy, again for an arbitrary adaptive manipulation strategy $S$.  As with the UCB case, the result strictly generalizes previous analysis for $\vareps$-Greedy to incorporate the effect of manipulations. 
\begin{theorem}\label{thm:e-greedy-ub}
For any adaptive manipulation strategy $S$ of strategic arms, the regret of the $\vareps$-Greedy principal
with  $\eps_t = \min\{1, \frac{cK}{t}\}$ and $c = \max\{20, \frac{36\sigma^2}{\underline{\Delta}}\}$,
is bounded by
\begin{align*}
\E[R(T)] \leq \sum_{i \neq i^*}\big[3B_i  + \mathcal{O}\left(\frac{\ln T}{\Delta_i}\right)\big].
\end{align*}
\end{theorem}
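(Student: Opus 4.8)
The plan is to bound the expected number of pulls of each non-optimal arm $i \neq i^*$ and then use the regret decomposition $\E[R(T)] = \sum_{i \neq i^*} \Delta_i \E[n_i(T)]$ from Equation~\eqref{eq:regret-decomposition}, exactly as in the UCB analysis. So the real work is to prove a per-arm pull bound analogous to Lemma~\ref{lem:ucb-ub}, namely $\E[n_i(T)] \leq \frac{3B_i}{\Delta_i} + \mathcal{O}\!\left(\frac{\ln T}{\Delta_i^2}\right)$.

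The natural approach is to split $n_i(T)$ according to the two ways arm $i$ can be pulled. Each pull of $i$ arises either from the \emph{exploration} step (uniform over $[K]$, with probability $\eps_t$) or from the \emph{exploitation} step. For the exploration contribution, arm $i$ is selected with probability $\eps_t/K = \min\{1/K, c/t\}$ per round, so its expected total over $t = K+1, \dots, T$ is $\sum_t \eps_t/K = \mathcal{O}(c \ln T) = \mathcal{O}(\ln T/\underline{\Delta})$, which is absorbed into the $\mathcal{O}(\ln T/\Delta_i)$ term (using $\underline{\Delta} \leq \Delta_i$, so one should be a little careful that the constants land on $\Delta_i$ rather than $\underline{\Delta}$, but the scaling is the right order). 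The exploitation contribution is where the budget enters: arm $i$ is chosen in exploitation only when its manipulated index $\widehat{\mu}_i(t-1) + \beta^{(i)}_{t-1}/n_i(t-1)$ exceeds that of $i^*$. The key estimate is that once $n_i(t-1)$ is large enough, the manipulation boost $\beta^{(i)}_{t-1}/n_i(t-1) \leq B_i/n_i(t-1)$ shrinks, and simultaneously the empirical mean $\widehat{\mu}_i$ concentrates near $\mu_i$ while $\widehat{\mu}_{i^*}$ concentrates near $\mu_{i^*}$. Following the UCB proof template, I would pick a threshold $C_i(T)$ (of order $\max\{B_i/\Delta_i, \ln T/\Delta_i^2\}$) such that for $n_i(t-1) \geq C_i(T)$ the boosted index of $i$ can exceed that of $i^*$ only when one of two low-probability concentration events fails, bounding that tail via Chernoff–Hoeffding.

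The \textbf{main obstacle}, and the essential difference from UCB, is that $\vareps$-Greedy uses the \emph{bare} empirical mean in exploitation rather than a confidence-inflated index, so there is no built-in exploration bonus guaranteeing that $i^*$ has been sampled enough for $\widehat{\mu}_{i^*}$ to concentrate. The standard fix (Auer–Cesa-Bianchi–Fischer) is to use the forced exploration to lower-bound $n_{i^*}(t-1)$: with $\eps_t = \Theta(1/t)$, the number of forced exploration pulls of $i^*$ grows like $\Theta(\ln t)$, so $n_{i^*}(t-1) = \Omega(\ln t)$ with high probability, which is enough to make $\widehat{\mu}_{i^*}$ concentrate to within $\Delta_i/2$. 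Establishing this high-probability lower bound on $n_{i^*}(t-1)$ — controlling the random number of exploration rounds through a Bernstein/Chernoff bound on a sum of independent Bernoulli's with means $\eps_t/K$, and verifying that the specific choice $c = \max\{20, 36\sigma^2/\underline{\Delta}\}$ makes the resulting failure probabilities summable — is the delicate, constant-chasing part of the argument. The manipulation term only adds a controlled additive $B_i/n_i(t-1)$ inside the comparison, so once the concentration-under-forced-exploration machinery is set up, the budget dependence $3B_i$ drops out by the same threshold trick as in Lemma~\ref{lem:ucb-ub}; the harder conceptual work is entirely in making the $\vareps$-Greedy exploration guarantee robust in the presence of the adaptive manipulations $\beta^{(i)}_{t-1}$.
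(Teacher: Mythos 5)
Your proposal takes essentially the same route as the paper's proof (Lemma~\ref{lem:e-greedy-ub}): the paper likewise thresholds $n_i(t-1)$ at $C_i = 3B_i/\Delta_i$ so the manipulation boost $\beta^{(i)}_{t-1}/n_i(t-1)$ is at most $\Delta_i/3$, bounds the exploration pulls by $\sum_t \eps_t/K = \mathcal{O}(c\ln T)$ via the harmonic sum, and controls the exploitation tail $\PP\big(\widehat{\mu}_i(t-1) + \Delta_i/3 \geq \widehat{\mu}_{i^*}(t-1)\big)$ with exactly the forced-exploration concentration argument you outline. The only difference is presentational: the paper imports that delicate ``has $i^*$ been explored enough'' machinery as a black box (Lemma~\ref{lem:e-greedy-concentration}, i.e.\ Theorem 3 of Auer et al.) rather than re-deriving the Bernstein-plus-Chernoff bound, so your plan is correct and complete modulo the constant-chasing you already flag.
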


\subsection{Regret Upper Bound for Thompson Sampling Principal}

We model rewards with Gaussian priors and likelihood.
As with UCB and $\varepsilon$-Greedy, we also assume that the algorithm %
pulls each arm once in the first $K$ rounds. At round $t> K$, the algorithm selects an arm %
according to the following procedure:
\vspace{-10pt}
\begin{itemize}
\item[(1)] For each $i\in [K]$, sample $\theta_{i}(t-1)$ from a Gaussian distribution $\mathcal{N}(\widetilde{\mu}_i(t-1) , \frac{1}{n_i(t-1)})$, where $\widetilde{\mu}_i(t-1) =  \widehat{\mu}_i(t-1) + \frac{\beta^{(i)}_{t-1}}{n_i(t-1)}$. %
\item[(2)] Select arm $I_t = \argmax_i \theta_i(t-1)$.
\end{itemize}
\vspace{-10pt}

The total manipulation  by arm $i$ until time $t$, $\beta^{(i)}_t$, is induced by a strategy profile $S$.
TS is widely known to be challenging to analyze, and its regret bound was proved only recently~\cite{AG2017}.  This is because the algorithm does not directly depend on the empirical mean of each arm, but relies on random samples from the prior distribution centered at the empirical mean. This sampling process further complicates the analysis of the stochasticity in the algorithm.  Moreover, it is unclear whether there exists an effective adversarial attack to TS. This was left as an open problem in~\citet{JLMZ18}. 

Nevertheless, we prove that TS admits the same regret upper bound as UCB and $\epsilon$-Greedy for any adaptive manipulation, up to constant factors. These results serve as an evidence of the intrinsic robustness of stochastic bandits to strategic manipulations, regardless of which no regret learning algorithm is used.  
\begin{theorem}\label{thm:ts-ub}
For any manipulation strategy profile $S$ of strategic arms, the regret of the Thompson Sampling principal can be bounded as
\begin{eqnarray}
\E[R(T)] \leq \sum_{i\neq i^*} \max\Big\{6B_i, \frac{72\sigma^2\ln T}{\Delta_i}\Big\} + \mathcal{O}\left(\frac{\ln T}{\Delta_i}\right).
\end{eqnarray}
\end{theorem}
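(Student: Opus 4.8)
The plan is to follow the same high-level strategy as in the UCB analysis: use the regret decomposition \eqref{eq:regret-decomposition}, $\E[R(T)] = \sum_{i\neq i^*}\Delta_i\E[n_i(T)]$, so that it suffices to bound $\E[n_i(T)]$ for each suboptimal arm $i$ by $\max\{\frac{6B_i}{\Delta_i}, \frac{72\sigma^2\ln T}{\Delta_i^2}\} + \mathcal{O}(\frac{\ln T}{\Delta_i^2})$. Exactly as in the proof of Lemma~\ref{lem:ucb-ub}, I would introduce a threshold $C_i = \max\{\frac{6B_i}{\Delta_i}, \frac{72\sigma^2\ln T}{\Delta_i^2}\}$ and split
\begin{align*}
\E[n_i(T)] \leq C_i + \E\Big[\sum_{t=K+1}^T \1\{I_t = i,\ n_i(t-1)\geq C_i\}\Big],
\end{align*}
so that the threshold itself yields the leading $\max\{\cdots\}$ term and the work reduces to showing the tail sum is $\mathcal{O}(\frac{\ln T}{\Delta_i^2})$.

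The key structural observation is that the budget enters TS only through the shifted sampling mean $\widetilde{\mu}_i(t-1)=\widehat{\mu}_i(t-1)+\beta^{(i)}_{t-1}/n_i(t-1)$, and once $n_i(t-1)\geq C_i \geq \frac{6B_i}{\Delta_i}$ the manipulation offset is bounded by $\beta^{(i)}_{t-1}/n_i(t-1)\leq B_i/n_i(t-1)\leq \Delta_i/6$. Hence, on the tail, the effect of an arbitrary adaptive manipulation is absorbed into a constant-fraction shrinkage of the gap, and the problem reduces to an essentially unmanipulated Gaussian-TS instance. Crucially, since $B_{i^*}=0$ the optimal arm's sample $\theta_{i^*}$ is completely unaffected, so its anti-concentration behavior --- the linchpin of the Thompson Sampling analysis --- is unchanged.

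I would then re-run the Agrawal--Goyal decomposition~\cite{AG2017} with thresholds $x_i=\mu_i+\Delta_i/3$ and $y_i=\mu_{i^*}-\Delta_i/3$ (so $x_i<y_i$, and the $\Delta_i/6$ offset together with the concentration error of $\widehat{\mu}_i$ keeps $\widetilde{\mu}_i\leq x_i$ with high probability). Splitting the tail sum according to the concentration event $\{\widehat{\mu}_i(t-1)\leq\mu_i+\Delta_i/6\}$ and the sample event $\{\theta_i(t-1)\leq y_i\}$ gives three sub-terms: (i) when both hold, arm $i$ is pulled only if $\theta_{i^*}(t-1)\leq y_i$, and the number of such pulls is controlled by the lemma that charges pulls of $i$ against the reciprocal anti-concentration probability of $\theta_{i^*}$, yielding $\mathcal{O}(\frac{\ln T}{\Delta_i^2})$; (ii) when the mean is concentrated but the sample is large, a Gaussian tail bound summed over $n_i\geq C_i$ gives $\mathcal{O}(1)$; and (iii) when the mean fails to concentrate, a sub-Gaussian (Chernoff--Hoeffding) bound over $n_i\geq \frac{72\sigma^2\ln T}{\Delta_i^2}$ pulls gives $\mathcal{O}(1)$.

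I expect the \emph{main obstacle} to be sub-term (i): carefully re-deriving the Agrawal--Goyal pull-counting lemma with the manipulation present. The subtlety is that this lemma relates $\E[n_i]$ to the reciprocal anti-concentration probability of $\theta_{i^*}$ across rounds, and one must verify that conditioning on $n_i(t-1)\geq C_i$ and on the shifted thresholds $x_i,y_i$ does not disturb the stopping-time argument controlling that ratio; the mismatch between the sampling variance $1/n_i(t-1)$ and the reward scale $\sigma$ must also be tracked through both the Gaussian-tail and the sub-Gaussian concentration estimates. Once this lemma is established on the tail, summing the three sub-terms and adding the threshold $C_i$ completes the bound on $\E[n_i(T)]$, and plugging into \eqref{eq:regret-decomposition} yields the theorem.
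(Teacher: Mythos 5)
Your proposal follows essentially the same route as the paper's proof: the regret decomposition \eqref{eq:regret-decomposition}, a per-arm bound built around the threshold $C_i=\max\{6B_i/\Delta_i,\,\Theta(\sigma^2\ln T/\Delta_i^2)\}$ so that past it the manipulation offset $\beta^{(i)}_{t-1}/n_i(t-1)$ is at most $\Delta_i/6$, and the Agrawal--Goyal three-event decomposition with $x_i=\mu_i+\Delta_i/3$, $y_i=\mu_{i^*}-\Delta_i/3$, charging pulls of arm $i$ against the anti-concentration of the unmanipulated $\theta_{i^*}$ (valid since $B_{i^*}=0$), exactly as in Lemmas~\ref{lem:ts-ub-1}--\ref{lem:ts-ub-3}. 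The only differences are cosmetic: the paper applies the budget threshold only inside the third term (Lemma~\ref{lem:ts-ub-3}) rather than globally, and your sub-term (ii) is in general $\mathcal{O}(\ln T/\Delta_i^2)$ rather than $\mathcal{O}(1)$ (cf. the $18\ln T/\Delta_i^2$ bound of Lemma~\ref{lem:ts-ub-1}), which is harmlessly absorbed by the theorem's $\mathcal{O}(\ln T/\Delta_i)$ slack.
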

\vspace{-10pt}

The proof of Theorem \ref{thm:ts-ub} is quite involved as it requires us to strictly generalize  the analysis in \citet{AG2017}, which is already involved,  and further incorporate each arm's manipulation. Here we describe the key lemma (Lemma \ref{lem:ts-ub}) that leads to the above regret lower bound, and outline its proof. All formal proofs  can be found in Appendix~\ref{app:ts}.

\begin{lemma}\label{lem:ts-ub}
For any manipulation strategy profile $S$, the expected number of times that arm $i$ is pulled up to time $T$ can be bounded as follows:
\begin{eqnarray}
\E[n_{i}(T)] \leq \max\Big\{\frac{6B_i}{\Delta_i}, \frac{72\sigma^2\ln T}{\Delta_i^2}\Big\} + \mathcal{O}\left(\frac{\ln T}{\Delta_i^2}\right). 
\end{eqnarray}
\end{lemma}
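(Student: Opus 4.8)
The plan is to bound $\E[n_i(T)]$ for a fixed non-optimal arm $i$ under an arbitrary manipulation strategy $S$, following the structure of the UCB argument (Lemma~\ref{lem:ucb-ub}) but adapted to the sampling-based selection of Thompson Sampling and building on the analysis of \citet{AG2017}. The central event governing a pull of arm $i$ is that its sample $\theta_i(t-1)$ exceeds the sample $\theta_{i^*}(t-1)$ of the optimal arm. The key observation is that manipulation enters only through the shifted posterior mean $\widetilde\mu_i(t-1) = \widehat\mu_i(t-1) + \beta_{t-1}^{(i)}/n_i(t-1)$, and since $\beta_{t-1}^{(i)} \le B_i$, the total upward shift of arm $i$'s sampling center is at most $B_i/n_i(t-1)$, which decays as arm $i$ is pulled more. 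So the first step is to choose a threshold $C_i(T)$ on $n_i(t-1)$ exactly as in the UCB case, namely balancing the exploration term against the manipulation term, so that once $n_i(t-1) \ge C_i(T)$ the manipulation contribution $B_i/n_i(t-1)$ has shrunk below $\Theta(\Delta_i)$.

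\emph{Next I would} decompose $\E[n_i(T)]$ into the contribution from rounds where $n_i(t-1) < C_i(T)$, trivially bounded by $C_i(T)$, plus the contribution from rounds where $n_i(t-1) \ge C_i(T)$. For the latter, following \citet{AG2017}, I would split the pull-$i$ event into two parts according to whether $\theta_i(t-1)$ is close to its (shifted) mean or deviates far above it. When $n_i(t-1) \ge C_i(T)$, the shifted mean $\widetilde\mu_i(t-1)$ is at most $\mu_i + O(\Delta_i)$ in expectation, so a pull of $i$ requires either a large upward sampling deviation of $\theta_i$ above $\widetilde\mu_i(t-1)$ (controlled by Gaussian anti-concentration, i.e. the posterior variance $1/n_i(t-1)$ is small when $n_i$ is large), or an atypically small sample $\theta_{i^*}(t-1)$ for the optimal arm (controlled, as in \citet{AG2017}, by summing over how many pulls $i^*$ has received and using Gaussian lower tail bounds together with the geometric-type argument bounding $\E[1/p_{i,t}]$, where $p_{i,t}$ is the probability that $\theta_{i^*}$ exceeds a suitable cutoff). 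The manipulation does not affect arm $i^*$ at all since $\beta_{t-1}^{(i^*)} = 0$, so the optimal-arm side of the analysis in \citet{AG2017} carries over essentially unchanged. Combining the two regimes gives
\begin{align*}
\E[n_i(T)] \le C_i(T) + \mathcal{O}\Big(\frac{\ln T}{\Delta_i^2}\Big),
\end{align*}
with $C_i(T) = \max\{6B_i/\Delta_i,\, 72\sigma^2\ln T/\Delta_i^2\}$, which is exactly the claimed bound.

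\emph{The hard part will be} the second regime: carefully re-deriving the sequence of concentration and anti-concentration estimates from \citet{AG2017} with the additional $\beta_{t-1}^{(i)}/n_i(t-1)$ shift present in $\widetilde\mu_i$. I must verify that the choice of $C_i(T)$ genuinely renders the manipulation term subdominant to $\Delta_i$ uniformly over the relevant rounds, so that the Gaussian tail bounds used to control $\PP[\theta_i(t-1) > \text{cutoff} \mid n_i(t-1)\ge C_i(T)]$ still yield summable contributions, and that the bound on $\E[1/p_{i,t}]$ for the optimal arm remains valid without modification. The delicate point is that the shift is random and history-dependent (it depends on the realized $\beta_{t-1}^{(i)}$, not a fixed quantity), so I would condition on $n_i(t-1)$ and the budget bound $\beta_{t-1}^{(i)} \le B_i$, and argue that the worst-case deterministic shift $B_i/n_i(t-1)$ dominates every realized shift, allowing the stochastic analysis to proceed against the fixed upper envelope. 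Once $\E[n_i(T)]$ is bounded for each $i \ne i^*$, Lemma~\ref{lem:ts-ub} plugged into the regret decomposition~\eqref{eq:regret-decomposition} yields Theorem~\ref{thm:ts-ub}.
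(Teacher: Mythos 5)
Your proposal matches the paper's proof in all essentials: the paper also follows the Agrawal--Goyal decomposition into the events $\{\widetilde{\mu}_i(t-1) > x_i\}$, $\{\theta_i(t-1) > y_i\}$, and the low-$\theta_{i^*}$ case handled by the geometric bound on $\E\left[1/p_{i,\tau_{i^*,s}+1}\right]$, and it absorbs the manipulation exactly as you do --- via the deterministic envelope $\beta^{(i)}_{t-1}/n_i(t-1) \leq B_i/n_i(t-1) \leq \Delta_i/6$ once $n_i(t-1) \geq C_i(T) = \max\{6B_i/\Delta_i,\, \Theta(\sigma^2\ln T/\Delta_i^2)\}$, with the optimal arm's side of the analysis untouched since $\beta^{(i^*)}_{t-1} = 0$. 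The only cosmetic difference is bookkeeping: the paper performs the event decomposition first and introduces the threshold $C_i(T)$ only inside the term for $\{\widetilde{\mu}_i(t-1) > x_i\}$ (its Lemma~\ref{lem:ts-ub-3}), whereas you threshold on $n_i(t-1)$ at the top level; both orderings yield the stated bound.
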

\begin{proof}[Proof Sketch]
Let us start with some  useful notation. For each arm $k\in [K]$, we pick two thresholds $x_k$ and $y_k$ such that $\mu_k \leq x_k \leq y_k \leq \mu_{i^*}$.  Let $E^\mu_k(t)$ be the event $\widetilde{\mu}_k(t-1) \leq x_k$ and $\E^\theta_k(t)$ be the event $\theta_k(t)\leq y_k$. We also denote $\mathcal{F}_t$ as the history of plays until time $t$. Let $\tau_{k,s}$ be the time step at which arm $k$ is played for the $s^{\mathrm{th}}$ time and $p_{k,t}$ be the probability that $p_{k,t} = \PP(\theta_{i^*}(t)\geq y_k \big| \mathcal{F}_{t-1})$.

The key step is to carefully decompose $\E[n_{i}(T)]$, as follows:
\vspace{-10pt}
\begin{equation}\label{eq:ts-decomposition}
\begin{aligned}
\hspace{-8pt}\E[n_{i}(T)]
\leq 1 &+ E\left[\sum_{t=K+1}^T \1\big\{I_t=i,E^{\mu}_i(t),\overline{\E^\theta_i(t)}\big\}\right]\\
&+ \sum_{t=K+1}^T \PP\left(I_t=i,E^{\mu}_i(t),\E^\theta_i(t)\right)\\
& + \E\left[\sum_{t=K+1}^T \1\big\{I_t=i,\overline{E^{\mu}_i(t)}\big\}\right].
\end{aligned}
\end{equation}

The proof then proceeds by bounding each of the above terms separately. We set $x_i = \mu_i +\frac{\Delta_i}{3}, y_i = \mu_{i^*} - \frac{\Delta_i}{3}$. The first term can be bounded by $(\frac{18\ln T}{\Delta_i^2} + 1)$ using a result of~\citet{AG2017}. The second term can be bounded by $\sum_{t=K+1}^{T-1}\E\left[\frac{1}{p_{i,\tau_{i^*,s}+1}}-1\right]$ 
We then bound each summand by the following bounds (Lemma \ref{lem:ts-ub-geometric} %
 in the Appendix):
 \vspace{-10pt}
$$
\E\left[\frac{1}{p_{i,\tau_{i^*,s}+1}} - 1\right]\leq \left\{
\begin{array}{l}
e^{11/4\sigma^2} + \frac{\pi^2}{3}, \, \,   \forall s, \\
\frac{4}{T\Delta_i^2}, \text{ if } s\geq \frac{72\ln(T\Delta_i^2)\cdot\max\{1, \sigma^2\}}{\Delta_i^2}. 
\end{array}
\right.
$$
Finally, we bound the third term by $\max\Big\{\frac{6B_i}{\Delta_i}, \frac{144\sigma^2\ln T}{\Delta_i^2}\Big\} + 1$ (Lemma \ref{lem:ts-ub-3}). 
\end{proof}

\subsection{Regret Lower Bound}

It would again be natural to consider regret under a Nash equilibrium, and perhaps dominant strategy behavior. However, the equilibrium in the game induced by a \egreedy{} or TS principal is  difficult to characterize. The main challenge comes from the  additional stochasticity due to the random exploration phases in \egreedy{} and TS.
Nevertheless, we are able to prove the following matching lower bound on regret  under \lsi manipulation by using similar ideas as in the proof of Theorem \ref{thm:ucb-egreedy-lb}.  
This shows that our upper bound is indeed tight, but does not rule out the possibility of a better regret upper bound for \egreedy and TS  when arms' manipulations are restricted to a Nash equilibrium. It remains a challenging open question to characterize the  SPE under \egreedy{} and TS.  
The lower bound generalizes to bounded rewards, as shown in Theorem \ref{thm:br-lb}. 

\begin{figure*}[tb!]
	\centering 
	\begin{subfigure}{0.3\textwidth}
		\centering
		\includegraphics[scale=0.5]{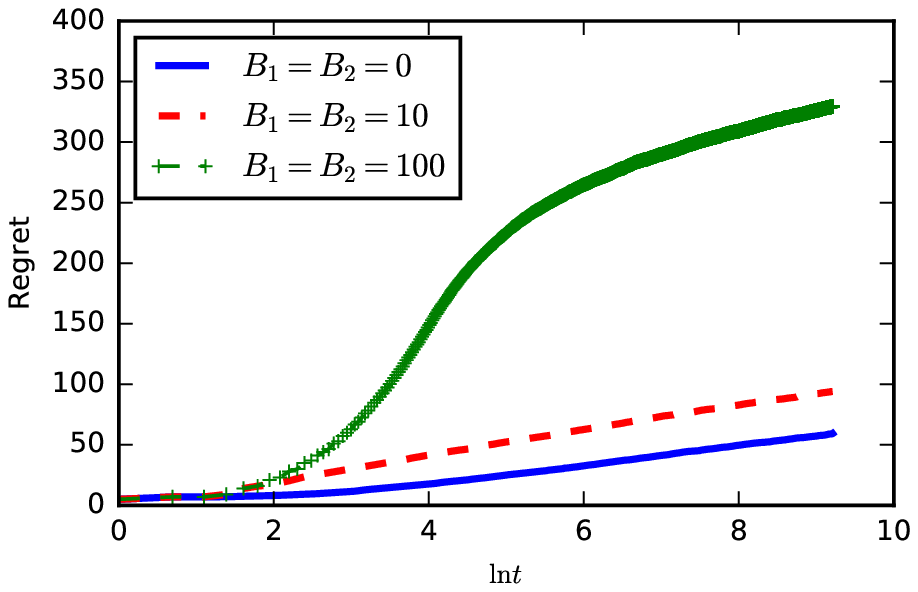}
	\end{subfigure}
	\begin{subfigure}{0.3\textwidth}
		\centering
		\includegraphics[scale=0.5]{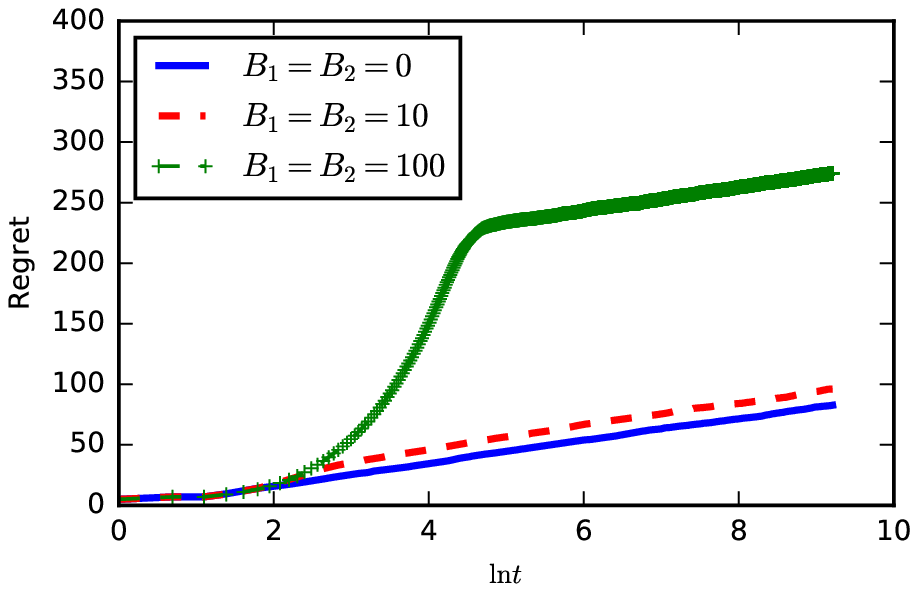}
	\end{subfigure}
	\begin{subfigure}{0.3\textwidth}
		\centering
		\includegraphics[scale=0.5]{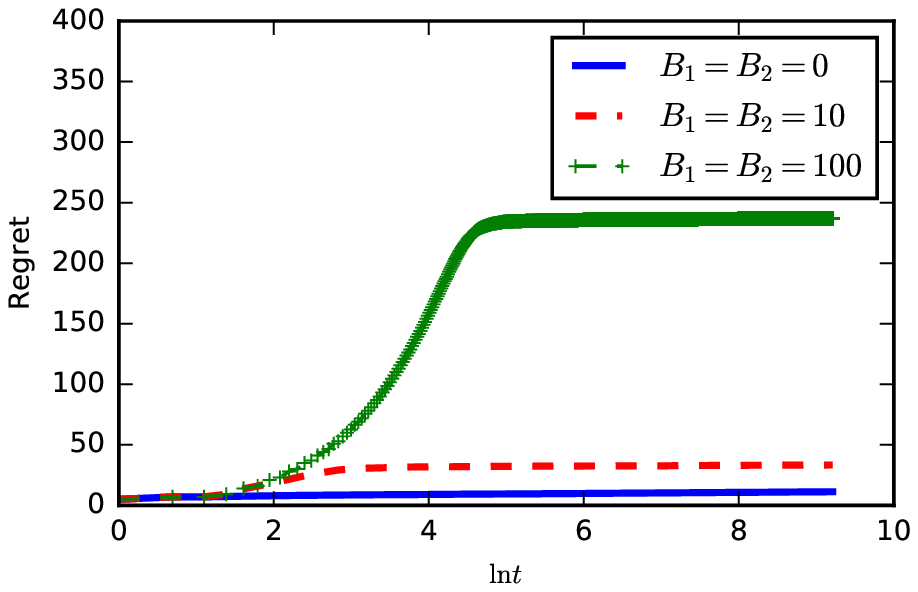}
	\end{subfigure}
	\caption{Total Regret as a function of $\ln t$ for the  UCB principal (left), $\vareps$-Greedy principal (middle), and Thompson Sampling principal (right), for three different choices for budgets of arms 1 and 2. $B_3=0$ (the strongest arm).}
	\label{fig:rgt-time}
\end{figure*}

\begin{figure*}[tb!]
	\centering
	\begin{subfigure}{0.3\textwidth}
		\centering
		\includegraphics[scale=0.5]{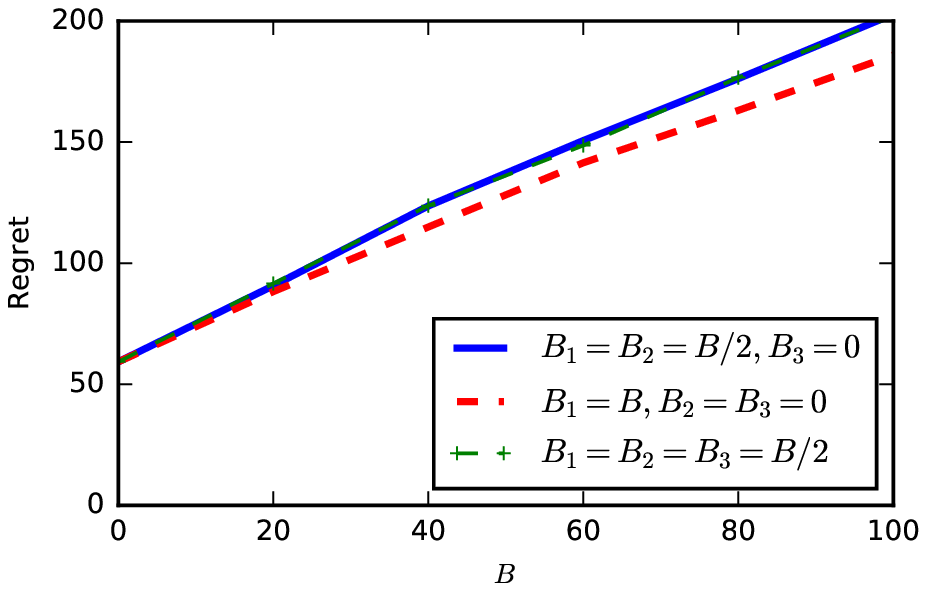}
	\end{subfigure}
	\begin{subfigure}{0.3\textwidth}
		\centering
		\includegraphics[scale=0.5]{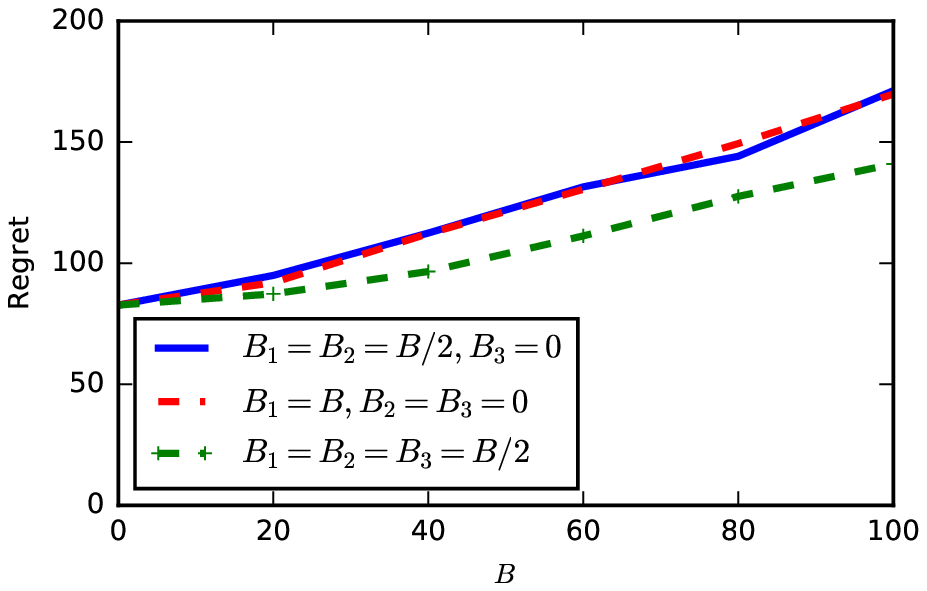}
	\end{subfigure}
	\begin{subfigure}{0.3\textwidth}
		\centering
		\includegraphics[scale=0.5]{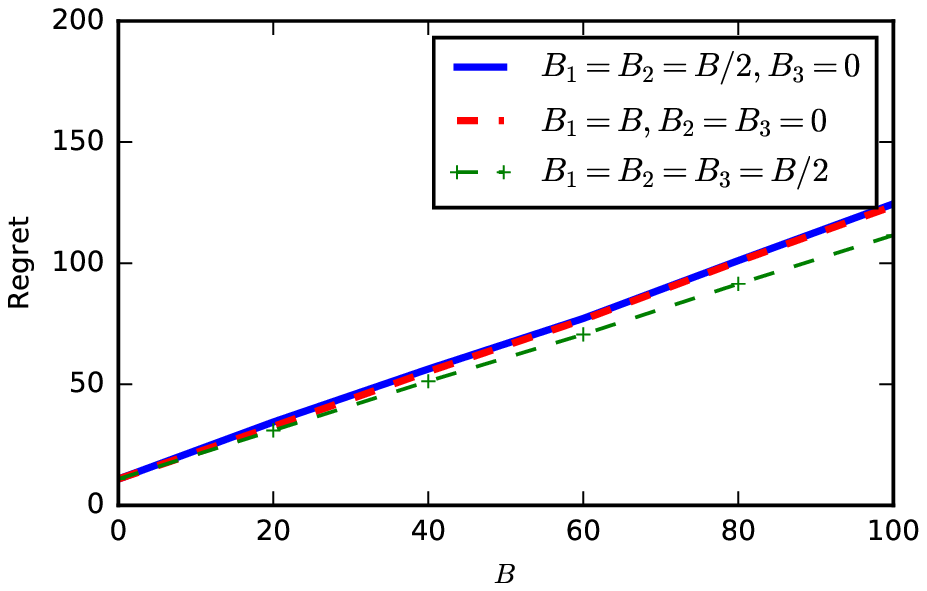}
	\end{subfigure}
	\caption{Total regret over $T=10^4$ periods as a function of total budget $B$ of arms 1 and 2,  for the UCB principal (left), $\vareps$-Greedy principal (middle), and Thompson Sampling principal (right), for three different choices of how to divide the budget, and also allowing arm 3 to have budget in one scenario. }
	\label{fig:rgt-budget}
	\vspace{-10pt}
\end{figure*} 
\begin{proposition}\label{prop:ts-lb} %
Suppose the principal runs \egreedy{}\footnote{$\eps_t = \min\{1, \frac{cK}{t}\}$ where $c = \max\{20, \frac{36\sigma^2}{\underline{\Delta}}\}$} or Thompson Sampling and each strategic arm uses $\mathtt{LSI}$. For any $\sigma$-sub-Gaussian reward distributions on arms, the regret of the principal satisfies,
$$
\E[R(T)] \geq \underline{\Delta}\sum_{i\neq i^*}\frac{B_i}{2\Delta_i} - \mathcal{O}\left(\frac{\ln T}{\underline{\Delta}}\right). 
$$
\end{proposition}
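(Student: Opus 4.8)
The plan is to follow the same route as the proof of Theorem~\ref{thm:ucb-egreedy-lb}. Starting from the regret decomposition~\eqref{eq:regret-decomposition} and passing to~\eqref{eq:regret-lb}, it suffices to lower bound $\sum_{i\neq i^*}\E[n_i(T)]$, or equivalently, using $\sum_i n_i(T)=T$, to upper bound $\E[n_{i^*}(T)]$. Writing $m_i := \frac{B_i}{2\Delta_i}$, the goal reduces to showing $\sum_{i\neq i^*}\E[n_i(T)] \geq \sum_{i\neq i^*} m_i - \mathcal{O}\big(\ln T/\underline{\Delta}^2\big)$; multiplying through by $\underline{\Delta}$ and using $\Delta_i \geq \underline{\Delta}$ then yields the claimed bound. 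In words, I want to argue that under $\mathtt{LSI}$ every non-optimal arm is forced to be pulled at least roughly $\frac{B_i}{2\Delta_i}$ times.

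The structural observation common to both algorithms is as follows. Under $\mathtt{LSI}$, arm $i$ spends its whole budget on its first pull (which occurs during the first $K$ exploration rounds), so its manipulated mean equals $\widetilde{\mu}_i(t-1) = \widehat{\mu}_i(t-1) + B_i/n_i(t-1)$ for all later $t$. Call arm $i$ \emph{unsaturated} at time $t$ if $n_i(t-1) < m_i$; then its boost satisfies $B_i/n_i(t-1) > 2\Delta_i$. On the Chernoff--Hoeffding event that $\widehat{\mu}_i(t-1)$ and $\widehat{\mu}_{i^*}(t-1)$ are both within $\Delta_i/2$ of their true means, we obtain $\widetilde{\mu}_i(t-1) > \mu_{i^*} \geq \widetilde{\mu}_{i^*}(t-1)$, so $i^*$ cannot be the exploitation-$\argmax$ while any non-optimal arm is unsaturated. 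Hence, outside the expected $\mathcal{O}(K\ln T)$ exploration rounds $\sum_t \varepsilon_t$ (which scale like $K\ln T/\underline{\Delta}$ for our choice of $\varepsilon_t$) and the $\mathcal{O}(1/\underline{\Delta}^2)$ rounds on which a concentration bound fails (their probabilities decay like $e^{-\Omega(\underline{\Delta}^2 n)}$ in the pull count and hence sum up), every exploitation round in which $i^*$ loses is a pull of a non-optimal arm. Collecting these pulls until the first time all non-optimal arms become saturated yields $\sum_{i\neq i^*}\E[n_i(T)] \geq \sum_{i\neq i^*} m_i - \mathcal{O}\big(\ln T/\underline{\Delta}^2\big)$, which settles the \egreedy{} case.

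For Thompson Sampling the skeleton is identical, but the deterministic comparison of means must be replaced by a comparison of the samples $\theta_i(t-1)\sim\mathcal{N}\big(\widetilde{\mu}_i(t-1), 1/n_i(t-1)\big)$ and $\theta_{i^*}(t-1)\sim\mathcal{N}\big(\widetilde{\mu}_{i^*}(t-1), 1/n_{i^*}(t-1)\big)$. The main obstacle is to control the sampling noise so that an unsaturated arm still reliably beats $i^*$: I would show that while $n_i(t-1) < m_i$ the mean gap $\widetilde{\mu}_i(t-1) - \widetilde{\mu}_{i^*}(t-1) > \Delta_i$ dominates the two posterior standard deviations, so that $\theta_i(t-1) > \theta_{i^*}(t-1)$ with probability tending to $1$ once $n_i(t-1)$ exceeds a small multiple of $\ln T$, via standard Gaussian tail bounds. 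The few rounds where a sample deviates by more than $\mathcal{O}(\sqrt{\ln T/n})$ from its posterior mean contribute failure probabilities that again sum to a lower-order term and are folded into the $\mathcal{O}(\ln T/\underline{\Delta})$ slack; the remaining accounting---collecting non-optimal pulls until universal saturation---is unchanged. The crux of the whole argument, and precisely where the added randomness of \egreedy{} and TS (as opposed to the deterministic UCB rule underlying Theorem~\ref{thm:ucb-egreedy-lb}) genuinely complicates matters, is verifying that this extra stochasticity costs only the stated $\mathcal{O}(\ln T/\underline{\Delta})$ and never lets $i^*$ escape being blocked by the still-boosted unsaturated arms.
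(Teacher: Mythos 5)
Your high-level plan is exactly the paper's: reduce via \eqref{eq:regret-lb} to upper bounding $\E[n_{i^*}(T)]$, set the saturation threshold $B_i/(2\Delta_i)$ so that an unsaturated arm's $\mathtt{LSI}$ boost exceeds $2\Delta_i$, and charge every pull of $i^*$ made while some arm is unsaturated to a low-probability event. The gap is in how you \emph{count} those events, and it is the same gap in both cases: a ``bad'' state of a non-optimal arm $i$ persists across all rounds in which $i$ is \emph{not} pulled. For \egreedy{}, if $\widehat{\mu}_i(t-1)$ happens to fall below $\mu_i - \Delta_i/2$ at some pull count, it stays that low until exploration happens to re-pull $i$, which takes on the order of $t$ rounds in expectation since $\eps_t = \Theta(1/t)$; during all of those rounds $i^*$ can win the exploitation argmax. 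So your step ``probabilities decay like $e^{-\Omega(\underline{\Delta}^2 n)}$ in the pull count and hence sum up'' does not bound the number of wasted \emph{rounds}: summing failure probabilities over pull counts is only valid for events that increment the relevant pull count (it works for upward deviations of $i^*$, whose bad rounds are self-terminating, but fails for downward deviations of $i$). The paper's proof of Lemma~\ref{lem:e-greedy-lb} avoids this by using the \emph{time-indexed} concentration bound of \citet{ACF02} (Lemma~\ref{lem:e-greedy-concentration}), in which the per-round failure probability at time $t$ is controlled by the accumulated exploration $x_t = \frac{1}{2K}\sum_{s=K+1}^{t}\eps_s = \Theta(c\ln t)$ rather than by $n_i(t-1)$; this is precisely why $c$ is taken as large as $\max\{20, 36\sigma^2/\underline{\Delta}\}$, making the per-round probabilities polynomially small in $t$ and summable.

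The same persistence problem is fatal to your Thompson Sampling sketch as written. Your Gaussian-tail argument only applies once $n_i(t-1) \gtrsim \ln T/\Delta_i^2$; while $n_i(t-1)$ is below that, arm $i$'s posterior variance $1/n_i(t-1)$ is large, its sample falls short by $\Delta_i$ with \emph{constant} probability, and every round in which this happens leaves $n_i$ unchanged, so the situation can recur indefinitely and per-round failure probabilities do not telescope into a count of rounds. The missing idea is the Agrawal--Goyal device the paper uses: Lemma~\ref{lem:ts-lb-aux-1} converts $\PP(I_t = i^*,\cdot \mid \mathcal{F}_{t-1})$ into $\frac{1-q_{i,t}}{q_{i,t}}\,\PP(I_t = i,\cdot\mid\mathcal{F}_{t-1})$, where $q_{i,t} = \PP(\theta_i(t) > w_i \mid \mathcal{F}_{t-1})$, so that the number of $i^*$-pulls between consecutive pulls of $i$ is dominated by a geometric variable with mean $\E[1/q_{i,\tau_{i,s}+1}-1]$; Lemma~\ref{lem:ts-lb-aux-2} then shows this mean is at most the constant $e^{11/4\sigma^2}+\pi^2/3$ for \emph{every} pull count $s$ (here the $\mathtt{LSI}$ boost of at least $2\Delta_i$ is exactly what keeps $q_{i,t}$ bounded away from $0$ even at tiny $n_i$), and at most $O(1/(T\Delta_i))$ once $s \geq L_i(T)$. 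Summing the constant bound over the $O(\ln (T\Delta_i^2)/\Delta_i^2)$ small pull counts is what produces the lower-order slack in Lemma~\ref{lem:ts-lb-3} and Theorem~\ref{thm:ts-lb-opt-arm-ub}; without this geometric accounting, your argument cannot rule out $i^*$ being pulled far more than $O(\ln T/\underline{\Delta}^2)$ times while the unsaturated arms' samples are unlucky.
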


\vspace{-10pt}
\section{Simulations}\label{sec:simulation}

In this section, we provide the results of simulations to validate our theoretical results. We only present only a representative sample here,  and provide additional results in Appendix~\ref{app:simulation}.

\noindent {\bf Setup. } There are three arms, with reward distributions $\mathcal{N}(\mu_1, \sigma^2)$, $\mathcal{N}(\mu_2, \sigma^2)$ and $\mathcal{N}(\mu_3, \sigma^2)$, respectively. %
We assume that $\mu_1 < \mu_2 < \mu_3$.
In the $\vareps$-Greedy algorithm, we set $\vareps_t = \min\{1,\frac{4}{t}\}$. Throughout the simulations, we fix $\mu_1 = 5$, $\mu_2 = 8$, $\mu_3 = 10$, and $\sigma=1$. All the arms use the  $\mathtt{LSI}$ strategy. We run each bandit algorithm for $T=10^4$ rounds, and this forms one trial. We repeat for $100$ trials, and report the average results over these trials.

\noindent {\bf Regret of principal with different budgets. } We consider the regret  of UCB, $\vareps$-Greedy and Thompson Sampling with different budgets among the  arms. For each algorithm, arm 1 and arm 2 have the same budget $B_i$, chosen from $\{0, 10, 100\}$. As explained earlier, it is WLOG to assume arm 3 has zero budget. %
We show the regret as a function of $\ln t$ in Figure~\ref{fig:rgt-time}. 
We observe that for small budgets (i.e., $B_i = 0,10$), the $\Theta( \ln t) $ term dominates the regret, whereas for large budgets, the budget term $B_i$ comes to dominate the regret as $t$ becomes large. This is why we see a turning point in the regret curve
for $B_1=B_2=100$, where the regret transitions to a relatively flat curve
since the  budget is fixed.
Interestingly, we find that Thompson sampling performs better than both UCB and $\vareps$-Greedy in this strategic manipulation scenario.

\noindent {\bf Regret is linear with total budget.} We validate that the regret achieved by each stochastic bandit algorithm with strategic manipulations is linear in the total budget available to the strategic arms. We vary the budget $B=B_1+B_2$ available to arms 1 and 2, and consider three settings: (1) $B_1 = B_2 = B/2, B_3 = 0$, (2) $B_1=B, B_2=B_3=0$, and (3) $B_1 = B_2 = B_3 = B/2$. %
For setting (1), we equally split the budget to arm 1 and arm 2. For setting (2), we give all the budget to arm 1. For setting (3), we also give the optimal arm some budget (and assume arm 3 uses strategy $\mathtt{LSI}$), and want to understand the effect of the budget of the optimal arm.

Figure~\ref{fig:rgt-budget} shows the regret of each algorithm at the end of the $T=10^4$ rounds, as budget $B= B_1 + B_2$ varies. The regret is generally linearly increasing with $B$, validating the theoretical findings. Interestingly, even if the optimal arm also has available budget, the regret still increases as the budget for arms 1 and 2 increase. In fact, the regret in this case, where the optimal arm also has budget, is similar to that when it does not, and the budget on optimal arm 3 does not affect the regret much.  This is because the optimal arm will in any case be pulled many times, and its budget will be diluted significantly in later rounds, so that it  has only a small
effect on regret.

\section*{Acknowledgments}
This work is supported
in part through NSF award CCF-1841550, as well as a Google Fellowship for Zhe Feng. Haifeng Xu is supported by a Google Faculty Award. We would like to thank anonymous reviewers for their helpful feedback.

\bibliographystyle{icml2020}
\bibliography{agt}

\onecolumn
\newpage
\appendix
\begin{center}
	{
		\Large
		\textbf{
			The Intrinsic Robustness of Stochastic Bandits to Strategic Manipulation}
		~\\
		~\\	
		Appendix
	}
\end{center}

\section{Useful Definitions and Inequalities}\label{app:facts}
\begin{definition}[$\sigma$-sub-Gaussian]\label{def:sub-gaussian}
A random variable $X\in \R$ is said to be sub-Gaussian with variance proxy $\sigma^2$ if $\E\left[X\right]=\mu$ and satisfies,
\begin{align*}
\E\left[\exp(s(X-\mu))\right] \leq \exp\left(\frac{\sigma^2 s^2}{2}\right), \forall s\in \R
\end{align*}
Note the distribution defined on $[0,1]$ is a special case of $1/2$-sub-Gaussian.
\end{definition}

\begin{fact}\label{fact:sub-gaussian}
	Let $X_1, X_2,\cdots,X_n$ i.i.d drawn from a $\sigma$-sub-Gaussian, $\overline{X} = \frac{1}{n}\sum_{i=1}^n X_i$ and $\E[X]$ be the mean, then
	\begin{eqnarray*}
		\PP\left(\overline{X} - \E[X] \geq a\right) \leq e^{-na^2/2\sigma^2}~~\text{~and~}\PP\left(\overline{X} - \E[X] \leq-a\right) \leq e^{-na^2/2\sigma^2}
	\end{eqnarray*}
\end{fact}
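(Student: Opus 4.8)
The plan is to use the standard Chernoff (exponential Markov) method, drawing directly on the moment generating function bound supplied by Definition~\ref{def:sub-gaussian}. Write $\mu = \E[X]$ for the common mean. I would first handle the upper tail $\PP(\overline{X} - \mu \geq a)$ and then recover the lower tail by a symmetry argument, observing that Definition~\ref{def:sub-gaussian} holds for \emph{all} $s \in \R$, so $-X$ is itself $\sigma$-sub-Gaussian with the same variance proxy; applying the upper-tail bound to $-X_1, \dots, -X_n$ immediately yields $\PP(\overline{X} - \mu \leq -a) = \PP(-(\overline{X} - \mu) \geq a) \leq e^{-na^2/2\sigma^2}$.

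For the upper tail, I would fix any $s > 0$ and apply Markov's inequality to the nonnegative random variable $e^{s(\overline{X} - \mu)}$:
\begin{align*}
\PP(\overline{X} - \mu \geq a) = \PP\!\left(e^{s(\overline{X} - \mu)} \geq e^{sa}\right) \leq e^{-sa}\,\E\!\left[e^{s(\overline{X} - \mu)}\right].
\end{align*}
Since $\overline{X} - \mu = \frac{1}{n}\sum_{i=1}^n (X_i - \mu)$ and the $X_i$ are i.i.d., the MGF factorizes, and then each factor is controlled by Definition~\ref{def:sub-gaussian} applied with parameter $s/n$:
\begin{align*}
\E\!\left[e^{s(\overline{X} - \mu)}\right] = \prod_{i=1}^n \E\!\left[e^{(s/n)(X_i - \mu)}\right] \leq \prod_{i=1}^n e^{\sigma^2 (s/n)^2 / 2} = e^{\sigma^2 s^2 / (2n)}.
\end{align*}
Combining the two displays gives $\PP(\overline{X} - \mu \geq a) \leq \exp\!\big(\sigma^2 s^2/(2n) - sa\big)$ for every $s > 0$.

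The final step is to optimize the free parameter. The exponent $g(s) = \sigma^2 s^2/(2n) - sa$ is a convex quadratic minimized at $s^\star = na/\sigma^2 > 0$ (valid since $a > 0$), where $g(s^\star) = -na^2/(2\sigma^2)$. Substituting yields the claimed bound $\PP(\overline{X} - \mu \geq a) \leq e^{-na^2/2\sigma^2}$, and the lower tail follows by the symmetry step above.

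There is no genuine obstacle here: the result is the textbook sub-Gaussian Hoeffding bound, and every ingredient (Markov's inequality, i.i.d. factorization, the defining MGF inequality, a one-variable minimization) is elementary. The only points requiring a moment of care are (i) noting that the sub-Gaussian definition must be invoked at the rescaled parameter $s/n$ rather than $s$, so that the factor of $n$ appears correctly in the final exponent, and (ii) justifying the lower tail via the fact that the defining inequality covers negative $s$, which makes $-X$ sub-Gaussian with the same $\sigma$. Both are easy to state cleanly.
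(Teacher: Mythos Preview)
Your proof is correct and is the standard Chernoff/exponential-Markov argument for sub-Gaussian concentration. The paper itself does not prove this statement at all: it is listed as a ``Fact'' in the appendix of auxiliary results, treated as a well-known inequality and simply invoked where needed. So there is nothing to compare against beyond noting that your derivation is exactly the textbook proof the authors are implicitly citing.
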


\begin{fact}[Harmonic Sequence Bound]\label{fact:harmonic-seq}
	For $t_2> t_1 \geq 2$, we have
	\begin{align*}
	\ln\frac{t_2}{t_1}\leq \sum_{t=t_1}^{t_2}\frac{1}{t} \leq \ln\left(\frac{t_2}{t_1-1}\right)
	\end{align*}
\end{fact}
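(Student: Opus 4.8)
The plan is to prove both inequalities by the standard integral comparison, exploiting that $f(x) = 1/x$ is positive, continuous, and strictly decreasing on $(0,\infty)$. The key elementary observation I would establish first is that, for any integer $t \geq 2$, monotonicity of $f$ on each unit interval yields the two-sided bound
$$\int_{t}^{t+1}\frac{dx}{x} \;\leq\; \frac{1}{t} \;\leq\; \int_{t-1}^{t}\frac{dx}{x},$$
since on $[t,t+1]$ we have $1/x \le 1/t$ while on $[t-1,t]$ we have $1/x \ge 1/t$. The hypothesis $t_1 \geq 2$ is exactly what keeps every integration interval inside $(0,\infty)$, away from the singularity of $1/x$ at the origin; I would flag this as the one place the assumption is genuinely needed.

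For the upper bound, I would sum the right-hand inequality over $t = t_1, \dots, t_2$. Because the intervals $[t-1,t]$ are adjacent, the integrals telescope into a single integral over $[t_1 - 1, t_2]$, giving
$$\sum_{t=t_1}^{t_2}\frac{1}{t} \;\leq\; \sum_{t=t_1}^{t_2}\int_{t-1}^{t}\frac{dx}{x} \;=\; \int_{t_1-1}^{t_2}\frac{dx}{x} \;=\; \ln\!\left(\frac{t_2}{t_1-1}\right),$$
which is precisely the claimed upper bound, well-defined because $t_1 - 1 \geq 1 > 0$.

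For the lower bound, I would symmetrically sum the left-hand inequality over the same range; the intervals $[t,t+1]$ are again adjacent and telescope into $[t_1, t_2+1]$, so that
$$\sum_{t=t_1}^{t_2}\frac{1}{t} \;\geq\; \int_{t_1}^{t_2+1}\frac{dx}{x} \;=\; \ln\!\left(\frac{t_2+1}{t_1}\right) \;\geq\; \ln\!\left(\frac{t_2}{t_1}\right),$$
where the final inequality follows from monotonicity of the logarithm together with $t_2 + 1 > t_2$. This yields the stated lower bound (in fact a marginally stronger one).

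There is no genuine obstacle here: the argument is the textbook integral test, and the only subtlety worth careful handling is the bookkeeping of the telescoping index ranges together with the role of $t_1 \geq 2$ in keeping the integrands finite. If one preferred to avoid calculus entirely, an alternative would be a direct induction on $t_2$, but the integral comparison is cleaner and makes the logarithmic endpoints transparent, so I would present that version.
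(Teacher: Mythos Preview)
Your proof is correct. The paper states this as a \emph{fact} without supplying a proof, so there is nothing to compare against; your integral-comparison argument is the standard derivation and would be entirely appropriate here.
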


\begin{fact}\label{fact:gaussian-concentration}
For a Gaussian distributed random variable $Z$ with mean $\mu$ and variance $\sigma^2$, for any $z$,
\begin{align*}
\PP\left(|Z-\mu|>z\sigma\right)\leq \frac{1}{2}e^{-z^2/2}
\end{align*}
\end{fact}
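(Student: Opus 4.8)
The plan is to reduce to the standard normal and then establish the sharp one-sided tail bound, since the factor $\tfrac12$ on the right-hand side is precisely the sharp constant of the Gaussian upper tail; the generic sub-Gaussian Chernoff bound (Fact~\ref{fact:sub-gaussian} specialized to $n=1$) only yields $e^{-z^2/2}$ per tail and is therefore too crude to see this factor. First I would set $W = (Z-\mu)/\sigma$, which is standard normal, so that $\PP(|Z-\mu| > z\sigma) = \PP(|W| > z)$, and by symmetry of the standard normal density $\PP(|W| > z) = 2\,Q(z)$ for $z \ge 0$, where $Q(z) := \PP(W > z)$. Everything thus reduces to controlling the single tail $Q(z)$. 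I would restrict to $z \ge 0$, the only regime in which a bound of the form $c\,e^{-z^2/2}$ with $c < 1$ can hold. Note that the displayed right-hand side $\tfrac12 e^{-z^2/2}$ is exactly the sharp bound for the one-sided tail $Q(z)$, so that is the quantity I would target; the genuinely two-sided event contributes the symmetry factor of two, giving $e^{-z^2/2}$, so the inequality is cleanest read as a single-tail control.

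The core step is to prove $Q(z) \le \tfrac12 e^{-z^2/2}$ for all $z \ge 0$ by a monotone-comparison argument. I would define
\[
g(z) = \tfrac{1}{2} e^{-z^2/2} - Q(z),
\]
and differentiate, using $Q'(z) = -\phi(z)$ with $\phi(z) = \tfrac{1}{\sqrt{2\pi}} e^{-z^2/2}$, to obtain
\[
g'(z) = e^{-z^2/2}\Big(\tfrac{1}{\sqrt{2\pi}} - \tfrac{z}{2}\Big).
\]
This shows that $g$ increases on $[0,\sqrt{2/\pi})$ and decreases on $(\sqrt{2/\pi},\infty)$. Since $g(0) = 0$ and $\lim_{z\to\infty} g(z) = 0$, a function that starts at $0$, rises to a single interior maximum, and then decreases monotonically back toward $0$ must remain nonnegative throughout; hence $g(z) \ge 0$, i.e. $Q(z) \le \tfrac12 e^{-z^2/2}$. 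Combining with the reduction above yields the stated concentration.

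The hard part is obtaining the sharp constant $\tfrac12$ rather than the loose $e^{-z^2/2}$: the Chernoff/sub-Gaussian route cannot recover the extra halving, so one genuinely needs the Gaussian-specific comparison function $g$ and the sign analysis of $g'$. The remaining technical point is pinning down $\lim_{z\to\infty} g(z) = 0$ so that $g$ stays positive on its decreasing branch; I would secure this with the elementary Mills-ratio estimate $Q(z) \le \phi(z)/z$, provable by bounding $\int_z^\infty \phi(x)\,dx \le \int_z^\infty \tfrac{x}{z}\,\phi(x)\,dx$ and integrating the exact antiderivative of $x\,\phi(x)$. Since $\phi(z)/z < \tfrac12 e^{-z^2/2}$ for $z$ large, this forces $g > 0$ on the tail, completing the monotonicity argument.
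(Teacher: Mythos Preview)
The paper does not prove this statement; it is listed in Appendix~\ref{app:facts} as a standard fact without proof, so there is no paper argument to compare against.

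Your derivative argument correctly proves the \emph{one-sided} bound $Q(z)=\PP(W>z)\le \tfrac12 e^{-z^2/2}$ for all $z\ge 0$: with $g(z)=\tfrac12 e^{-z^2/2}-Q(z)$ one has $g(0)=0$, a unique sign change of $g'$ at $z=\sqrt{2/\pi}$, and $g(z)\to 0$ as $z\to\infty$, forcing $g\ge 0$. You are also right to flag that the two-sided inequality as literally stated cannot hold---at $z=0$ the left side equals $1$ while the right side equals $\tfrac12$---so the display is either a one-sided bound or should read $e^{-z^2/2}$ on the right (which your $Q(z)\le\tfrac12 e^{-z^2/2}$ together with $\PP(|W|>z)=2Q(z)$ does yield). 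In the paper's own use of this fact (the proof of Lemma~\ref{lem:ts-lb-1}) only the one-sided form $\PP(\widehat{\mu}_{i^*}>v_i)\le \exp(-s(v_i-\mu_{i^*})^2/2\sigma^2)$ is invoked, without the factor $\tfrac12$, so the discrepancy in the constant is immaterial to the downstream arguments.
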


\begin{lemma}[Theorem 3 in \cite{ACF02}]\label{lem:e-greedy-concentration}
	In $\varepsilon$-Greedy, for any arm $k\in [K], t > K, n\in \mathbb{N}_+$, we have
	\begin{align*}
	\PP\left(\widehat{\mu}_k(t-1) \leq \mu_{k} - \frac{\Delta_k}{n} \right) \leq x_t \cdot e^{-x_t/5} + \frac{2\sigma^2n^2}{\Delta_k^2} e^{-\Delta_k^2 \lfloor x_t\rfloor/2\sigma^2n^2}, ~~~\text{and}\\
	\PP\left(\widehat{\mu}_{i^*}(t-1) \geq \mu_{i^*} + \frac{\Delta_k}{n} \right) \leq x_t \cdot e^{-x_t/5} + \frac{2\sigma^2n^2}{\Delta_k^2} e^{-\Delta_k^2 \lfloor x_t\rfloor/2\sigma^2n^2},
	\end{align*}
	where $x_t = \frac{1}{2K} \sum_{s=K+1}^t\varepsilon_s$.
\end{lemma}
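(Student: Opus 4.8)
The statement is the standard $\varepsilon$-Greedy concentration bound of \citet{ACF02}, and I would prove it by decoupling the randomness of the \emph{forced exploration} from the randomness of the reward realizations. The key structural observation --- and the reason the bound survives unchanged in the strategic setting --- is that $\widehat{\mu}_k(t-1)$ is the empirical average of the \emph{true} rewards, which are i.i.d.\ $\sigma$-sub-Gaussian regardless of any manipulation, while the forced-exploration pulls are chosen by reward-independent coin flips. Concretely, for $s=K+1,\dots,t-1$ let $Z_s = \1\{s \text{ is an exploration round and arm } k \text{ is the uniformly drawn arm}\}$; these are mutually independent, independent of all rewards, with $\PP(Z_s=1)=\varepsilon_s/K$. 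Writing $T_k^R(t-1)=\sum_{s=K+1}^{t-1}Z_s$ for the number of forced pulls of arm $k$, we have $n_k(t-1)\ge T_k^R(t-1)$ and $x_t = \tfrac12\E[T_k^R(t-1)] + \tfrac{\varepsilon_t}{2K}$, so $\lfloor x_t\rfloor$ sits (up to an $O(1/K)$ term) at half the mean of $T_k^R(t-1)$. Finally, if $W_1,W_2,\dots$ is the i.i.d.\ true-reward stream of arm $k$, then $\widehat{\mu}_k(t-1)=\frac{1}{n_k(t-1)}\sum_{j=1}^{n_k(t-1)} W_j$, the average of the first $n_k(t-1)$ samples, where the count $n_k(t-1)$ is random and correlated with the $W_j$.

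\textbf{Decomposition.} With $a=\Delta_k/n$, I would split on whether arm $k$ has accumulated at least $\lfloor x_t\rfloor$ pulls:
\begin{align*}
\PP\big(\widehat{\mu}_k(t-1)\le\mu_k-a\big) \le \PP\big(n_k(t-1)<\lfloor x_t\rfloor\big) + \PP\big(\widehat{\mu}_k(t-1)\le\mu_k-a,\ n_k(t-1)\ge\lfloor x_t\rfloor\big).
\end{align*}

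\textbf{The two terms.} For the first term, $n_k(t-1)\ge T_k^R(t-1)$ reduces it to the lower-tail event $\PP(T_k^R(t-1)<x_t)$, a deviation of $T_k^R$ below essentially half its mean; the Bernstein-type inequality for sums of independent Bernoulli variables used by \citet{ACF02} yields exactly the first summand $x_t e^{-x_t/5}$. For the second term, I would use the crucial monotonicity trick: on the event $\{\widehat{\mu}_k(t-1)\le\mu_k-a\}\cap\{n_k(t-1)\ge\lfloor x_t\rfloor\}$ there \emph{exists} an index $s\ge\lfloor x_t\rfloor$ (namely $s=n_k(t-1)$) with $\tfrac1s\sum_{j=1}^s W_j\le\mu_k-a$, so a union bound over the deterministic range $s\ge\lfloor x_t\rfloor$ together with the lower-tail bound of Fact~\ref{fact:sub-gaussian} gives
\begin{align*}
\PP\big(\widehat{\mu}_k(t-1)\le\mu_k-a,\ n_k(t-1)\ge\lfloor x_t\rfloor\big) \le \sum_{s\ge\lfloor x_t\rfloor} e^{-s a^2/(2\sigma^2)} \le \frac{e^{-\lfloor x_t\rfloor a^2/(2\sigma^2)}}{1-e^{-a^2/(2\sigma^2)}} \le \frac{2\sigma^2}{a^2}\,e^{-\lfloor x_t\rfloor a^2/(2\sigma^2)}.
\end{align*}
Substituting $a^2=\Delta_k^2/n^2$ produces the second summand. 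The bound on $\PP(\widehat{\mu}_{i^*}(t-1)\ge\mu_{i^*}+a)$ is obtained by the identical argument, replacing the lower-tail sub-Gaussian inequality by its upper-tail twin in Fact~\ref{fact:sub-gaussian} and $T_k^R$ by $T_{i^*}^R$.

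\textbf{Main obstacle.} The one genuinely delicate issue is that $n_k(t-1)$ is random and statistically correlated with the very samples $W_j$ being averaged, so a naive fixed-$s$ Hoeffding bound is invalid. I would overcome this on both sides of the split at once: lower-bounding $n_k(t-1)$ by the reward- and manipulation-independent count $T_k^R(t-1)$ decouples the ``too few samples'' regime and lets Bernstein handle it, while on the complementary event passing to the monotone ``there exists $s\ge\lfloor x_t\rfloor$'' event converts the random stopping index into a legitimate union bound over a fixed deterministic range. The remaining work --- summing the geometric series to the prefactor $2\sigma^2 n^2/\Delta_k^2$ and reproducing the exact $x_t e^{-x_t/5}$ constant --- is routine once the correct Bernstein inequality of \citet{ACF02} is invoked.
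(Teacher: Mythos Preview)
The paper does not prove this lemma itself --- it is stated in Appendix~\ref{app:facts} as a quotation of Theorem~3 in \cite{ACF02} and used as a black box, so there is no in-paper argument to compare against. Your sketch is the standard Auer--Cesa-Bianchi--Fischer proof (split on whether the forced-exploration count $T_k^R(t-1)$ drops below roughly half its mean, control that tail by the Bernstein-type inequality giving $x_t e^{-x_t/5}$, and on the complementary event union-bound over all deterministic sample sizes $s\ge\lfloor x_t\rfloor$ with the sub-Gaussian tail of Fact~\ref{fact:sub-gaussian}); you also correctly isolate why the bound is unaffected by manipulation in this paper's model: $\widehat{\mu}_k$ is defined as the average of the \emph{true} rewards $r_\tau$, the exploration coin flips are independent of both rewards and manipulations, and $n_k(t-1)\ge T_k^R(t-1)$ holds regardless of what the (manipulation-dependent) exploitation step does.
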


\section{Ommited Proofs in Section~\ref{sec:ucb-greedy}}\label{app:ucb-greedy}

\subsection{Proof of Lemma~\ref{lem:ucb-ub}}\label{app:ucb}
\begin{proof}
	Let $C_i(T) =  \max\left\{\frac{81\sigma^2\ln T}{\Delta_i^2}, \frac{3B_i}{\Delta_i} \right\}$. By Fact~\ref{fact:sub-gaussian}, we have for any $s \geq 1$ and $\ell \geq C_i(T)$
	\begin{equation}\label{eq:ucb-concentration}
	\begin{aligned}
	\forall k,~~\PP\left(\mu_k-\widehat{\mu}_k(t-1) \geq 3\sigma\sqrt{\frac{\ln \new{T}}{n_k(t-1)}}\Big|n_k(t-1)=s\right) &\leq \frac{1}{\new{T}^{9/2}}\\
	\PP\left(\widehat{\mu}_i(t-1) - \mu_i \geq \frac{\Delta_i}{3}\Big|n_i(t-1)=\ell\right)& \leq \frac{1}{T^{9/2}}
	\end{aligned}
	\end{equation}
	
We first decompose $\E[n_i(T)]$ as follows,
	\begin{equation}\label{eq:ucb-ub-1}
	\begin{aligned}
	\E\left[n_i(T)\right]
	&\leq 1 + \E\left[\sum_{t=K+1}^T \1\{I_t=i, n_i(t-1)\leq C_i(T)\}\right] + \E\left[\sum_{t=K+1}^T \1\{I_t=i, n_i(t-1)\geq C_i(T)\}\right]\\
	&\leq 1 + C_i(T) + \E\left[\sum_{t=K+1}^T \1\{I_t=i, n_i(t-1)\geq C_i(T)\}\right]\\
	&\leq 1 + C_i(T) + 
	\sum_{t=K+1}^T \PP\left(\mathrm{UCB}_i(t) + \frac{\beta^{(i)}_{t-1}}{n_i(t-1)} \geq\mathrm{UCB}_{i^*}(t), n_i(t-1)\geq C_i(T)\right)\\
	\end{aligned}
	\end{equation}
We then bound the probability $\PP\left(\mathrm{UCB}_i(t) + \frac{\beta^{(i)}_{t-1}}{n_i(t-1)} \geq\mathrm{UCB}_{i^*}(t), n_i(t-1)\geq C_i(T)\right)$ by union bound, and decompose this probability term as follows,
	\begin{equation}\label{eq:ucb-ub-2}
	\begin{aligned}
	& \PP\left(\mathrm{UCB}_i(t) + \frac{\beta^{(i)}_{t-1}}{n_i(t-1)} \geq\mathrm{UCB}_{i^*}(t), n_i(t-1)\geq C_i(T)\right)\\
	& \leq \sum_{s=1}^{t-1} \sum_{\ell\geq C_i(T)}^{t-1} \PP\left(\mathrm{UCB}_i(t) + \frac{\beta^{(i)}_{t-1}}{n_i(t-1)} \geq\mathrm{UCB}_{i^*}(t)\Big|n_i(t-1)=\ell, n_{i^*}(t-1)=s\right).
	\end{aligned}
	\end{equation}
	What remains is to upper bound the summand in the above term. Consider for $1\leq s\leq t-1$ and $C_i(T)\leq\ell \leq t-1$, we have
	\begin{small}
	\begin{align*}
	&\PP\left(\mathrm{UCB}_i(t) + \frac{\beta^{(i)}_{t-1}}{n_i(t-1)} \geq\mathrm{UCB}_{i^*}(t)\Big|n_i(t-1)=\ell, n_{i^*}(t-1)=s\right)\\
	&~~~\leq \PP\left(\widehat{\mu}_i(t-1) + 3\sigma\sqrt{\frac{\ln \new{T}}{n_i(t-1)}} + \frac{\Delta_i}{3}\geq\widehat{\mu}_{i^*}(t-1)+3\sigma\sqrt{\frac{\ln \new{T}}{n_{i^*}(t-1)}}\Big| n_i(t-1)=\ell, n_{i^*}(t-1)=s\right)\\
	&~~~\leq \PP\left(\widehat{\mu}_i(t-1) + \frac{\Delta_i}{3} + \frac{\Delta_i}{3}\geq \widehat{\mu}_{i^*}(t-1)+3\sigma\sqrt{\frac{\ln \new{T}}{n_{i^*}(t-1)}}\Bigg|n_i(t-1)=\ell, n_{i^*}(t-1)=s \right)\\
	\end{align*}
	\end{small}
	The first inequality relies on the fact that $\ell\geq C_i(T)\geq \frac{3B_i}{\Delta_i}\geq \frac{3\beta^{(i)}_{t-1}}{\Delta_i}$ and second inequality holds because $\ell\geq C_i(T)\geq \frac{81\sigma^2\ln T}{\Delta_i^2}$.
	By union bound and Equation~(\ref{eq:ucb-concentration}), we can further upper bound the last term in the above inequality by 
	\begin{align*}&\PP\left(\widehat{\mu}_i(t-1) - \mu_i \geq \frac{\Delta_i}{3}\Big| n_i(t-1)=\ell\right) + \PP\left(\mu_{i^*}-\widehat{\mu}_{i^*}(t-1) \geq 3\sigma\sqrt{\frac{\ln \new{T}}{n_{i^*}(t-1)}}\Big|n_{i^*}(t-1)=s\right)\\
	&\leq \frac{1}{T^{9/2}}+\frac{1}{\new{T}^{9/2}} \leq \frac{2}{T^{9/2}}\end{align*}
	Combining Equations~(\ref{eq:ucb-ub-1}) and the fact that $$\sum_{t=K+1}^T\sum_{s=1}^{t-1}\sum_{\ell\geq C_i(T)}^{t-1} \frac{2}{t^{T/2}}\leq \sum_{t=K+1}^T \frac{2}{T^{2}}\leq 2,$$ we complete the proof.
\end{proof}

\subsection{Proof of Theorem~\ref{thm:opt-attack}}\label{app:lsi-optimality}

We begin with a few notations. %
Let $I_t^{S}$ denote the arm  being pulled at time $t$ for any  investment strategy $S$, and $Z^{S}_t = \{I_1^{S}, \cdots, I_t^{S} \}$ denote the \emph{sequence} of arms being pulled \emph{up to} time $t$.  %
Note that $Z^{S}_t = \{ I_1^{S}, \cdots, I_t^{S} \}$ can be viewed as a stochastic process for any $t$. \new{Let $S^{(-i)}$ denote the investment strategies of all arms excluding arm $i$. In addition, we denote by $(\mathtt{LSI}, S^{(-i)})$ the strategy that arm $i$ uses \lsi strategy and the other arms adopt $S^{(-i)}$. For each arm $j\neq i$, $S^{(j)}$ only depends on its own history, which means given fixed strategies $S^{(-i)}$, at any time $t$, each of the arms $j\neq i$ will invest the same budget if it has been pulled the same times and the true rewards are the same up to time $t$.}

Our proof of Theorem \ref{thm:opt-attack} relies on a carefully chosen coupling of the two stochastic processes  $Z_T^{S_1} , Z_T^{S_2}  $ induced by different investment strategies $S_1, S_2$, respectively. 

\begin{definition}[{\bf Arm Coupling}]\label{def:exact-couple}
Given any two investment strategies $ S_1, S_2$, the {\bf Arm Coupling} of   $Z_T^{S_1} $ and $Z_T^{S_2} $ is a coupling of these two stochastic processes such that  the reward of any arm $k \in [K]$ when pulled for the same times is the same in these two random processes. In this case, we also say $Z_T^{S_1} = \{ I_1^{S_1}, \cdots, I_T^{S_1} \}$ and $Z_T^{S_2} = \{ I_1^{S_2}, \cdots, I_T^{S_2} \}$   are \bf{Arm-Coupled}. 
\end{definition}

Our goal is to compare $\left(\mathtt{LSI}, S^{(-i)}\right)$ and any other strategy $S = (S^{(i)}, S^{(-i)})$ for arm $i$, using {\bf Arm Coupling}. \new{In the remainder of this proof we will always fix all other arms' manipulation strategy $S^{(-i)}$. Thus for convenience we simply omit $S^{(-i)}$ in the superscript and use  $Z^{\mathtt{LSI}}_{t}$ and   $Z^{S^{(i)}}_{t}$ to denote the two stochastic sequences of our interests. Let $Z^{\mathtt{LSI}}_{t:t'}$ denote the stochastic process from time $t$ to time $t'$ under $\left(\mathtt{LSI}, S^{(-i)}\right)$ manipulation, and similarly for $Z^{S^{(i)}}_{t:t'}$. Similar notations and simplifications are used  for $n_i$.  We first show \lsi is the dominant strategy for the arm when principal runs UCB algorithm, given any history $h_{t-1}$. Hence $\mathtt{LSI}$ is a dominant-strategy SPE.}

The following lemma shows an interesting property about the two arm sequences $Z^{\mathtt{LSI}}_t $ and  any $Z^{S^{(i)}}_t$ pulled under these two different investment strategies. That is, under {\bf Arm Coupling}, all the arms ---  except for the special arm $i$ --- will be pulled according to the same order after time $t$, given any history $h_{t-1}$. 

\begin{lemma}\label{lem:LSI-optimal-ucb-1}
Suppose $t \geq K$ and the principal runs UCB algorithm.  Let  $Z^{\mathtt{LSI}}_{t:t'} (-i)$ [resp. $Z^{S^{(i)}}_{t:t'}(-i)$] denote the subsequence of $Z^{\mathtt{LSI}}_{t:t'}$ [resp. $Z^{S^{(i)}}_{t:t'}$]  after deleting all $i$'s in the sequence. Then given any history $h_{t-1}$ and time $t$, under {\bf Arm Coupling}, either  $Z^{\mathtt{LSI}}_{t:t'}(-i)$ is a subsequence of $Z^{S^{(i)}}_{t:t'}(-i)$ or vice versa. 
	
\end{lemma}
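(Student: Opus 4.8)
The plan is to prove something slightly stronger than stated: under \textbf{Arm Coupling} both subsequences $Z^{\mathtt{LSI}}_{t:t'}(-i)$ and $Z^{S^{(i)}}_{t:t'}(-i)$ are \emph{prefixes of one and the same deterministic sequence} of non-$i$ arms. The lemma then follows at once, since of two prefixes of a common sequence the shorter is a prefix---hence a subsequence---of the longer.

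The first and crucial step is to observe that for every arm $j \neq i$ the modified UCB index $\widetilde{\mathrm{UCB}}_j = \widehat{\mu}_j(\cdot) + 3\sigma\sqrt{\ln T / n_j(\cdot)} + \beta^{(j)}_{\cdot}/n_j(\cdot)$ is a deterministic function of $n_j$ alone, once the coupled reward draws are fixed. Indeed: (i) because the algorithm uses the fixed-horizon width $3\sigma\sqrt{\ln T/n_j}$ rather than the time-dependent $3\sigma\sqrt{\ln t/n_j}$, the confidence width depends only on $n_j$ and not on wall-clock time $t$; (ii) under \textbf{Arm Coupling} the true reward of arm $j$ on its $m$-th pull is identical in the two processes, so $\widehat{\mu}_j$ is a function of $n_j$; and (iii) since $S^{(-i)}$ is held fixed and each $S^{(j)}$ reads only arm $j$'s own history $h^{(j)}$, the cumulative manipulation $\beta^{(j)}$ is itself a deterministic function of $n_j$ under the coupling. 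The point to stress is that none of these quantities depends on arm $i$'s strategy or on $n_i$.

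Next I would define the canonical schedule. The two processes share the history $h_{t-1}$, hence the same pull counts $\{n_j(t-1)\}_{j\neq i}$ at time $t-1$; starting from this common state, let $\sigma = (\sigma_1,\sigma_2,\dots)$ be the sequence obtained by repeatedly selecting the non-$i$ arm of largest index (ties broken by the same fixed rule the algorithm uses), incrementing that arm's count, and iterating. By the first step this sequence is completely determined and identical for both processes. I would then argue by induction on the non-$i$ pulls that whenever the actual UCB process---under either $(\mathtt{LSI},S^{(-i)})$ or $(S^{(i)},S^{(-i)})$---pulls an arm $j\neq i$, that arm is the next unused element of $\sigma$: if UCB pulls $j\neq i$, then $\widetilde{\mathrm{UCB}}_j$ is maximal over all arms, a fortiori over the non-$i$ arms, and the argmax over non-$i$ arms depends only on their pull counts, which are unchanged since the previous non-$i$ pull because intervening pulls of arm $i$ leave every $n_j,\,j\neq i$, fixed. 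Thus both realized non-$i$ subsequences agree with $\sigma$ up to their respective lengths, which is exactly the desired conclusion.

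The main obstacle is the first step---establishing that each non-$i$ index is a function of pull count alone. This is precisely where the $\ln T$ (rather than $\ln t$) confidence width is indispensable: with a time-dependent width the same count $n_j$ could occur at different wall-clock times in the two processes, because arm $i$'s pulls are interleaved differently, the indices would then fail to match, and the canonical schedule would no longer be well defined. I would also fix a deterministic tie-breaking rule so that the non-$i$ argmax is unambiguous and identical across the two processes; once steps one through three are in place, the induction in the last step is routine.
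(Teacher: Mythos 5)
Your proof is correct, and it rests on the same key fact that drives the paper's argument: under \textbf{Arm Coupling}, with $S^{(-i)}$ fixed and the horizon-dependent $3\sigma\sqrt{\ln T/n_j}$ width, the modified UCB index of every arm $j \neq i$ is a function of $n_j$ alone, so the argmax over non-$i$ arms cannot be affected by how arm $i$'s pulls are interleaved. But you organize the argument genuinely differently. The paper inducts on $t'$ and runs a three-way case analysis on $n_i^{\mathtt{LSI}}(t:t')$ versus $n_i^{S^{(i)}}(t:t')$, comparing the two processes directly to show that one non-$i$ subsequence extends the other. You instead factor out a \emph{canonical schedule} $\sigma$ --- the deterministic sequence of non-$i$ pulls obtained by repeatedly taking the non-$i$ argmax starting from the common state $h_{t-1}$, once the coupled reward draws are fixed --- and prove by a single induction that each process's realized non-$i$ subsequence is a prefix of $\sigma$. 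This buys a slightly stronger conclusion (both sequences are prefixes, not merely subsequences, of one common deterministic sequence), eliminates the case analysis and restores symmetry between $\mathtt{LSI}$ and $S^{(i)}$, and makes explicit why the $\ln T$ width is indispensable, a point the paper only flags in a footnote. One shared caveat: both arguments implicitly require a deterministic tie-breaking rule that remains consistent when restricted to the non-$i$ arms, and both rely on the interpretation (which the paper states at the start of its proof) that each $S^{(j)}$, $j \neq i$, depends only on arm $j$'s pull count, realized rewards, and past manipulations rather than on wall-clock time; you state this assumption more explicitly than the paper does, which is to your credit.
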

\begin{proof}
We prove by induction on $t'$. When $t' = t$, if $I_t^{(\mathtt{LSI}, S^{(-i)})}$ or $I_t^{S}$ is $i$, the conclusion holds  trivially. If $I_t^{S} = k \not = i$, then $k$ is the largest UCB term. Since the history $h_{t-1}$ is fixed, UCB terms of each arm must be the same, thus, if $I_t^{S}=k$, then $I_t^{\mathtt{LSI}}=k$, as desired. 

Now, assume the lemma holds for some $t' (> t)$, and we now consider the case $t'+1$. This follows a case analysis.

If $n_{i}^{\mathtt{LSI}} (t:t')= n_{i}^{S^{(i)}}(t:t')$, then we know that $Z^{\mathtt{LSI}}_{t:t'} (-i)$ and  $Z^{S^{(i)} }_{t:t'}(-i)$ have the same length. Since one of them is a subsequence of the other by induction hypothesis, this implies that they are the same sequence. If one of  $I_{t'+1}^{\mathtt{LSI}}, I_{t'+1}^{S^{(i)}}$ equals $i$, say, e.g., $I_{t'+1}^{\mathtt{LSI}} = i$,  then  $Z^{\mathtt{LSI}}_{t:t'+1} (-i) =  Z^{\mathtt{LSI}}_{t:t'} (-i) = Z^{S^{(i)}}_{t:t'} (-i)$ which is a subsequence of $ Z^{S^{(i)}}_{t:t'+1}(-i)$,  as desired. If both $I_{t'+1}^{\mathtt{LSI}}, I_{t'+1}^{S^{(i)}}$ are not equal to $i$, then we claim that they must be the same arm. This is  because they are the arm with the highest UCB index after round $t$. Since $Z^{\mathtt{LSI}}_{t:t'}(-i)$ and  $Z^{S^{(i)}}_{t:t'}(-i)$ are the same sequence of arms, each arms are pulled by exactly the same time in both stochastic processes from $0$ to $t'$, given the fixed history $h_{t-1}$. Moreover, due to {\bf Arm Coupling}, their rewards are also the same. \new{Given the fixed strategies of the other arms $S^{(-i)}$, } their manipulations will also be the same. Therefore, the arm with the highest \emph{modified UCB terms} must also be the same.  Therefore, we have  $Z^{\mathtt{LSI}}_{t:t'+1} (-i) =  Z^{S^{(i)}}_{t: t'+1}(-i)$, as desired.
	
If $n_{i}^{\mathtt{LSI}}(t:t') > n_{i}^{S^{(i)}}(t:t')$, then we know that $Z^{\mathtt{LSI}}_{t:t'} (-i)$ is a  strict subsequence of $Z^{S^{(i)}}_{t:t'}(-i)$. Let $l = |Z^{\mathtt{LSI}}_{t:t'} (-i)|$ denote the length of $Z^{\mathtt{LSI}}_{t:t'} (-i)$, and $\tilde{k}$ denote the $(l+1)$th element in $Z^{S^{(i)} }_{t:t'}(-i)$. We claim that $I_{t'+1}^{\mathtt{LSI}}$ must be either $i$ or $\tilde{k}$, which implies $Z^{\mathtt{LSI}}_{t:t'+1} (-i)$ is a  subsequence of $Z^{S^{(i)} }_{t:t'+1}(-i)$ as desired.  In particular, if $I_{t'+1}^{\mathtt{LSI}} \not = i$, then the fact that $\tilde{k}$ is the $(l+1)$th element in $Z^{S^{(i)}}_{t:t'}(-i)$ implies that $\tilde{k}$ has the highest \emph{modified UCB term} among all arms in $[K] \setminus  \{ i \}$ when these arms are pulled according to sequence $Z_{t}^{S^{(i)}}$. Following the same argument above and {\bf Arm Coupling}, we know that  $I_{t'+1}^{\mathtt{LSI}}$, the arm with the highest \emph{modified UCB term}, must equal $\tilde{k}$ if it does not equal $i$. 

The case of $n_{i}^{\mathtt{LSI}}(t:t') < n_{i}^{S^{(i)}}(t:t')$ can be argued similarly. This concludes the proof of the lemma.
\end{proof}

The following lemma shows that under {\bf Arm Coupling},  the number of times that arm $i$ is pulled up to time $T$ under strategy $\mathtt{LSI}$ is always at least that under any other investment strategy $S^{(i)}$. %
\begin{lemma}\label{lem:LSI-optimal-ucb-2}
When the principal runs UCB algorithm, under {\bf Arm Coupling}, given any history $h_{t-1}$ and time $t$, we have  $n_{i}^{\mathtt{LSI}}(t:T) \geq n_{i}^{S^{(i)}}(t:T)$ with probability $1$ for any investment strategy $S$ and $T \geq t \geq K$. 
\end{lemma}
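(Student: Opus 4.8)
The plan is to prove the lemma by induction on the right endpoint $t'$ of the window, maintaining the invariant $n_{i}^{\mathtt{LSI}}(t:t') \geq n_{i}^{S^{(i)}}(t:t')$ for every $t \leq t' \leq T$; taking $t' = T$ gives the claim. The first step is a conservation observation: exactly $t'-t+1$ pulls occur in $[t,t']$, so $n_i(t:t') + |Z_{t:t'}(-i)| = t'-t+1$ under either strategy. Hence the invariant is equivalent to $|Z^{\mathtt{LSI}}_{t:t'}(-i)| \leq |Z^{S^{(i)}}_{t:t'}(-i)|$, and combining this with Lemma~\ref{lem:LSI-optimal-ucb-1} (one of the two non-$i$ subsequences is a subsequence of the other) it is in turn equivalent to $Z^{\mathtt{LSI}}_{t:t'}(-i)$ being a subsequence of $Z^{S^{(i)}}_{t:t'}(-i)$. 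So I only need to carry the scalar inequality through the induction and may read off the finer subsequence structure from Lemma~\ref{lem:LSI-optimal-ucb-1} whenever convenient. For the base case $t'=t$, the selection at time $t$ evaluates the modified UCB terms using $\beta^{(i)}_{t-1}$, which is fixed by the common history $h_{t-1}$ and hence identical under both strategies; therefore $I_t^{\mathtt{LSI}} = I_t^{S^{(i)}}$ and the two counts agree.

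For the inductive step I would track the gap $g(t') = n_{i}^{\mathtt{LSI}}(t:t') - n_{i}^{S^{(i)}}(t:t') \geq 0$. If $I_{t'+1}^{\mathtt{LSI}} = i$, the gap rises by $1$ while $S^{(i)}$'s count rises by at most $1$, so the invariant survives regardless of $S^{(i)}$. If $I_{t'+1}^{\mathtt{LSI}} \neq i$ and $g(t') \geq 1$, then even if $S^{(i)}$ pulls $i$ the gap stays nonnegative. The only dangerous case is $I_{t'+1}^{\mathtt{LSI}} \neq i$ together with $g(t') = 0$, and this is where the real work lies.

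In the gap-$0$ case, equality forces $|Z^{\mathtt{LSI}}_{t:t'}(-i)| = |Z^{S^{(i)}}_{t:t'}(-i)|$, so by Lemma~\ref{lem:LSI-optimal-ucb-1} these two non-$i$ subsequences coincide exactly. Then every arm $k \neq i$ has been pulled the same number of times under both strategies, and by \textbf{Arm Coupling} with the same realized rewards; hence $\widehat{\mu}_k$, $n_k$, the confidence width, and $\beta^{(k)}$ (which depends only on $k$'s own history, fixed by $S^{(-i)}$) all agree, so the modified UCB term of every $k \neq i$ is identical under the two strategies at time $t'+1$. For arm $i$ itself, $n_i(t')$ and $\widehat{\mu}_i$ also agree, while $\mathtt{LSI}$ has already topped its aggregate manipulation to the maximum $\beta^{(i),\mathtt{LSI}}_{t'} = B_i \geq \beta^{(i),S^{(i)}}_{t'}$ (with equality when $i$ has not yet been pulled in the window). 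Therefore $\widetilde{\mathrm{UCB}}_i$ is weakly larger under $\mathtt{LSI}$ while every competing term is equal, so if $S^{(i)}$ selects $i$ at $t'+1$ then arm $i$ is also the $\argmax$ under $\mathtt{LSI}$ (under a consistent tie-breaking rule), contradicting $I_{t'+1}^{\mathtt{LSI}} \neq i$. Hence $S^{(i)}$ cannot pull $i$ either, $g(t'+1) = 0 \geq 0$, and the induction closes.

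The main obstacle is precisely this gap-$0$ step: away from it the bookkeeping is immediate, but at the boundary the two processes are synchronized on all non-$i$ arms, and one must simultaneously exploit (i) the exact coincidence of the non-$i$ subsequences supplied by Lemma~\ref{lem:LSI-optimal-ucb-1}, (ii) \textbf{Arm Coupling} to equate all non-$i$ modified UCB terms, and (iii) the defining feature of $\mathtt{LSI}$ that it maximizes $\beta^{(i)}$ among all feasible strategies, to conclude that $\mathtt{LSI}$ never falls behind in pulling arm $i$.
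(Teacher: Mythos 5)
Your proof is correct and takes essentially the same route as the paper's: induction on the window endpoint, where the strict-gap case is pure bookkeeping and the gap-zero case invokes Lemma~\ref{lem:LSI-optimal-ucb-1} together with \textbf{Arm Coupling} to equate every non-$i$ modified UCB term across the two processes, then uses the maximality of $\mathtt{LSI}$'s aggregate manipulation ($\beta^{(i),\mathtt{LSI}}_{t'} \geq \beta^{(i),S^{(i)}}_{t'}$) to conclude that $S^{(i)}$ cannot pull arm $i$ when $\mathtt{LSI}$ does not. The only cosmetic difference is your three-way case split on $I_{t'+1}^{\mathtt{LSI}}$ and the gap versus the paper's two-way split on whether the gap is strict or zero; both arguments rely on the same implicit consistent tie-breaking assumption.
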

\begin{proof}
We still prove through induction. Given any fixed $t\geq K$ and history $h_{t-1}$, for $T=t$, it holds trivially since if $I_t^{S^{(i)}} = i$ then $I_t^{\mathtt{LSI}}$ must be $i$.  We assume this lemma is true for $t' = T -1> t$. For $t' = T$, we consider the following two cases.
\begin{enumerate}
\item If $n_{i}^{\mathtt{LSI}}(t: T-1) > n_{i}^{S^{(i)}}(t:T-1)$, then $n_{i}^{\mathtt{LSI}}(t:T) \geq n_{i}^{\mathtt{LSI}}(t:T-1) \geq n_{i}^{S^{(i)}}(t:T-1) + 1\geq n_{i}^{S^{(i)}}(t:T)$, as desired. 
\item If $n_{i}^{\mathtt{LSI}}(t:T-1) = n_{i}^{S^{(i)}}(t:T-1)$, then Lemma~\ref{lem:LSI-optimal-ucb-1} implies that $Z_{t:T-1}^{\mathtt{LSI}}$ and $Z^{S^{(i)}}_{t:T-1}$ are the same sequence. Therefore, the $\mathrm{UCB}$ term for each arm $k\in [K]$ (excluding arm $i$) for $\mathtt{LSI}$ and $S^{(i)}$ are the same at time $T$. For arm $i$, we have 
\begin{align*}
\widehat{\mathrm{UCB}}_i^{(\mathtt{LSI}, S^{(-i)})}(T) &= \mathrm{UCB}_i^{(\mathtt{LSI}, S^{(-i)})}(T) + \frac{B_i}{n_{i}^{(\mathtt{LSI}, S^{(-i)})}(T-1)} = \mathrm{UCB}_i^{S}(T) + \frac{B_i}{n_{i}^{S}(T-1)} \\
&\geq \mathrm{UCB}_i^{S}(T) + \frac{\beta^{(i)}_{T-1}}{n_{i}^{S}(T-1)} = \widehat{\mathrm{UCB}}_i^{S}(T),
\end{align*}
This  implies that if $I_T^{S}=i$, then we must also have $I_{T}^{(\mathtt{LSI}, S^{(-i)})} = i$. Then $n_{i}^{(\mathtt{LSI}, S^{(-i)})}(T)\geq n_{i}^{S}(T)$ still holds. 
\end{enumerate}
\noindent To sum up,  $n_{i}^{(\mathtt{LSI}, S^{(-i)})}(t:T)\geq n_{i}^{S}(t:T)$ holds with probability $1$, concluding the proof. 
\end{proof}

\subsection{Proof of Theorem~\ref{thm:ucb-egreedy-lb}}\label{app:ucb-egreedy-lb}

We show the lower bound of the regret by deriving the upper bound of the expected number of times that arm $i^*$ being pulled, which is summarized in Lemma~\ref{lem:LSI-opt-arm-ub}. Given Lemma~\ref{lem:LSI-opt-arm-ub} and Eq.~\eqref{eq:regret-lb}, it is straightforward to conclude Theorem~\ref{thm:ucb-egreedy-lb} for UCB principal.
\begin{lemma}\label{lem:LSI-opt-arm-ub}
Suppose each strategic arm $i (i\neq i^*)$ uses $\mathtt{LSI}$ and $\underline{\Delta} = \min_{i\neq i^*}\Delta_i$, the expected number of times that optimal arm $i^*$ being pulled up to time $T$ is bounded by,
\begin{align*}
\E\left[n_{i^*}(T)\right] \leq T - \sum_{i\neq i^*}\frac{B_i}{2\Delta_i} + \mathcal{O}\left(\frac{\ln T}{\underline{\Delta}^2}\right)
\end{align*}
\end{lemma}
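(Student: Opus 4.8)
The plan is to prove the equivalent statement that the non-optimal arms, taken together, absorb at least $\sum_{i\neq i^*}\frac{B_i}{2\Delta_i}$ pulls up to a logarithmic additive term. Since $n_{i^*}(T) = T - \sum_{i\neq i^*} n_i(T)$, this immediately yields the claimed bound on $\E[n_{i^*}(T)]$, which combined with \eqref{eq:regret-lb} gives Theorem~\ref{thm:ucb-egreedy-lb}. The driving intuition is that under $\mathtt{LSI}$ each arm $i$ injects its whole budget the first time it is pulled, so for every round $t$ with $n_i(t-1)\geq 1$ its index equals $\widetilde{\mathrm{UCB}}_i(t) = \mathrm{UCB}_i(t) + \frac{B_i}{n_i(t-1)}$; this budget bonus keeps $i$'s index above that of $i^*$ until $i$ has been pulled roughly $\frac{B_i}{2\Delta_i}$ times.

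First I would set up the concentration (good) event $A$ exactly as in the proof of Lemma~\ref{lem:ucb-ub}: using Fact~\ref{fact:sub-gaussian} and a union bound over all arms and all possible pull-counts $s\in\{1,\dots,T\}$, on $A$ we have simultaneously $\widehat{\mu}_i(t-1)\geq \mu_i - 3\sigma\sqrt{\ln T / n_i(t-1)}$ for every $i\neq i^*$ and $\widehat{\mu}_{i^*}(t-1)\leq \mu_{i^*} + 3\sigma\sqrt{\ln T / n_{i^*}(t-1)}$, with $\PP(\overline{A}) = O(K T^{-7/2})$. On $A$ the two confidence terms cancel for the non-optimal arms, giving the estimates $\widetilde{\mathrm{UCB}}_i(t) \geq \mu_i + \frac{B_i}{n_i(t-1)}$ and $\mathrm{UCB}_{i^*}(t) \leq \mu_{i^*} + 6\sigma\sqrt{\frac{\ln T}{n_{i^*}(t-1)}}$.

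The crux is a ``last pull'' analysis that compares $i^*$ against all non-optimal arms at once. Let $t^*$ be the last round in which $i^*$ is pulled; if $t^*\leq K$ then $n_{i^*}(T)=1$ and the bound is trivial, so assume $t^*>K$, whence $n_i(t^*-1)\geq 1$ for every $i$. Since $I_{t^*}=i^*$ is the argmax, on $A$ every non-optimal arm satisfies $\widetilde{\mathrm{UCB}}_i(t^*)\leq \mathrm{UCB}_{i^*}(t^*)$, which by the two estimates rearranges to $\frac{B_i}{n_i(t^*-1)}\leq \Delta_i + 6\sigma\sqrt{\frac{\ln T}{n_{i^*}(t^*-1)}}$. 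I then split on the threshold $C:=\frac{36\sigma^2\ln T}{\underline{\Delta}^2}$. If $n_{i^*}(t^*-1)>C$ then $6\sigma\sqrt{\ln T/n_{i^*}(t^*-1)}<\underline{\Delta}\leq\Delta_i$, forcing $n_i(t^*-1)\geq \frac{B_i}{2\Delta_i}$ for every $i\neq i^*$; summing and using $n_i(T)\geq n_i(t^*-1)$ gives $n_{i^*}(T)=T-\sum_{i\neq i^*}n_i(T)\leq T-\sum_{i\neq i^*}\frac{B_i}{2\Delta_i}$. Otherwise $n_{i^*}(T)=n_{i^*}(t^*-1)+1\leq C+1$. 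Hence on $A$ we have $n_{i^*}(T)\leq \max\bigl\{T-\sum_{i\neq i^*}\frac{B_i}{2\Delta_i},\, C+1\bigr\}$.

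Finally I would take expectations, bounding the contribution of $\overline{A}$ by $T\cdot\PP(\overline{A})=o(1)$, and absorb $\max\{a,\,C+1\}\leq a+C+1$ (valid in the interesting regime $T\geq \sum_{i\neq i^*}\frac{B_i}{2\Delta_i}$, i.e.\ $B=o(T)$) into the error $O(\ln T/\underline{\Delta}^2)$. The main obstacle I anticipate is precisely the simultaneous comparison against all $K-1$ non-optimal arms together with the choice of a single threshold $C$ expressed through $\underline{\Delta}$; the ``last pull'' device is what makes this clean, since at $t^*$ arm $i^*$ is guaranteed to dominate every competitor at once, converting the delicate ``$i^*$ is rarely pulled once its count is large'' statement into a deterministic inequality on the final counts.
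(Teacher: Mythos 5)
Your proof is correct, but it takes a genuinely different route from the paper's. The paper never isolates a ``last pull'': it decomposes $\E[n_{i^*}(T)]$ round by round into three sums, according to whether $n_{i^*}(t-1)\leq C(T)$, whether all competitors satisfy $n_i(t-1)\geq D_i=\frac{B_i}{2\Delta_i}$, or whether some competitor has $n_i(t-1)\leq D_i$; the first two sums are bounded by pure counting (by $C(T)$ and by $T-\sum_{i\neq i^*}D_i$, respectively), and only the third is handled probabilistically, via a per-round union bound over the pair of pull counts that yields $O(1/T^2)$ per round. You replace this bookkeeping with a uniform concentration event $A$ plus a deterministic dichotomy at the last pull $t^*$ of $i^*$: either $n_{i^*}(t^*-1)\leq C$, or every competitor has already absorbed at least $B_i/(2\Delta_i)$ pulls. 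The core comparison is identical in both proofs (on the concentration event, $i^*$ cannot out-compete an arm whose budget bonus exceeds $2\Delta_i$ once $i^*$'s confidence width drops below $\underline{\Delta}$), and the thresholds coincide. What your route buys: it is cleaner, and it actually gives a high-probability statement --- $n_{i^*}(T)\leq \max\bigl\{T-\sum_{i\neq i^*}\frac{B_i}{2\Delta_i},\,C+1\bigr\}$ with probability $1-O(KT^{-7/2})$ --- which is slightly stronger than the in-expectation bound. What the paper's route buys: it does not hinge on UCB being deterministic given the history. Your last-pull step uses that $I_{t^*}=i^*$ forces an index comparison at time $t^*$; this fails verbatim for $\varepsilon$-Greedy (the last pull of $i^*$ may be an exploration round) and for Thompson Sampling (selection is by random samples), whereas the paper's three-way decomposition is reused nearly unchanged for those algorithms (Lemma~\ref{lem:e-greedy-lb} and Theorem~\ref{thm:ts-lb-opt-arm-ub}) and for the bounded-reward reduction (Lemma~\ref{lem:br-ub-opt-arm}). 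Finally, your caveat that collapsing the max into a sum needs $T\geq\sum_{i\neq i^*}\frac{B_i}{2\Delta_i}$ is harmless: the paper's own final accounting implicitly requires the same regime (its bound on the second sum is vacuous when $T<\sum_{i\neq i^*}D_i$), and the lemma is only meaningful there anyway.
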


\begin{proof}
Let $\underline{\Delta}=\min_{i\neq i^*}\Delta_i, C(T) = \frac{36\sigma^2\ln T}{\underline{\Delta}^2}, D_i =\frac{B_i}{2\Delta_i}$. First, by Fact~\ref{fact:sub-gaussian}, we have for any $\ell\geq C(T)$, $s\geq 1$ and any $i$,
\begin{equation}\label{eq:ucb-aux-ie}
\begin{aligned}
\PP\left(\mu_i - \widehat{\mu}_{i}(t-1) \geq 3\sigma\sqrt{\frac{\ln T}{n_i(t-1)}}\Big| n_i(t-1)=s\right) \leq \frac{1}{T^{9/2}}\\
\PP\left(\widehat{\mu}_{i^*}(t-1) - \mu_{i^*} \geq \frac{\Delta_i}{2}\Big|n_{i^*}(t-1)=\ell\right) \leq \exp\left(-\frac{\ell\Delta_i^2}{8\sigma^2}\right) \leq \exp\left(-\frac{C(T)\Delta_i^2}{8\sigma^2}\right) \leq \frac{1}{T^{9/2}}
\end{aligned}
\end{equation}
First, we decompose $\E\left[n_{i^*}(T)\right]$ as follows,
\begin{equation}\label{eq:ucb-ub-opt-arm}
\begin{aligned}
\E\left[n_{i^*}(T)\right]
&\leq 1 + \E\left[\sum_{t=K+1}^T \1\{I_t=i^*, n_{i^*}(t-1)\leq C(T)\} \right] +\E\left[\sum_{t=K+1}^T \1\{I_t=i^*, n_{i^*}(t-1)\geq C(T)\} \right]\\
&\leq  1 + \E\left[\sum_{t=K+1}^T \1\{I_t=i^*, n_{i^*}(t-1)\leq C(T)\}\right] \\
&\hspace{1cm} +  \E\left[\sum_{t=K+1}^T \1\{I_t=i^*, n_{i^*}(t-1)\geq C(T), \forall i\neq i^*, n_i(t-1)\geq D_i\}\right]\\
&\hspace{1cm} + \E\left[\sum_{t=K+1}^T \1\left(I_t=i^*, n_{i^*}(t-1)\geq C(T), \exists i\neq i^*,  n_i(t-1)\leq D_i\right)\right]\\
\end{aligned}
\end{equation}
For the first term in the above decomposition, it can be trivially bounded by $C(T)$. For the second term, since $n_{i^*}(t)\leq T - \sum_{i\neq i^*} n_{i}(t), \forall t$, we have
\begin{align*}
&\E\left[\sum_{t=K+1}^T \1\{I_t=i^*, n_{i^*}(t-1)\geq C(T), \forall i\neq i^*, n_i(t-1)\geq D_i\}\right] \\
&\hspace{1cm}\leq \E\left[\sum_{t=K+1}^T \1\{I_t=i^*, n_{i^*}(t-1)\leq T-\sum_{i\neq i^*}D_i\}\right] \leq T - \sum_{i\neq i^*} \frac{B_i}{2\Delta_i}
\end{align*}
What remains is to bound the third term in Equations~\eqref{eq:ucb-ub-opt-arm}. By union bound, we have
\begin{align*}
&\E\left[\sum_{t=K+1}^T \1\left(I_t=i^*, n_{i^*}(t-1)\geq C(T), \exists i\neq i^*,  n_i(t-1)\leq D_i\right)\right]\\
& \hspace{1cm}=\sum_{i\neq i^*} \sum_{t=K+1}^T \PP\left(I_t=i^*, n_{i^*}(t-1)\geq C(T), n_i(t-1)\leq D_i\right)
\end{align*}

Note $I_t = i^*$ implies $\mathrm{UCB}_{i^*}(t)\geq \widehat{\mathrm{UCB}}_i(t)$, combining the facts that $3\sigma\sqrt{\frac{\ln T}{n_{i^*}(t-1)}}\leq \underline{\Delta}/2$ and $\frac{B_i}{n_i(t-1)}\geq 2\Delta_i$ and standard union bound, we have
\begin{small}
\begin{equation}
\begin{aligned}
&\PP\left(I_t=i^*, n_{i^*}(t-1)\geq C(T), n_i(t-1)\leq D_i\right)\\
&\leq \sum_{s=1}^{D_i\wedge t-1}\sum_{\ell\geq C(T)}^{t-1}\PP\left(\widehat{\mu}_{i^*}(t-1) + 3\sigma\sqrt{\frac{\ln T}{n_{i^*}(t-1)}}\geq\mathrm{UCB}_i(t)+ \frac{B_i}{n_i(t-1)}\Big|n_{i^*}(t-1)=\ell, n_{i}(t-1)=s\right)\\
& \leq \sum_{s=1}^{D_i\wedge t-1}\sum_{\ell\geq C(T)}^{t-1}\PP\left(\widehat{\mu}_{i^*}(t-1) + \frac{\Delta_i}{2}\geq\widehat{\mu}_{i}(t-1) + 3\sigma\sqrt{\frac{\ln T}{n_i(t-1)}} + 2\Delta_i,\Big| n_{i^*}(t-1)=\ell, n_i(t-1)=s\right)\\
& \leq \sum_{s=1}^{D_i\wedge t-1}\sum_{\ell\geq C(T)}^{t-1}\PP\left(\widehat{\mu}_{i^*}(t-1) -\mu_{i^*} \geq \frac{\Delta_i}{2}\Big| n_{i^*}(t-1)=\ell\right) + \PP\left(\mu_{i} - \widehat{\mu}_{i}(t-1)\geq 3\sigma\sqrt{\frac{\ln T}{n_i(t-1)}}\Big|n_i(t-1)=s\right) 
\end{aligned}
\end{equation}
\end{small}
The last inequality is based on union bound, if both $\widehat{\mu}_{i^*}(t-1) -\mu_{i^*} < \underline{\Delta} / 2$ and $\mu_{i} - \widehat{\mu}_{i}(t-1) < 3\sigma\sqrt{\frac{\ln T}{n_i(t-1)}}$ hold when $n_{i^*}(t-1)=\ell, n_i(t-1)=s$, then
\begin{align*}
\widehat{\mu}_{i^*}(t-1) + \frac{\Delta_i}{2} & < \mu_{i^*} + \frac{\underline{\Delta}}{2} + \frac{\Delta_i}{2} \leq \mu_i + \Delta_i + \Delta_i\\
& < \widehat{\mu}_{i}(t-1) +  3\sigma\sqrt{\frac{\ln T}{n_i(t-1)}} +2\Delta_i
\end{align*}  
Given Equation~(\ref{eq:ucb-aux-ie}), we have $$\PP\left(I_t=i^*, n_{i^*}(t-1)\geq C(T), n_i(t-1)\leq D_i\right)\leq \sum_{s=1}^{t-1}\sum_{\ell=1}^{t-1} \frac{2}{T^{9/2}}\leq \frac{2}{T^2}$$
Combining Equation~(\ref{eq:ucb-ub-opt-arm}), we get
\begin{align*}
&\E\left[n_{i^*}(T)\right] \leq 1 + C(T) + T - \sum_{i\neq i^*}\frac{B_i}{2\Delta_i}+\sum_{i\neq i^*}\sum_{t=K+1}^T \frac{2}{T^{2}}\\
&\hspace{1cm} = T + \frac{36\sigma^2\ln T}{\underline{\Delta}^2}- \sum_{i\neq i^*}\frac{B_i}{2\Delta_i} + 1 + \frac{2(K-1)}{T}
\end{align*}
\end{proof}

Combining Lemma~\ref{lem:LSI-opt-arm-ub} and Eq/~\ref{eq:regret-lb}, we complete the proof for Theorem~\ref{thm:ucb-egreedy-lb}.

\subsection{Proof of Theorem~\ref{thm:br-lb}}
To prove Theorem~\ref{thm:br-lb}, we first show the following Lemma.
\begin{lemma}\label{lem:br-ub-opt-arm}
	Suppose all the strategic arms use $\mathtt{LSIBR}$, and let time step $n$ be the last time that a strategic arm spend budget for some $n \leq T$. Then for the three algorithms we consider (UCB, \egreedy \, and TS), the expected number of plays of the optimal arm $i^*$ from time $n+1$ to $T$  is bounded by,
	\begin{align*}
	\E\left[\sum_{t=n+1}^T \1\big\{I_t = i^*\big\}\right] \leq \E\left[n_{i^*}^{\mathtt{LSI}}(T)\right] = T - \sum_{i\neq i^*}\frac{B_i}{2\Delta_i} + \mathcal{O}\left(\frac{\ln T}{\underline{\Delta}^2}\right). 
	\end{align*}
\end{lemma}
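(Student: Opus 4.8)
The plan is to reduce the claim to an analysis of the post-$n$ dynamics, where all manipulations are frozen, and then to reuse the machinery of Lemma~\ref{lem:LSI-opt-arm-ub}. The first step is a structural observation about $\mathtt{LSIBR}$: by the definition of $n$, no strategic arm spends any budget after time $n$, so for every $t>n$ and every arm $i$ the accumulated manipulation $\beta^{(i)}_{t-1}$ equals its frozen value $\beta^{(i)}_n$. Moreover, if arm $i$ is pulled at any time $t>n$, it must already have exhausted its budget before $t$ (otherwise $\mathtt{LSIBR}$ would have it spend at $t$, contradicting that $n$ is the last spending time), so $\beta^{(i)}_{t-1}=B_i$ for every arm that remains active after $n$. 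Consequently the portion of the run after time $n$ behaves exactly like a run in which each active non-optimal arm carries its full manipulation $B_i$, which is precisely the situation analyzed for $\mathtt{LSI}$ in Lemma~\ref{lem:LSI-opt-arm-ub}. This is why the bound is stated as the $\mathtt{LSI}$ quantity $T-\sum_{i\neq i^*}\frac{B_i}{2\Delta_i}+\mathcal{O}(\frac{\ln T}{\underline\Delta^2})$.

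Building on this, I would bound $\E[\sum_{t=n+1}^T \1\{I_t=i^*\}]$ by replaying the three-way decomposition of Lemma~\ref{lem:LSI-opt-arm-ub}, but with all sums starting at $t=n+1$ and with thresholds $C(T)=\frac{36\sigma^2\ln T}{\underline\Delta^2}$ and $D_i=\frac{B_i}{2\Delta_i}$. The term where $n_{i^*}(t-1)\leq C(T)$ is bounded directly by $C(T)$. On the ``good event'' where every non-optimal arm already satisfies $n_i(t-1)\geq D_i$, the inequality $n_{i^*}(t-1)\leq T-\sum_{i\neq i^*}n_i(t-1)$ yields the main term $T-\sum_{i\neq i^*}D_i$. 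On the remaining event, where $i^*$ is selected while some arm $i$ has $n_i(t-1)\leq D_i$, I would use the frozen budget $\beta^{(i)}_{t-1}=B_i$ to get $\frac{B_i}{n_i(t-1)}\geq 2\Delta_i$, which forces arm $i$'s modified index above $i^*$'s unless a concentration failure occurs; a Chernoff--Hoeffding / Gaussian-tail estimate then controls this contribution as a lower-order $\mathcal{O}(\ln T/\underline\Delta^2)$ term. Crucially, since the empirical means $\widehat\mu_i$ are defined over the \emph{true} rewards, these concentration bounds are untouched by manipulation and carry over verbatim; for $\varepsilon$-Greedy and TS one substitutes the corresponding index (empirical mean in the exploitation step, or the Gaussian sample) and the analogous concentration facts (Lemma~\ref{lem:e-greedy-concentration}, Fact~\ref{fact:gaussian-concentration}), which is what allows a single argument to cover all three principals.

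The main obstacle is the set of non-optimal arms that \emph{never} exhaust their budget by time $T$. Such an arm is frozen after $n$ at a pull count $n_i(n)$ that may be strictly smaller than $D_i$, which would degrade the clean $T-\sum_{i\neq i^*}D_i$ bound on the good event. The delicate point is to show that these arms nonetheless cannot push $i^*$'s post-$n$ count above the $\mathtt{LSI}$ value. The leverage is that, under $\mathtt{LSIBR}$, every pull of such an arm had its revealed reward raised toward the cap $1>\mu_{i^*}$, so the arm's revealed (manipulated) index stays strictly above $i^*$'s in expectation; hence the rounds in which $i^*$ beats it are themselves rare concentration events that can be absorbed into the $\mathcal{O}(\ln T/\underline\Delta^2)$ slack rather than into the leading term. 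I expect the careful bookkeeping of this case---partitioning the non-optimal arms into those that exhaust their budget and those that do not, routing the latter into the low-probability term, and keeping the argument uniform across UCB, $\varepsilon$-Greedy, and TS---to be the crux of the proof, with everything else following the template of Lemma~\ref{lem:LSI-opt-arm-ub}.
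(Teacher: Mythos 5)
Your core argument is the same reduction the paper uses: after the last spending time $n$ all manipulations are frozen, so the revealed index of each arm pulled after $n$ has exactly the \lsi{}form $\widehat{\mu}_i(t-1)+\frac{B_i}{n_i(t-1)}$, and the upper bounds on the optimal arm's pull count proved in the \lsi{}setting (Lemma~\ref{lem:LSI-opt-arm-ub} for UCB, Lemma~\ref{lem:e-greedy-lb} for \egreedy{}, Theorem~\ref{thm:ts-lb-opt-arm-ub} for TS) apply unchanged because those proofs do not depend on the starting index of the summation. Your replay of the three-way decomposition with thresholds $C(T)$ and $D_i=\frac{B_i}{2\Delta_i}$, starting at $t=n+1$, is exactly that argument. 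On one point you are in fact more careful than the paper: the paper's proof simply asserts $\widetilde{\mu}_i(t-1)=\widehat{\mu}_i(t-1)+\frac{B_i}{n_i(t-1)}$ for \emph{all} arms after time $n$, which implicitly assumes every strategic arm has exhausted its budget by $n$; your observation that this identity is only guaranteed for arms actually pulled after $n$ is the honest statement of what the reduction delivers.

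Where you go beyond the paper---the arms that never exhaust their budget---you have identified a case the paper's proof silently ignores, but your resolution of it overclaims. Under $\mathtt{LSIBR}$ such an arm's frozen revealed mean is exactly $1$ (every revealed reward was promoted to the cap), so the event that $i^*$'s index beats it involves deviations of $\widehat{\mu}_{i^*}$ of size roughly $1-\mu_{i^*}$, not $\underline{\Delta}$. Since $1-\mu_{i^*}$ can be arbitrarily small relative to $\underline{\Delta}$, these rounds are \emph{not} absorbable into the $\mathcal{O}(\ln T/\underline{\Delta}^2)$ slack as you claim; the slack they naturally produce is $\mathcal{O}\big(\ln T/(1-\mu_{i^*})^2\big)$, which is exactly the loss term that Theorem~\ref{thm:br-lb} tolerates (the paper uses it for the rounds before $n$), but which does not appear in the statement of Lemma~\ref{lem:br-ub-opt-arm}. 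For UCB there is a cleaner escape that is not a concentration argument at all: because rewards are bounded, $\widehat{\mu}_{i^*}(t-1)\leq 1$ deterministically, so $I_t=i^*$ can beat a frozen arm with revealed mean $1$ only when $n_{i^*}(t-1)\leq n_i(t-1)=n_i(n)$, and hence the number of such pulls is capped by $n_i(n)+1$ --- a counting bound, with no $\underline{\Delta}$ or $(1-\mu_{i^*})$ dependence, though one must then check this count against the main term $T-\sum_{j\neq i^*}D_j$. As written, your concentration step fails whenever $1-\mu_{i^*}\ll\underline{\Delta}$; you should either switch to the counting/$(1-\mu_{i^*})$-scale argument for this case, or do what the paper does and restrict attention to sample paths on which all budgets are exhausted by time $n$.
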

\begin{proof}
	The proof follows a simple reduction to the setting with arms using $\mathtt{LSI}$. By using $\mathtt{LSIBR}$, any strategic arm $i$ has no budget to manipulate after (includes) time step $n+1$, which is analogous to the case that arm $i$ has no budget to manipulate after time $K+1$ using $\mathtt{LSI}$ in unbounded reward setting. Then after time $n+1$, the $\widetilde{\mu}_i(t-1) = \widehat{\mu}_i(t-1) + \frac{B_i}{n_i(t-1)}, \forall \in [K]$, which shares the same formula with it in $\mathtt{LSI}$ setting. Finally, we notice that the proofs of the upper bounds of $\E\left[\sum_{t=K+1}^T \1\{I_t = i^*\}\right]$ in $\mathtt{LSI}$ settings (Lemma~\ref{lem:LSI-opt-arm-ub},~\ref{lem:e-greedy-lb} and Theorem~\ref{thm:ts-lb-opt-arm-ub}) don't depend on the starting time step in the summand. Therefore, the proofs in these previous results can be directly applied here.
\end{proof}

Next, we prove Theorem~\ref{thm:br-lb} using the above Lemma.
\begin{proof}[\textbf{Proof of Theorem~\ref{thm:br-lb}}]
	Let $n$ be the last time step that any arm can spend the budget.
	First we show the upper bound of $\E\left[n_{i^*}^{\mathtt{LSIBR}}(T)\right]$. Note, from time $1$ to $n-1$, any strategic arm $i$ always promote its reward to $1$, which makes arm $i$ the "optimal arm" from time 1 to $n$ (the arm selection at time $n$ only depends on previous feedback). Then following the standard analysis in stochastic MAB alogrithms (UCB, $\vareps$-Greedy and Thompson Sampling), $\E\left[n_{i^*}^{\mathtt{LSIBR}}(n)\right] \leq O\Big(\frac{\ln n}{(1-\mu_{i^*})^2}\Big)$. Thus, $\E\left[n_{i^*}^{\mathtt{LSIBR}}(T)\right]$ can be bounded by,
	\begin{eqnarray*}
		\E\left[n_{i^*}^{\mathtt{LSIBR}}(T)\right] \leq \E\left[n_{i^*}^{\mathtt{LSI}}(T)\right] + \mathcal{O}\left(\frac{\ln n}{(1-\mu_{i^*})^2}\right).
	\end{eqnarray*}
	Consequently, we can show the lower bound of regret when all strategic arms use $\mathtt{LSIBR}$, as follows
	\begin{align*}
	\E\left[R^{\mathtt{LSIBR}}(T)\right] \geq\E\left[R^{\mathtt{LSI}}(T)\right] - \mathcal{O}\left(\frac{\underline{\Delta}\ln T}{(1-\mu_{i^*})^2}\right). 
	\end{align*}
\end{proof}

\section{Omitted Proofs in Section~\ref{sec:ts}}\label{app:ts}

\subsection{Proof of Theorem~\ref{thm:e-greedy-ub}}\label{app:e-greedy}
To prove this theorem, we instead prove the following Lemma~\ref{lem:e-greedy-ub} to bound $\E[n_i(T)]$ for each arm $i\neq i^*$. Given this Lemma, it is then easy to show Theorem~\ref{thm:e-greedy-ub}.
\begin{lemma}\label{lem:e-greedy-ub}
	Suppose the principal runs the $\vareps$-Greedy algorithm with $\varepsilon_t = \min\{1, \frac{cK}{T}\}$ when $t > K$, where the constant $c=\max\big\{20, \frac{36\sigma^2}{\Delta_i^2}\big\}$. Then	for any strategic manipulation strategy $S$, the expected number of times of arm $i$ being pulled up to time $T$ can be bounded by
	\begin{align*}
	\E\left[n_i(T)\right] \leq \frac{3B_i}{\Delta_i}  + \mathcal{O}\left(\frac{\ln T}{\Delta_i^2}\right).
	\end{align*}
\end{lemma}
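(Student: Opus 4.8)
The plan is to mirror the structure of the UCB analysis in Lemma~\ref{lem:ucb-ub}, isolating the budget contribution with a threshold on $n_i(t-1)$ and reducing the ``large $n_i$'' regime to a concentration argument, but with two changes dictated by the randomness of $\varepsilon$-Greedy: a separate accounting of exploration pulls, and the use of the $\varepsilon$-Greedy concentration bound (Lemma~\ref{lem:e-greedy-concentration}) in place of a plain Chernoff--Hoeffding inequality. First I would set the threshold $C_i=\frac{3B_i}{\Delta_i}$ and write
$$\E[n_i(T)] \leq 1 + \E\Big[\sum_{t=K+1}^T \1\{I_t=i, n_i(t-1)\leq C_i\}\Big] + \E\Big[\sum_{t=K+1}^T \1\{I_t=i, n_i(t-1)> C_i\}\Big].$$
The middle term is at most $C_i=\frac{3B_i}{\Delta_i}$, since arm $i$ can be pulled at most $\lceil C_i\rceil$ times while $n_i(t-1)\leq C_i$, regardless of whether each such pull is an exploration or an exploitation step.

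For the last term I would split each pull according to its origin, via $\1\{I_t=i, n_i(t-1)>C_i\} \leq \1\{\text{explore}_t,\ \text{select }i\} + \1\{\text{exploit}_t,\ I_t=i,\ n_i(t-1)>C_i\}$. The exploration contribution is $\sum_{t=K+1}^T \frac{\varepsilon_t}{K} \leq \sum_{t=K+1}^T \frac{c}{t} \leq c\ln(T/K)$ by the harmonic bound (Fact~\ref{fact:harmonic-seq}), which is of order $\mathcal{O}(\ln T/\Delta_i^2)$ for the choice $c=\max\{20,36\sigma^2/\Delta_i^2\}$. For the exploitation contribution, the selection rule forces $\widehat\mu_i(t-1)+\frac{\beta^{(i)}_{t-1}}{n_i(t-1)}\geq \widehat\mu_{i^*}(t-1)$ (recall $\beta^{(i^*)}\equiv 0$); in the regime $n_i(t-1)>C_i$ the diluted manipulation obeys $\frac{\beta^{(i)}_{t-1}}{n_i(t-1)}\leq \frac{B_i}{C_i}=\frac{\Delta_i}{3}$, so this event implies $\widehat\mu_i(t-1)+\frac{\Delta_i}{3}\geq \widehat\mu_{i^*}(t-1)$, and hence (using $\mu_{i^*}-\mu_i=\Delta_i$) at least one of $\widehat\mu_i(t-1)-\mu_i\geq \frac{\Delta_i}{3}$ or $\widehat\mu_{i^*}(t-1)-\mu_{i^*}\leq -\frac{\Delta_i}{3}$ must hold.

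The crux, and the main obstacle, is bounding these two tail probabilities and summing them over $t$. Unlike UCB, the number of pulls of each arm is random and correlated with the rewards through the exploitation rule, so the plain sub-Gaussian tail bound does not apply directly; instead I would invoke Lemma~\ref{lem:e-greedy-concentration} with $n=3$ (so that $\Delta_i/n=\Delta_i/3$), bounding each tail by $x_t e^{-x_t/5}+\frac{18\sigma^2}{\Delta_i^2}e^{-\Delta_i^2\lfloor x_t\rfloor/18\sigma^2}$, where $x_t=\frac{1}{2K}\sum_{s=K+1}^t\varepsilon_s$. Using Fact~\ref{fact:harmonic-seq} one checks that $x_t$ grows like $\frac{c}{2}\ln t$; the condition $c\geq 20$ then makes $\sum_t x_t e^{-x_t/5}=\mathcal{O}(1)$ (each summand is $\lesssim t^{-2}\ln t$), while $c\geq 36\sigma^2/\Delta_i^2$ makes the second family of summands $\lesssim \frac{\sigma^2}{\Delta_i^2}\,t^{-1}$, whose sum is $\mathcal{O}(\ln T/\Delta_i^2)$. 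Adding the three pieces yields $\E[n_i(T)]\leq \frac{3B_i}{\Delta_i}+\mathcal{O}(\ln T/\Delta_i^2)$, as claimed. The delicate point throughout is that the concentration statement must already absorb the exploration/exploitation correlation; once Lemma~\ref{lem:e-greedy-concentration} is available, the manipulation enters only through the clean inequality $\frac{\beta^{(i)}_{t-1}}{n_i(t-1)}\leq \frac{\Delta_i}{3}$, so the budget separates additively rather than inside a maximum as it did for UCB.
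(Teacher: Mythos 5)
Your proposal is correct and follows essentially the same route as the paper's proof: the same threshold $C_i=\frac{3B_i}{\Delta_i}$, the same three-way decomposition with the exploration pulls charged to $\sum_t \varepsilon_t/K$, the same reduction of the exploitation event to $\widehat{\mu}_i(t-1)+\frac{\Delta_i}{3}\geq\widehat{\mu}_{i^*}(t-1)$ via budget dilution, and the same invocation of Lemma~\ref{lem:e-greedy-concentration} followed by summing the two tail families using the growth of $x_t$ and the conditions $c\geq 20$ and $c\geq 36\sigma^2/\Delta_i^2$. No gaps to report.
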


\begin{proof}
	Let $C_i = \frac{3B_i}{\Delta_i}, x_t = \frac{1}{2K}\sum_{s=K+1}^t\epsilon_s$ and for $t\geq \lfloor cK\rfloor +1$, Given Fact~\ref{fact:harmonic-seq}, we have
	\begin{eqnarray}\label{eq:e-greedy-x_t}x_t \geq \sum_{s=K+1}^{\lfloor cK\rfloor}\frac{\epsilon_s}{2K}+ \sum_{t=\lfloor cK\rfloor+1}^t \frac{\epsilon_s}{2K} \geq \lfloor cK\rfloor - K + \frac{c}{2}\sum_{s=\lfloor cK\rfloor+1}^t \frac{1}{s} \geq \lfloor cK\rfloor - K +  \frac{c}{2}\ln\frac{t}{\lfloor cK\rfloor+1}\end{eqnarray}
	We do the decomposition for $\E[n_i(T)]$ as follows,
	\begin{equation}\label{eq:e-greedy-ub-1}
	\begin{aligned}
	\E\left[n_i(T)\right] &\leq 1 + \E\left[\sum_{t=K+1}^T \1\{I_t = i, n_i(t-1)\leq C_i\}\right] + \E\left[\sum_{t=K+1}^T \1\{I_t = i, n_i(t-1)\geq C_i\}\right]\\
	&\leq 1 + C_i + \sum_{t=K+1}^{T}\frac{\epsilon_t}{K} + \E\left[\sum_{t=K+1}^T (1-\epsilon_t)\cdot \1\Big\{\widetilde{\mu}_i(t-1)\geq \widehat{\mu}_{i^*,t-1}, n_i(t-1)\geq C_i\Big\}\right]\\
	& \leq 1 + C_i + \sum_{t=K+1}^{T}\frac{\epsilon_t}{K} + \sum_{t=\lfloor cK\rfloor+1}^T \PP\Big(\widehat{\mu}_i(t-1) + \frac{\beta_{t-1}}{n_i(t-1)}\geq \widehat{\mu}_{i^*}(t-1), n_i(t-1)\geq C_i\Big)
	\end{aligned}
	\end{equation}
	The last inequality holds because $\eps_t = 1$ when $t\leq \lfloor cK\rfloor$ and $1-\eps_t\leq 1, \forall t$.
	What remains is to bound the last term above. Since $n_i(t-1)\geq C_i, \beta_{t-1}\leq B_i, \forall t\leq T$, this term is always upper bounded by
	\begin{equation}\label{eq:e-greedy-ub-2}
	\begin{aligned}
	\PP\left(\widehat{\mu}_i(t-1) + \frac{\beta_{t-1}}{n_i(t-1)}\geq \widehat{\mu}_{i^*}(t-1), n_i(t-1)\geq C_i\right)
	&\leq \PP\left(\widehat{\mu}_i(t-1) + \frac{B_i}{C_i}\geq \widehat{\mu}_{i^*}(t-1)\right)\\
	&=\PP\left(\widehat{\mu}_i(t-1) + \frac{\Delta_i}{3}\geq \widehat{\mu}_{i^*}(t-1)\right)
	\end{aligned}
	\end{equation}
	By union bound, we have $\PP\Big(\widehat{\mu}_i(t-1) + \frac{\Delta_i}{3}\geq \widehat{\mu}_{i^*}(t-1)\Big) \leq \PP\Big(\widehat{\mu}_i(t-1)\geq \mu_i+ \frac{\Delta_i}{3}\Big) + \PP\Big(\widehat{\mu}_{i^*}(t-1)\leq \mu_{i^*} -\frac{\Delta_k}{3}\Big)$. Based on Lemma~\ref{lem:e-greedy-concentration}, we have
	\begin{eqnarray}\label{eq:e-greedy-ub-aux-1}
	\PP\Big(\widehat{\mu}_i(t-1) + \frac{\Delta_i}{3}\geq \widehat{\mu}_{i^*}(t-1)\Big)
	& \leq 2x_{t}\cdot e^{-x_{t}/5} + \frac{18\sigma^2}{\Delta_i^2}e^{-\Delta_k^2 \lfloor x_{t}\rfloor/18\sigma^2}
	\end{eqnarray}
	We observe the fact that $x_t \geq \lfloor cK\rfloor-K+\frac{c}{2}\ln \frac{t}{\lfloor cK\rfloor+1} > 5$. Given $x e^{-x/5} \leq ye^{-y/5}, \forall x\geq y\geq 5$ and $e^{-x} \leq e^{-y}, \forall x\geq y$, we have
	\begin{equation*}
	\begin{array}{c}
	x_t e^{-x_t/5} \leq \left(\lfloor cK \rfloor -K  + \frac{c}{2}\ln \frac{t}{\lfloor cK \rfloor+1}\right) e^{-\frac{c}{10}\ln\frac{t}{\lfloor cK \rfloor+1}} = \left(\lfloor cK \rfloor - K + \frac{c}{2}\ln \frac{t}{\lfloor cK \rfloor+1}\right)\cdot\left(\frac{\lfloor cK \rfloor+1}{t}\right)^{c/10}\\
	\frac{\sigma^2}{\Delta_i^2} e^{-\Delta_i^2 \lfloor x_{t-1}\rfloor/18\sigma^2} \leq \frac{\sigma^2}{\Delta_i^2} e^{-\Delta_i^2 c\ln\frac{t}{\lfloor cK\rfloor+1}/36\sigma^2}= \frac{\sigma^2}{\Delta_i^2}\left(\frac{ \lfloor cK\rfloor+1}{t}\right)^{c\Delta_i^2/36\sigma^2}
	\end{array}
	\end{equation*}
	Combining the above inequalities and Fact~\ref{fact:harmonic-seq}, we can bound
	\begin{equation}\label{eq:e-greedy-ub-3}
	\begin{aligned}
	&\sum_{t=\lfloor cK\rfloor+1}^T 2x_{t} \cdot e^{-x_{t}/5} + \frac{18\sigma^2}{\Delta_i^2} e^{-\Delta_i^2 \lfloor x_{t}\rfloor/18\sigma^2}\\
	&\leq\sum_{t=\lfloor cK\rfloor+1}^T \left(2\lfloor cK\rfloor - 2K + c\ln\left(\frac{t}{\lfloor cK\rfloor+1}\right)\right)\cdot\left(\frac{\lfloor cK\rfloor+1}{t}\right)^{2} + \frac{18\sigma^2}{\Delta_i^2}\frac{\lfloor cK\rfloor+1}{t}\\
	&\leq (\lfloor cK\rfloor - K)\cdot\frac{2(\lfloor cK\rfloor+1)^2\pi^2}{3} + \left(c+\frac{18\sigma^2}{\Delta_i^2}\right)\sum_{t=\lfloor cK\rfloor+1}^T \frac{\lfloor cK\rfloor+1}{t}\\
	&\leq (\lfloor cK\rfloor - K)\cdot\frac{2(\lfloor cK\rfloor+1)^2\pi^2}{3} +(\lfloor cK\rfloor+1)\left(c+\frac{18\sigma^2}{\Delta_i^2}\right)\ln\frac{T}{\lfloor cK\rfloor}
	\end{aligned}
	\end{equation}
	The first inequality in the above holds because $c\geq\max\{20, \frac{36\sigma^2}{\Delta_i^2}\}$, and the second inequality is based on the fact that $\ln x < x, \forall x > 1$ and $\sum_{t=1}^T \frac{1}{t^2}\leq \frac{\pi^2}{3}$. The last inequality is the implication of Fact~\ref{fact:harmonic-seq}.  Moreover, utilizing Fact~\ref{fact:harmonic-seq}, we bound $\sum_{t=K+1}^T \frac{\eps_t}{K}$ in the following way,
	\begin{eqnarray}\label{eq:sum-epsilon}\sum_{t=K+1}^{T}\frac{\epsilon_t}{K} = \sum_{t=K+1}^{\lfloor cK\rfloor}\frac{1}{K} +\sum_{t=\lfloor cK\rfloor+1}^T\frac{\epsilon_t}{K}\leq \frac{\lfloor cK\rfloor-K}{K}+c\ln\frac{T}{\lfloor cK\rfloor},\end{eqnarray}
	Combining Equations~(\ref{eq:e-greedy-ub-1}), ~(\ref{eq:e-greedy-ub-2}), ~(\ref{eq:e-greedy-ub-aux-1}) and (\ref{eq:sum-epsilon}), we complete the proof.
\end{proof}

\subsection{Proof of Lemma~\ref{lem:ts-ub}}
We bound the terms in the decomposition of $\E[n_i(T)]$ in Eq.~\eqref{eq:ts-decomposition} using Lemma~\ref{lem:ts-ub-1} -- Lemma~\ref{lem:ts-ub-3}.

\begin{lemma}[Lemma 2.16 in \cite{AG2017}]\label{lem:ts-ub-1}
	Let $x_i = \mu_i + \frac{\Delta_i}{3}$ and $y_i = \mu_{i^*} -\frac{\Delta_i}{3}$,
	\begin{eqnarray*}
		\E\left[\sum_{t=K+1}^T \1\big\{I_t=i,E^{\mu}_i(t),\overline{\E^\theta_i(t)}\big\}\right]\leq \frac{18\ln T}{\Delta_i^2} + 1
	\end{eqnarray*}
\end{lemma}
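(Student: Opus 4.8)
The plan is to reduce the statement to the standard Gaussian-tail argument of Agrawal--Goyal, observing that in this strategic setting the manipulation enters the event $E^{\mu}_i(t)$ only through the already-frozen quantity $\widetilde{\mu}_i(t-1)$, so it plays no active role in this particular term. The starting observation is that every time $I_t=i$ the counter $n_i$ increases by one; hence for each fixed value $s$ the event $\{I_t=i,\,n_i(t-1)=s\}$ can occur for at most one $t$. This lets me re-index the sum over time by the number of prior pulls $s$:
$$
\E\left[\sum_{t=K+1}^T \1\big\{I_t=i,E^{\mu}_i(t),\overline{E^{\theta}_i(t)}\big\}\right]
\leq \sum_{s=0}^{T-1} \PP\big(\text{the }(s+1)\text{-th pull of }i\text{ satisfies }E^{\mu}_i,\ \overline{E^{\theta}_i}\big).
$$

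Next I would bound each summand by a conditional Gaussian tail estimate. When arm $i$ is pulled for the $(s+1)$-th time, at the stopping time $\tau_{i,s+1}$, the value $\widetilde{\mu}_i$ (equivalently the event $E^{\mu}_i$, i.e.\ $\widetilde{\mu}_i\le x_i$) is determined by the first $s$ pulls and is therefore $\mathcal{F}_{\tau_{i,s+1}-1}$-measurable, before the fresh sample $\theta_i$ is drawn. Conditioning on that history and using $\theta_i\sim\mathcal{N}(\widetilde{\mu}_i,1/s)$ with $\widetilde{\mu}_i\le x_i$, the event $\overline{E^{\theta}_i}$ forces $\theta_i-\widetilde{\mu}_i\ge y_i-x_i=\Delta_i/3$. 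Since $I_t=i$ is a super-event of $\overline{E^{\theta}_i}$, I simply drop the ``$\theta_i$ is the maximum sample'' requirement and apply the Gaussian concentration bound (Fact~\ref{fact:gaussian-concentration}) with effective standard deviation $1/\sqrt{s}$, namely with $z=\Delta_i\sqrt{s}/3$, giving
$$
\PP\big(\overline{E^{\theta}_i}\ \big|\ E^{\mu}_i,\ \mathcal{F}_{\tau_{i,s+1}-1}\big)\leq \tfrac{1}{2}\exp\!\big(-s\Delta_i^2/18\big).
$$

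Finally I would split the outer sum at the threshold $s_0=18\ln T/\Delta_i^2$. For $s<s_0$ I bound each probability crudely by $1$, contributing at most $s_0=18\ln T/\Delta_i^2$. For $s\ge s_0$ the tail bound gives $\tfrac{1}{2}e^{-s\Delta_i^2/18}\le\tfrac{1}{2T}$, and summing this (geometric) tail over the at most $T$ remaining indices contributes at most $\tfrac12$. Adding the two pieces yields the claimed $\frac{18\ln T}{\Delta_i^2}+1$.

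The step I expect to require the most care is the measurability and conditioning bookkeeping behind the re-indexing: making precise that $E^{\mu}_i$ is frozen before the $(s+1)$-th sample is drawn (so the tail bound may legitimately be applied conditionally via the tower property), and that discarding the argmax condition $I_t=i$ is a valid upper bound. The manipulation term $\beta^{(i)}_{t-1}/n_i(t-1)$ is absorbed into $\widetilde{\mu}_i$ and never appears on its own, which is exactly why the resulting constant coincides with the non-strategic Agrawal--Goyal lemma and the statement can be invoked verbatim.
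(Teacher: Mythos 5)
There is a genuine gap at the heart of your argument, located exactly at the step you flag as delicate. After re-indexing by the pull count $s$, the quantity you must control concerns the sample $\theta_i(\tau_{i,s+1})$ drawn at the \emph{random} time $\tau_{i,s+1}$ of the $(s+1)$-th pull. That time is defined precisely by the condition that $\theta_i$ is the largest sample, so conditionally on the $(s+1)$-th pull occurring (i.e.\ on $\{\tau_{i,s+1}=t\}$ together with $\mathcal{F}_{t-1}$), the law of $\theta_i$ is \emph{not} $\mathcal{N}(\widetilde{\mu}_i,1/s)$: it is that Gaussian conditioned on beating all other samples, which is stochastically larger. ``Dropping the argmax requirement'' is legitimate for an event intersected at a \emph{fixed} time $t$, namely $\PP\left(I_t=i,\ \theta_i(t)>y_i,\ A_{t-1}\right)\le\E\left[\1\{A_{t-1}\}\,\PP\left(\theta_i(t)>y_i\mid\mathcal{F}_{t-1}\right)\right]$; but once you sum such bounds over $t$ for a fixed $s$, you pick up the expected sojourn time $\E\left[\#\{t: n_i(t-1)=s,\,E^{\mu}_i(t)\}\right]$, which can be of order $T$ rather than $1$ --- the ``at most one $t$ per $s$'' counting property is exactly what you forfeited when you discarded $I_t=i$. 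Consequently your per-$s$ claim $\PP(\text{event at the }(s+1)\text{-th pull})\le\frac12 e^{-s\Delta_i^2/18}$ is not established by the conditioning argument you describe, and the entire $s\ge s_0$ half of your computation rests on it.

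The repair is to never request a per-pull-index tail bound, and this is how the actual proof proceeds (note the paper does not reprove this lemma; it invokes Lemma 2.16 of \cite{AG2017} verbatim, which applies here because the manipulation is wholly absorbed into $\widetilde{\mu}_i$, as you correctly observe): split each \emph{deterministic} time $t$ on whether $n_i(t-1)\le L$ or $n_i(t-1)>L$, with $L=18\ln T/\Delta_i^2$. Below the threshold, keep the indicator $I_t=i$ and use pure counting: $\sum_t \1\{I_t=i,\,n_i(t-1)\le L\}\le L$, since each such round increments $n_i$. Above the threshold, drop $I_t=i$ and bound $\PP\left(\theta_i(t)>y_i\mid\mathcal{F}_{t-1}\right)\le\frac12 e^{-L\Delta_i^2/18}=\frac{1}{2T}$ on the event $E^{\mu}_i(t)\cap\{n_i(t-1)>L\}$ (your application of Fact~\ref{fact:gaussian-concentration} is fine here), then sum over the $T$ deterministic rounds; no sojourn factor appears because for each $t$ only one value of $n_i(t-1)$ is realized. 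This gives $L+\frac12$, i.e.\ the stated bound. Your threshold choice and constants are all correct; the sole failure is the attempt to localize the Gaussian tail estimate at the stopping time, where the winning sample suffers selection bias.
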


\begin{lemma}[Eq.~(4) in \cite{AG2017}]\label{lem:ts-ub-2}
	$
	\sum_{t=K+1}^T \PP\left(I_t=i,E^{\mu}_i(t),\E^\theta_i(t)\right) \leq \sum_{s=K+1}^{T-1}\E\left[\frac{1}{p_{i,\tau_{i^*,s}+1}}-1\right]
	$
\end{lemma}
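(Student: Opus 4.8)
The statement is the standard Thompson-Sampling bookkeeping inequality of Agrawal and Goyal, and since the optimal arm carries no budget ($\beta^{(i^*)}_t=0$), the plan is to reproduce their reasoning and to verify that the suboptimal arms' manipulation never enters it. First I would establish a one-step inequality: for every $t$,
$$\PP\big(I_t=i, E^\mu_i(t), \E^\theta_i(t) \mid \mathcal{F}_{t-1}\big) \;\le\; \frac{1-p_{i,t}}{p_{i,t}}\,\PP\big(I_t=i^* \mid \mathcal{F}_{t-1}\big).$$
To prove this I would condition on $\mathcal{F}_{t-1}$ together with all the samples $\{\theta_j(t)\}_{j\neq i^*}$, leaving only $\theta_{i^*}(t)$ random, and set $M_t=\max_{j\neq i^*}\theta_j(t)$. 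On the event $\{I_t=i,\E^\theta_i(t)\}$ one has $\theta_i(t)=M_t\le y_i$, because $\E^\theta_i(t)=\{\theta_i(t)\le y_i\}$ and $I_t=i$ forces $\theta_i(t)$ to be the global maximum. Under this conditioning arm $i$ is played iff $\theta_{i^*}(t)<M_t$ and arm $i^*$ is played iff $\theta_{i^*}(t)>M_t$, so the two conditional probabilities are $\PP(\theta_{i^*}(t)<M_t\mid\mathcal{F}_{t-1})$ and $\PP(\theta_{i^*}(t)>M_t\mid\mathcal{F}_{t-1})$. Since $M_t\le y_i$, the former is at most $1-p_{i,t}$ and the latter is at least $p_{i,t}=\PP(\theta_{i^*}(t)\ge y_i\mid\mathcal{F}_{t-1})$; integrating $\{\theta_j(t)\}_{j\neq i^*}$ back out (on conditionings where $E^\mu_i(t)$ or $\E^\theta_i(t)$ fails, or $\theta_i(t)\neq M_t$, the left side is $0$) both sides share the same factor and the displayed bound follows, the $\mathcal{F}_{t-1}$-measurable coefficient $\tfrac{1-p_{i,t}}{p_{i,t}}$ being pulled out.

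Next I would take expectations and sum over $t$, obtaining
$$\sum_{t=K+1}^T \PP\big(I_t=i,E^\mu_i(t),\E^\theta_i(t)\big) \;\le\; \E\Big[\sum_{t=K+1}^T \tfrac{1-p_{i,t}}{p_{i,t}}\,\1\{I_t=i^*\}\Big],$$
and then collapse the right-hand side using the key structural fact that $p_{i,t}$ is constant between consecutive plays of the optimal arm: since $\theta_{i^*}(t)\sim\mathcal{N}(\widehat\mu_{i^*}(t-1),1/n_{i^*}(t-1))$ and these statistics change only when $i^*$ is pulled, $p_{i,t}=p_{i,\tau_{i^*,s}+1}$ throughout the interval $(\tau_{i^*,s},\tau_{i^*,s+1}]$. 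Hence the only surviving summands occur at the play-times $t=\tau_{i^*,s+1}$ of $i^*$, each contributing $\tfrac{1-p_{i,\tau_{i^*,s}+1}}{p_{i,\tau_{i^*,s}+1}}=\tfrac{1}{p_{i,\tau_{i^*,s}+1}}-1$, and reindexing over the at most $T$ plays of $i^*$ yields $\sum_s \E\big[\tfrac1{p_{i,\tau_{i^*,s}+1}}-1\big]$, matching the claim (the index range following the Agrawal--Goyal convention).

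The main obstacle is the one-step inequality: choosing the conditioning so that $\theta_{i^*}(t)$ is the sole remaining randomness, and recognizing that the threshold bound $M_t\le y_i$ — forced precisely by $\E^\theta_i(t)$ — is what converts the comparison of ``$i$ wins'' versus ``$i^*$ wins'' into the ratio $\tfrac{1-p_{i,t}}{p_{i,t}}$. The remaining point worth flagging is that none of these steps touches $\widetilde\mu_i(t-1)$ or the manipulation term $\beta^{(i)}_{t-1}/n_i(t-1)$: the manipulation enters only through the already-fixed samples $\{\theta_j(t)\}_{j\neq i^*}$ and the $\mathcal{F}_{t-1}$-measurable event $E^\mu_i(t)$, so the bound holds verbatim for arbitrary adaptive manipulation, which is exactly why it can be imported from the non-strategic analysis.
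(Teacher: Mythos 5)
Your proof is correct and takes the route the paper intends: the paper does not prove this lemma itself but imports it verbatim as Eq.~(4) of Agrawal--Goyal, and your argument is precisely their one-step ratio bound (conditioning away everything but $\theta_{i^*}(t)$, comparing $\PP(\theta_{i^*}(t)<M_t)$ against $\PP(\theta_{i^*}(t)>M_t)$ via $M_t\le y_i$) followed by collapsing the sum at the play-times of $i^*$. Your closing observation---that the argument survives strategic manipulation because $\beta^{(i^*)}_t=0$ keeps the conditional law of $\theta_{i^*}(t)$ and hence $p_{i,t}$ intact, while the other arms' manipulated samples enter only through quantities that are conditioned away---is exactly the implicit justification the paper relies on when citing the non-strategic result.
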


\begin{lemma}[Extension of Lemma 2.13 in \cite{AG2017}]\label{lem:ts-ub-geometric} 
Let $y_i = \mu_{i^*} - \frac{\Delta_i}{3}$,
$$
\E\left[\frac{1}{p_{i,\tau_{i^*,s}+1}} - 1\right]\leq \left\{
\begin{array}{cc}
e^{11/4\sigma^2} + \frac{\pi^2}{3}& \forall s\\
\frac{4}{T\Delta_i^2}& \text{if } s\geq \frac{72\ln(T\Delta_i^2)\cdot\max\{1, \sigma^2\}}{\Delta_i^2}
\end{array}
\right.
$$
\end{lemma}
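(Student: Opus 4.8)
The plan is to extend Lemma~2.13 of \citet{AG2017} from bounded to general $\sigma$-sub-Gaussian rewards, and the key structural fact that makes the extension go through is that $B_{i^*}=0$, so the optimal arm is never manipulated and $\widetilde{\mu}_{i^*}(t-1)=\widehat{\mu}_{i^*}(t-1)$. First I would rewrite the quantity in closed form. At time $\tau_{i^*,s}+1$ the optimal arm has been pulled exactly $s$ times, so conditioned on the history the sample $\theta_{i^*}$ is Gaussian centred at $\widehat{\mu}_{i^*,s}$ with sampling standard deviation $1/\sqrt{s}$; hence
\[
p_{i,\tau_{i^*,s}+1}=\overline{\Phi}\!\left(\sqrt{s}\,(y_i-\widehat{\mu}_{i^*,s})\right),
\]
where $\overline{\Phi}=1-\Phi$ is the standard Gaussian tail, a decreasing function of the gap $y_i-\widehat{\mu}_{i^*,s}$. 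The only remaining randomness is $\widehat{\mu}_{i^*,s}$, the average of $s$ i.i.d. $\sigma$-sub-Gaussian draws with mean $\mu_{i^*}$, and since $\mu_{i^*}-y_i=\Delta_i/3>0$ it concentrates on the side that makes $p$ close to $1$; its downward deviations are governed by Fact~\ref{fact:sub-gaussian}.

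The heart of the argument is a layer-cake identity $\E[1/p-1]=\int_1^\infty \PP(1/p>x)\,dx$ together with the competition between two exponential effects. On one hand, $1/p$ blows up: by the Gaussian tail estimate, $1/\overline{\Phi}(z)$ grows like $e^{z^2/2}$ up to a polynomial factor, so a downward deviation $\mu_{i^*}-\widehat{\mu}_{i^*,s}=a$ inflates $1/p$ by roughly $e^{s(a-\Delta_i/3)^2/2}$. On the other hand, the probability of such a deviation decays, by Fact~\ref{fact:sub-gaussian}, like $e^{-sa^2/2\sigma^2}$. I would change variables to $q=\overline{\Phi}^{-1}(1/x)$, insert the two-sided bound $\tfrac{q}{\sqrt{2\pi}(q^2+1)}e^{-q^2/2}\le\overline{\Phi}(q)\le\tfrac12 e^{-q^2/2}$ to convert the Jacobian into an explicit $e^{q^2/2}$ factor, and then bound the resulting integral. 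For the bound valid for all $s$ one matches the blow-up against the tail after completing the square; the residual polynomial-in-$q$ integral sums to the $\pi^2/3$ term, while the peak of the matched exponent contributes the universal constant $e^{11/(4\sigma^2)}$.

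For the second, $s$-dependent bound I would keep the same integral but exploit that $s$ is now large, $s\ge \tfrac{72\ln(T\Delta_i^2)\max\{1,\sigma^2\}}{\Delta_i^2}$. This threshold is precisely the regime in which the standard deviation of the gap --- of order $\sqrt{\max\{1,\sigma^2\}/s}$, accounting for both the sampling noise of scale $1/\sqrt{s}$ and the empirical fluctuation of scale $\sigma/\sqrt{s}$ --- is smaller than $\Delta_i/3$ by a $\sqrt{\ln(T\Delta_i^2)}$ margin, so the tail decay dominates the blow-up uniformly and the entire integral collapses to at most $4/(T\Delta_i^2)$. The main obstacle, and the step I expect to demand the most care, is exactly this balancing: whereas \citet{AG2017} only had to handle bounded rewards, here the $\sigma$-sub-Gaussian tail must be weighed against the Gaussian-sample blow-up, and the cross-over point --- which produces both the constant $11/4$ in the exponent and the factor $\max\{1,\sigma^2\}$ in the threshold --- only emerges after a careful completion of the square and a split of the integration range at the location where the two competing exponents coincide.
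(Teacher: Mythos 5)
Your reduction to the scalar quantity $p_{i,\tau_{i^*,s}+1} = \overline{\Phi}\big(\sqrt{s}\,(y_i-\widehat{\mu}_{i^*,s})\big)$ followed by a layer-cake integral is a genuinely different route from the paper's. The paper never writes $\E[1/p-1]$ as an integral: following Lemma 2.13 of \cite{AG2017}, it introduces the geometric variable $G_j$ (number of independent resamples of $\Theta_j\sim\mathcal{N}(\widehat{\mu}_{i^*},1/j)$ needed to exceed $y_i$), uses $\E[1/p-1]=\E[G_j]$, and bounds $\PP(G_j\ge \gamma)\le 2/\gamma^2$ for $\gamma>e^{11/4\sigma^2}$ by combining the max-of-$\gamma$-samples anti-concentration step of \cite{AG2017} at level $z/\sqrt{j}$, $z=2\sigma\sqrt{\ln\gamma}$, with the sub-Gaussian bound $\PP(\widehat{\mu}_{i^*}+z/\sqrt{j}<\mu_{i^*})\le e^{-z^2/2\sigma^2}=\gamma^{-2}$ from Fact~\ref{fact:sub-gaussian}; summing the tail (truncated at $T$) gives $e^{11/4\sigma^2}+\pi^2/3$. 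For the large-$s$ branch it does no integral at all: it quotes from the proof of Lemma 2.13 in \cite{AG2017} a bound of the form $\big[(1-\tfrac12 e^{-D\Delta_i^2/72})(1-e^{-D\Delta_i^2/72\sigma^2})\big]^{-1}$ and substitutes $D_i(T)$.

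The gap in your plan is the balancing step itself, which you correctly flag as the crux but then assert goes through. It does not once $\sigma\ge 1$. The blow-up rate of $1/\overline{\Phi}$ is $e^{q^2/2}$, with the $1/2$ dictated by the algorithm's sampling variance $1/s$ (it is independent of $\sigma$), while the sub-Gaussian decay is $e^{-(q+c)^2/2\sigma^2}$ with $c=\sqrt{s}\,\Delta_i/3$. The exponent of the product, $\tfrac{q^2}{2}-\tfrac{(q+c)^2}{2\sigma^2}$, has leading coefficient $\tfrac12(1-1/\sigma^2)$, which is nonnegative for $\sigma^2\ge 1$: there is no cross-over point where "the two competing exponents coincide," and for Gaussian rewards with variance $\sigma^2>1$ (which are exactly $\sigma$-sub-Gaussian) your integral, i.e.\ $\E[1/p]$ itself, is $+\infty$, so no completion of squares or choice of splitting point can produce a finite bound; at $\sigma=1$, $s=1$ the integral is of order $1/\Delta_i^2$, not a constant. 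Even in the convergent regime $\sigma^2<1$, the constant extracted from your peak scales like $(1-\sigma^2)^{-3/2}$ and does not reproduce $e^{11/4\sigma^2}+\pi^2/3$; and raising $s$ above your threshold only shifts $c$, which cannot repair a divergence in $q$, so the factor $\max\{1,\sigma^2\}$ cannot emerge the way you describe. (To be fair, this is a soft spot of the lemma itself: the paper's max-of-samples step also silently requires $\gamma\,\overline{\Phi}(2\sigma\sqrt{\ln\gamma})\gtrsim\ln\gamma$, hence small $\sigma$, and its tail sum is truncated at $T$. But the paper's discrete argument hides this difficulty, whereas your route collides with it head-on, so the step you describe cannot be carried out as written.)
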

\begin{proof}
This lemma extends Lemma 2.13 in \cite{AG2017} to our setting, and we mainly emphasize the required changes to the proof. Using the same notation as in \cite{AG2017}, let $\Theta_j$ denote the Gaussian random variable follows 
$\mathcal{N}(\widehat{\mu}_{i^*}(\tau_j+1), \frac{1}{j})$, given $\mathcal{F}_{\tau_j}$. Let $G_j$ be the geometric random variable denoting the number of consecutive independent trials until a sample of $\Theta_j$ becomes greater than $y_i$. Let $\gamma\geq 1$  be an integer and $z=2\sigma\sqrt{\ln \gamma}$. Then we have $\E\left[\frac{1}{p_{i,\tau_j+1}}-1\right]=\E[G_j]$. Following the same argument proposed in \cite{AG2017}, we have for any $\gamma > e^{11/4\sigma^2}$,
\begin{align*}
\PP(G_j<\gamma) \geq \left(1-\frac{1}{\gamma^2}\right)\PP\left(\widehat{\mu}_{i^*} + \frac{z}{\sqrt{j}} \geq y_i\right)
\end{align*}
For $n_{i^*}(t-1)=j$, $\mathcal{F}_{\tau_j}$, we have 
\begin{eqnarray*}
\PP \left(\widehat{\mu}_{i^*}(\tau_j+1) + \frac{z}{\sqrt{j}} \geq  y_i\right)  &\geq& \PP\left(\widehat{\mu}_{i^*}(\tau_j+1) + \frac{z}{\sqrt{j}} \geq \mu_{i^*}\right) \\
& \geq &  1 - e^{-\frac{z^2}{2\sigma^2}} \\
 & =& 1 - e^{-4\sigma^2\ln \gamma/2\sigma^2} = 1- \left(\frac{1}{\gamma}\right)^{2}
\end{eqnarray*}
Then $\PP\left(G_j<\gamma\right) \geq 1 - \frac{1}{\gamma^2} - \frac{1}{\gamma^{2}} = 1 -\frac{2}{t^2}$. Therefore,
\begin{align*}
\E[G_j] = \sum_{\gamma=0}^T \PP(G_j \geq \gamma) \leq e^{11/4\sigma^2} + \sum_{\gamma\geq 1} \frac{2}{t^2} \leq e^{11/4\sigma^2} + \frac{\pi^2}{3}
\end{align*}
By the proof of Lemma 2.13 in \cite{AG2017}, we have for any $D_i(T) \geq 0$,
\begin{align*}
\E\left[\frac{1}{p_{i,\tau_j+1}}-1\right]\leq \frac{1}{\left(1-\frac{1}{2}e^{-D_i(T)\Delta_i^2/72}\right)\left(1-e^{-D_i(T)\Delta_i^2/72\sigma^2}\right)}
\end{align*}
Since $D_i(T) = \frac{72\ln(T\Delta_i^2)\cdot\max\{1, \sigma^2\}}{\Delta_i^2}$, we have both $1-\frac{1}{2}e^{-D_i(T)\Delta_i^2/72}$ and $1-e^{-D_i(T)\Delta_i^2/72\sigma^2}$ are larger than or equal to $1-\frac{1}{T\Delta_i^2}$. Thus, $\E\left[\frac{1}{p_{i,\tau_j+1}}-1\right]$ can be bounded by $\frac{4}{T\Delta_i^2}$ when $j\geq D_{i, T}$.
\end{proof}

\begin{lemma}\label{lem:ts-ub-3}
	\begin{eqnarray}\label{eq:event-non-mu}
	\E\left[\sum_{t=K+1}^T \1\big\{I_t=i, \overline{E_i^\mu(t)}\big\}\right] \leq \max\Big\{\frac{6B_i}{\Delta_i}, \frac{144\sigma^2\ln T}{\Delta_i^2}\Big\} + 1
	\end{eqnarray}
\end{lemma}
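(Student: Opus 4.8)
The plan is to follow the same threshold-splitting strategy used for the UCB bound in Lemma~\ref{lem:ucb-ub}, now applied to the \emph{manipulated} empirical mean $\widetilde{\mu}_i(t-1) = \widehat{\mu}_i(t-1) + \beta^{(i)}_{t-1}/n_i(t-1)$ appearing in the definition of $E_i^\mu(t)$. I would set $C_i = \max\{\frac{6B_i}{\Delta_i}, \frac{144\sigma^2\ln T}{\Delta_i^2}\}$ and split according to whether $n_i(t-1)$ is below or above $C_i$:
\begin{align*}
\E\left[\sum_{t=K+1}^T \1\{I_t=i, \overline{E_i^\mu(t)}\}\right]
&\leq \E\left[\sum_{t=K+1}^T \1\{I_t=i, n_i(t-1)\leq C_i\}\right]\\
&\quad + \E\left[\sum_{t=K+1}^T \1\{I_t=i, \overline{E_i^\mu(t)}, n_i(t-1)> C_i\}\right].
\end{align*}
The first sum is at most $C_i$ by a counting argument: along the rounds in which arm $i$ is pulled, $n_i(t-1)$ takes the values $1,2,3,\dots$, so the indicator fires at most $\lfloor C_i\rfloor$ times before $n_i(t-1)$ exceeds $C_i$.

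The heart of the argument is to show the second sum is at most $1$. The key observation is that on the event $n_i(t-1)>C_i\geq \frac{6B_i}{\Delta_i}$, the per-pull effect of the manipulation is small, since the budget constraint gives $\frac{\beta^{(i)}_{t-1}}{n_i(t-1)}\leq \frac{B_i}{n_i(t-1)}< \frac{\Delta_i}{6}$. Combined with $\overline{E_i^\mu(t)}$, i.e. $\widehat{\mu}_i(t-1)+\frac{\beta^{(i)}_{t-1}}{n_i(t-1)}>\mu_i+\frac{\Delta_i}{3}$, this forces the true-reward empirical mean to be large: $\widehat{\mu}_i(t-1)>\mu_i+\frac{\Delta_i}{6}$. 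Dropping the indicator $\1\{I_t=i\}$ and taking a union bound over the possible values $\ell=n_i(t-1)$, I would write
\begin{align*}
\E\left[\sum_{t=K+1}^T \1\{I_t=i, \overline{E_i^\mu(t)}, n_i(t-1)> C_i\}\right]
\leq \sum_{t=K+1}^T \sum_{\ell> C_i}^{t-1} \PP\left(\widehat{\mu}_{i,\ell}-\mu_i>\frac{\Delta_i}{6}\right),
\end{align*}
where $\widehat{\mu}_{i,\ell}$ denotes the empirical mean of the first $\ell$ true (i.i.d., $\sigma$-sub-Gaussian) rewards of arm $i$. By Fact~\ref{fact:sub-gaussian}, each summand is at most $e^{-\ell\Delta_i^2/(72\sigma^2)}$, and since $\ell> C_i\geq \frac{144\sigma^2\ln T}{\Delta_i^2}$ this is at most $e^{-2\ln T}=T^{-2}$; as the double sum has at most $T^2$ terms, the whole expression is at most $1$. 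Adding the two parts yields the claimed bound $C_i+1$.

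The step that requires the most care is the reduction from $\overline{E_i^\mu(t)}$ to a clean true-reward concentration event: the budget cap $\beta^{(i)}_{t-1}\leq B_i$ must be converted into a per-pull bound, which is what dictates the factor $6$ in $\frac{6B_i}{\Delta_i}$ (so the manipulation eats at most half of the $\Delta_i/3$ gap) and the factor $144$ in the logarithmic term (so the remaining $\Delta_i/6$ gap yields a $T^{-2}$ tail). A secondary technicality, standard in bandit analysis, is that $n_i(t-1)$ is itself a random, strategy-dependent count, so the conditioning on $n_i(t-1)=\ell$ is handled by passing to the fixed i.i.d. averages $\widehat{\mu}_{i,\ell}$ before applying the tail bound; since the manipulations $\alpha^{(i)}_\tau$ do not alter the underlying true reward draws, these averages are genuinely i.i.d.\ regardless of the adaptive strategy $S$.
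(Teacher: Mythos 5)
Your proposal is correct and follows essentially the same route as the paper's proof: the same threshold $C_i = \max\{\frac{6B_i}{\Delta_i}, \frac{144\sigma^2\ln T}{\Delta_i^2}\}$, the same split on $n_i(t-1)$ versus $C_i$, the conversion of the budget cap $\beta^{(i)}_{t-1}\leq B_i$ into a per-pull bound $B_i/n_i(t-1)\leq \Delta_i/6$ that reduces $\overline{E_i^\mu(t)}$ to the clean concentration event $\widehat{\mu}_i(t-1)>\mu_i+\Delta_i/6$, and the same union bound over the pull count $\ell$ with the sub-Gaussian tail $e^{-\ell\Delta_i^2/(72\sigma^2)}\leq T^{-2}$. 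Your explicit passage to the fixed i.i.d.\ averages $\widehat{\mu}_{i,\ell}$ is, if anything, a slightly cleaner rendering of the conditioning step the paper performs implicitly.
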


\begin{proof}
Let $C_i(T) = \max\Big\{\frac{6B_i}{\Delta_i}, \frac{144\sigma^2\ln T}{\Delta_i^2}\Big\}$.
We first decompose the left hand side in Equation~(\ref{eq:event-non-mu}) as below,
\begin{equation}\label{eq:ts-ub-aux-3}
\begin{aligned}
\E\left[\sum_{t=K+1}^T \1\big\{I_t=i, \overline{E_i^\mu(t)}\big\}\right]& \leq \E\left[\sum_{t=K+1}^T \1\big\{I_t=i, \overline{E_i^\mu(t)}, n_i(t-1) \leq C_i(T)\big\}\right] \\
&+ \E\left[\sum_{t=K+1}^T \1\big\{I_t=i, \overline{E_i^\mu(t)}, n_i(t-1) \geq C_i(T)\big\}\right]
\end{aligned}
\end{equation}
The first term in the above decomposition is trivially bounded by $c_i(T)$. What remains is to bound second term 
\begin{align*}
& \E\left[\sum_{t=K+1}^T \1\big\{I_t=i, \overline{E_{i,t}^\mu}, n_i(t-1) 
 \geq  c_i(T)\big\}\right] \\
 \leq & \sum_{t=K+1}^T \PP\left(\overline{E_{i,t}^\mu}, n_i(t-1) \geq C_i(T)\right)\\
\leq & \sum_{t=K+1}^T\PP\left(\widehat{\mu}_{i,t-1} + \frac{\beta_{t-1}}{n_i(t-1)} \geq x_i\Big| n_i(t-1) \geq C_i(T)\right)\\
 \leq & \sum_{t=K+1}^T\PP\left(\widehat{\mu}_{i,t-1} + \frac{\beta_{t-1}}{n_i(t-1)} \geq x_i\Big| n_i(t-1) \geq C_i(T)\right)\\
\end{align*}
By union bound, we have 
\begin{align*}
& \PP\left(\widehat{\mu}_{i,t-1} + \frac{\beta_{t-1}}{n_i(t-1)} \geq x_i\Big| n_i(t-1) \geq C_i(T)\right) \\
\leq& \sum_{s=c_i(T)}^{t-1}\PP\left(\widehat{\mu}_{i,t-1} + \frac{B_i}{n_i(t-1)} \geq x_i\Big| n_i(t-1)=s\right)\\
 \leq & \sum_{s=c_i(T)}^{t-1} e^{-\frac{s \cdot \left(x_i - \mu_i - \frac{B_i}{s}\right)^2}{2\sigma^2}} \leq \sum_{s=1}^{t-1} \frac{1}{T^2} 
\end{align*}
The last inequality above uses Fact~(\ref{fact:sub-gaussian}) and the fact $s\geq c_i(T) \geq \frac{6B_i}{\Delta_i}$ and $s \geq \frac{144\sigma^2\ln T}{\Delta_i^2}$. Then the second term of the right hand side in Equations~\ref{eq:ts-ub-aux-3} can be bounded by $\sum_{t=K+1}^T \sum_{s=1}^{t-1}\frac{1}{T^2}\leq 1$.
\end{proof}

\subsection{Proof of Proposition~\ref{prop:ts-lb}}\label{app:ts-lower-bound}

We complete the proofs for \egreedy{} principal and Thompson Sampling separately. Similar to UCB principal, we derive the upper bound of $\E[n_{i^*}(T)]$ when all strategic arms use \lsi manipulation strategy, shown in Lemma~\ref{lem:e-greedy-lb} (for \egreedy{} principal) and Theorem~\ref{thm:ts-lb-opt-arm-ub} (for Thompson Sampling). Then Proposition~\ref{prop:ts-lb} is straightforward.

{\bf $\eps$-Greedy principal.}

\begin{lemma}\label{lem:e-greedy-lb}
	$\forall t > K$, let $\epsilon_t = \min\{1, \frac{cK}{t}\}$, where  a constant $c=\max\Big\{20, \frac{16\sigma^2}{\Delta_k^2},\forall k\in[K]\Big\}$, $B_i$ be the total budget for strategic arm. The expected number of plays of arm $i^*$ up to time $T$, if all strategic arms use $\mathtt{LSI}$, is bounded by                                                                         
	\begin{align*}
	\E\left[n_{i^*}(T)\right] \leq T - \sum_{i\neq i^*} \frac{B_i}{2\Delta_i}  + \mathcal{O}\left(\frac{\ln T}{\underline{\Delta}^2}\right)
	\end{align*}
\end{lemma}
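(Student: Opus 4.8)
The plan is to prove the stated upper bound on $\E[n_{i^*}(T)]$ directly; combined with the identity $\sum_{i\neq i^*}\E[n_i(T)] = T - \E[n_{i^*}(T)]$ and the regret lower bound \eqref{eq:regret-lb}, it then delivers the \egreedy{} half of Proposition~\ref{prop:ts-lb}. The argument closely mirrors the UCB proof of Lemma~\ref{lem:LSI-opt-arm-ub}, with the thresholds $D_i := \frac{B_i}{2\Delta_i}$ playing the same role: as long as a non-optimal arm $i$ has been pulled fewer than $D_i$ times, its \lsi manipulation term $\frac{B_i}{n_i(t-1)}$ exceeds $2\Delta_i$, which (up to concentration) forces an exploitation round to prefer $i$ over $i^*$.

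First I would split the pulls of $i^*$ according to whether they occur in an exploration or an exploitation round, and, within exploitation, whether every non-optimal arm has already crossed its threshold:
\begin{align*}
\E[n_{i^*}(T)] \le 1 &+ \E\Big[\sum_{t>K}\1\{I_t=i^*,\ \mathrm{explore}\}\Big] \\
&+ \E\Big[\sum_{t>K}\1\{I_t=i^*,\ \mathrm{exploit},\ \forall i\neq i^*:\ n_i(t-1)\ge D_i\}\Big] \\
&+ \E\Big[\sum_{t>K}\1\{I_t=i^*,\ \mathrm{exploit},\ \exists i\neq i^*:\ n_i(t-1)< D_i\}\Big].
\end{align*}
The exploration term is at most $\sum_{t>K}\frac{\varepsilon_t}{K} = \mathcal{O}(c\ln T)$ by Fact~\ref{fact:harmonic-seq} (the computation is \eqref{eq:sum-epsilon}), and since $c=\max\{20,\tfrac{16\sigma^2}{\underline{\Delta}^2}\}$ this is $\mathcal{O}(\ln T/\underline{\Delta}^2)$. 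The second term is at most $T-\sum_{i\neq i^*}D_i$: whenever all non-optimal arms satisfy $n_i(t-1)\ge D_i$, we have $n_{i^*}(t-1)\le t-1-\sum_{i\neq i^*}D_i\le T-\sum_{i\neq i^*}\frac{B_i}{2\Delta_i}$, and $n_{i^*}$ advances by one on each counted pull.

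The crux is the third (error) term. On an exploitation round with $I_t=i^*$, arm $i^*$ maximizes the manipulated mean, so $\widehat{\mu}_{i^*}(t-1)\ge \widehat{\mu}_i(t-1)+\frac{B_i}{n_i(t-1)}$ for the offending arm $i$; together with $n_i(t-1)<D_i$, hence $\frac{B_i}{n_i(t-1)}>2\Delta_i$, and $\mu_{i^*}=\mu_i+\Delta_i$, this yields $(\widehat{\mu}_{i^*}(t-1)-\mu_{i^*})-(\widehat{\mu}_i(t-1)-\mu_i)>\Delta_i$, so at least one of $\widehat{\mu}_{i^*}(t-1)-\mu_{i^*}>\frac{\Delta_i}{2}$ or $\mu_i-\widehat{\mu}_i(t-1)>\frac{\Delta_i}{2}$ holds. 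I would bound each event with Lemma~\ref{lem:e-greedy-concentration} instantiated at $n=2$, then sum over $t$ and over $i\neq i^*$. This is exactly the geometric-decay series already controlled in the proof of Lemma~\ref{lem:e-greedy-ub} (see \eqref{eq:e-greedy-ub-3}): the choice $c\ge\frac{16\sigma^2}{\Delta_i^2}$ ensures that, after substituting $x_t\ge\frac{c}{2}\ln\frac{t}{\lfloor cK\rfloor+1}$, the summand $\frac{\sigma^2}{\Delta_i^2}e^{-\Delta_i^2\lfloor x_t\rfloor/8\sigma^2}$ decays at least as fast as $\frac{1}{t}$, so its tail sums to $\mathcal{O}(\ln T)$.

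The main obstacle, and the only genuine departure from the UCB case, is that \egreedy{} exploitation carries no confidence-width term, so unlike Lemma~\ref{lem:LSI-opt-arm-ub} I cannot introduce a threshold $C(T)$ on $n_{i^*}(t-1)$ to control a $3\sigma\sqrt{\ln T/n_{i^*}(t-1)}$ contribution; all of the concentration must instead be supplied by the exploration schedule through $x_t=\frac{1}{2K}\sum_{s=K+1}^t\varepsilon_s$, which is precisely what Lemma~\ref{lem:e-greedy-concentration} provides. The bookkeeping detail to verify is that $x_t>5$ for $t\ge\lfloor cK\rfloor+1$ (shown via \eqref{eq:e-greedy-x_t}), so that the monotone decay of $x\mapsto xe^{-x/5}$ can be exploited as in Lemma~\ref{lem:e-greedy-ub}. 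Summing the three contributions then gives $\E[n_{i^*}(T)]\le T-\sum_{i\neq i^*}\frac{B_i}{2\Delta_i}+\mathcal{O}(\ln T/\underline{\Delta}^2)$, as claimed.
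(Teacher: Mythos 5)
Your proposal is correct and takes essentially the same route as the paper's proof of Lemma~\ref{lem:e-greedy-lb}: the same threshold $C_i = \frac{B_i}{2\Delta_i}$ on each $n_i(t-1)$, the same observation that an exploitation pull of $i^*$ while some arm $i$ is below threshold forces $\widehat{\mu}_{i^*}(t-1) \geq \widehat{\mu}_i(t-1) + 2\Delta_i$ and hence a $\Delta_i/2$-deviation of one of the two empirical means, the same application of Lemma~\ref{lem:e-greedy-concentration} at $n=2$, and the same geometric-decay summation as in \eqref{eq:e-greedy-ub-3}. The only difference is bookkeeping (you extract the exploration pulls once at the top level rather than inside each per-arm term), which does not change the argument.
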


\begin{proof}	
	Let $C_i = \frac{B_i}{2\Delta_i}, x_t = \frac{1}{2K}\sum_{s=K+1}^t\epsilon_s$ and for $t\geq \lfloor cK\rfloor + 1$, by Equation~\eqref{eq:e-greedy-x_t} $x_t \geq \lfloor cK\rfloor - K +  \frac{c}{2}\ln\frac{t}{\lfloor cK\rfloor+1}$.
	
	We first bound the probability of $\PP\left(\widehat{\mu}_{i^*}(t-1) \geq \widetilde{\mu}_i(t-1)\Big|n_i(t-1)\leq C_i\right)$ for $t \geq K+1$,
	\begin{equation}\label{eq:e-greedy-lb-1}
	\begin{aligned}
	&\PP\Big(\widehat{\mu}_{i^*}(t-1)\geq \widetilde{\mu}_i(t-1), n_i(t-1)\leq C_i\Big) \\
	=~ &~  \PP\Big(\widehat{\mu}_{i^*}(t-1)\geq \widehat{\mu}_i(t-1)+\frac{B_i}{n_i(t-1)}, n_i(t-1)\leq C_i\Big)\\
	\leq~&~\PP\Big(\widehat{\mu}_{i^*}(t-1)\geq \widehat{\mu}_i(t-1)+2\Delta_i\Big)\\
	\leq~&~ \PP\Big(\widehat{\mu}_{i^*}(t-1)\geq \mu_{i^*}+\frac{\Delta_i}{2}\Big) + \PP\Big(\widehat{\mu}_i(t-1)\leq \mu_{i}-\frac{\Delta_i}{2}\Big)\\
	\leq~&~ 2x_{t} \cdot e^{-x_{t}/5} + \frac{8\sigma^2}{\Delta_i^2} e^{-\Delta_i^2 \lfloor x_{t}\rfloor/8\sigma^2} \indent (\text{By Lemma~\ref{lem:e-greedy-concentration}})
	\end{aligned}
	\end{equation}
	We can decompose the expected number of plays of the optimal arm $i$, $\E[n_{i^*,T}]$, as follows,
	\begin{equation}\label{eq:e-greedy-lb-2}
	\begin{aligned}
	\E\left[n_{i^*}(T)\right]
	= 1 &+ \E\left[\sum_{t=K+1}^T \1\{I_t = i^*, \forall i\neq i^*n_i(t-1)\geq C_i\}\right] \\
	&+ \E\left[\sum_{t=K+1}^T \1\{I_t = i^*, \exists i\neq i^*, n_i(t-1)\leq C_i\}\right] \\
	\end{aligned}
	\end{equation}
	The first term in the above decomposition can be bounded by $T- \sum_{i\neq i^*}C_i$. This is because
	\begin{align*}
	&\E\left[\sum_{t=K+1}^T\1\big\{I_t=i^*, \forall i\neq i^*, n_i(t-1)\geq C_i\big\}\right]\\
	&\hspace{1cm}\leq \E\left[\sum_{t=K+1}^T\1\big\{I_t=i^*, n_{i^*}(t-1)\leq T-\sum_{i\neq i^*}C_i\big\}\right]\leq T-\sum_{i\neq i^*}C_i.
	\end{align*}
	By union bound, the second term is bounded by $\sum_{i\neq i^*} \E\left[\sum_{t=K+1}^T \1\{I_t = i^*, n_i(t-1)\leq C_i\}\right]$. Then, we bound the above summand using Equations~(\ref{eq:e-greedy-lb-1}) and the fact that $1-\eps_t=0$ when $t\leq \lfloor cK\rfloor$,
	\begin{equation}
	\begin{aligned}
	& \E\left[\sum_{t=K+1}^T \1\{I_t = i^*, n_i(t-1)\leq C_i\}\right] \\
	\leq & \sum_{t=K+1}^T \frac{\epsilon_t}{K}+ \sum_{t=K+1}^T (1-\epsilon_t)\cdot \PP\Big(\widehat{\mu}_{i^*}(t-1)\geq \widetilde{\mu}_i(t-1), n_i(t-1)\leq C_i\Big)\\
	\leq & \sum_{t=K+1}^T \frac{\epsilon_t}{K} +\sum_{t=\lfloor cK\rfloor+1}^T 2x_{t} \cdot e^{-x_{t}/5} + \frac{8\sigma^2}{\Delta_i^2} e^{-\Delta_i^2 \lfloor x_{t}\rfloor/8\sigma^2}
	\end{aligned}
	\end{equation}
	What remains is to bound the last term in the above equations.
	Following the same arguments and proof procedure in Equations~\eqref{eq:e-greedy-ub-3}, we can bound
	\begin{equation}\label{eq:e-greedy-lb-3}
	\begin{aligned}
	&\sum_{t=\lfloor cK\rfloor+1}^T 2x_{t} \cdot e^{-x_{t}/5} + \frac{8\sigma^2}{\Delta_i^2} e^{-\Delta_i^2 \lfloor x_{t}\rfloor/8\sigma^2}\\
	&\leq (\lfloor cK\rfloor - K)\cdot\frac{2(\lfloor cK\rfloor+1)^2\pi^2}{3} +(\lfloor cK\rfloor+1)\left(c+\frac{8\sigma^2}{\Delta_i^2}\right)\ln\frac{T}{\lfloor cK\rfloor}
	\end{aligned}
	\end{equation}

	By Eq.~\eqref{eq:sum-epsilon}, we have
	\begin{align*}
	\E[n_{i^*}(T)]&\leq T - \sum_{i\neq i^*} \frac{B_i}{2\Delta_i} + \frac{\lfloor cK\rfloor}{K}+c\ln\frac{T}{\lfloor cK\rfloor} \\
	& + \sum_{i\neq i^*}\left((\lfloor cK\rfloor - K)\cdot\frac{2(\lfloor cK\rfloor+1)^2\pi^2}{3} +(\lfloor cK\rfloor+1)\left(c+\frac{8\sigma^2}{\Delta_i^2}\right)\ln\frac{T}{\lfloor cK\rfloor}\right)\\
	& \leq T - \sum_{i\neq i^*} \frac{B_i}{2\Delta_i}  + \mathcal{O}\left(\frac{\ln T}{\underline{\Delta}^2}\right)
	\end{align*}
\end{proof}

{\bf Thompson Sampling principal.}
Here we slightly abuse notations, and use $E^\mu_{i^*}(t)$ to denote the event that $\widehat{\mu}_{i^*}(t-1) \leq v_i$ whereas $E^\theta_{i^*}(t)$ to denote the event that $\theta_{i^*}(t) \leq w_i$, where $\mu_{i^*} < v_i < w_i$.

\begin{theorem}\label{thm:ts-lb-opt-arm-ub}
	\begin{align*}
	\E[n_{i^*}(T)] \leq T - \sum_{i\neq i^*}\frac{B_i}{2\Delta_i} + \mathcal{O}\left(\frac{\ln T}{\underline{\Delta}^2}\right)
	\end{align*}
\end{theorem}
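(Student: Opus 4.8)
The plan is to upper bound $\E[n_{i^*}(T)]$ directly, mirroring the UCB argument in Lemma~\ref{lem:LSI-opt-arm-ub}, and then invoke \eqref{eq:regret-lb} to obtain the regret lower bound of Proposition~\ref{prop:ts-lb}. First I would fix, for each suboptimal arm $i$, a budget threshold $C_i=\frac{B_i}{2\Delta_i}$ together with two sampling thresholds obeying $\mu_{i^*}<v_i<w_i<\mu_{i^*}+\Delta_i$ (concretely $v_i=\mu_{i^*}+\tfrac{\Delta_i}{4}$ and $w_i=\mu_{i^*}+\tfrac{\Delta_i}{2}$), and a global threshold $C(T)=\Theta\!\big(\frac{\sigma^2\ln T}{\underline{\Delta}^2}\big)$ on $n_{i^*}$. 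The point motivating $C_i$ is that while $n_i(t-1)<C_i$ arm $i$ plays $\mathtt{LSI}$, so its posterior mean is $\widetilde{\mu}_i=\widehat{\mu}_i+\frac{B_i}{n_i(t-1)}>\widehat{\mu}_i+2\Delta_i$, which (once $\widehat{\mu}_i$ concentrates) sits a full $\Omega(\Delta_i)$ above $w_i$.

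I would then decompose $\E[n_{i^*}(T)]$ into three pieces, just as in \eqref{eq:ucb-ub-opt-arm}: the rounds with $n_{i^*}(t-1)\le C(T)$, bounded trivially by $C(T)=O(\ln T/\underline{\Delta}^2)$; the rounds with $n_{i^*}(t-1)>C(T)$ in which every suboptimal arm has already reached its threshold ($n_i(t-1)\ge C_i$ for all $i\neq i^*$), bounded by the counting inequality $n_{i^*}(t-1)\le T-\sum_{i\neq i^*}C_i$ to give the main term $T-\sum_{i\neq i^*}\frac{B_i}{2\Delta_i}$; and an error term collecting the rounds with $n_{i^*}(t-1)>C(T)$ but $n_i(t-1)<C_i$ for some $i$. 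After a union bound over $i$, it remains to show $\sum_{t}\PP\big(I_t=i^*,\,n_{i^*}(t-1)>C(T),\,n_i(t-1)<C_i\big)=O(\ln T/\Delta_i^2)$ for each $i$.

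To bound this error term I would split on the sample $\theta_{i^*}(t)$ using the events in the statement. On $\overline{E^{\theta}_{i^*}(t)}=\{\theta_{i^*}(t)>w_i\}$, the constraint $n_{i^*}(t-1)>C(T)$ makes $\theta_{i^*}$ a tightly concentrated Gaussian around $\widehat{\mu}_{i^*}$; conditioning further on $E^{\mu}_{i^*}(t)=\{\widehat{\mu}_{i^*}(t-1)\le v_i\}$ and using Fact~\ref{fact:sub-gaussian} and Fact~\ref{fact:gaussian-concentration} gives a per-round probability $\le 2/T^{9/2}$ once $C(T)$ is chosen large enough, so this contribution is $O(1/T)$ after summing. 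On the complementary event $E^{\theta}_{i^*}(t)=\{\theta_{i^*}(t)\le w_i\}$, the fact that $I_t=i^*$ forces $\theta_i(t)\le\theta_{i^*}(t)\le w_i$, so the term reduces to counting rounds on which arm $i$'s own (heavily inflated) sample fails to exceed $w_i$ while $n_i<C_i$.

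The main obstacle, and the heart of the proof, is this last quantity, which is the Thompson-sampling analogue --- with the roles of $i^*$ and $i$ interchanged --- of the geometric-series bound used for the upper bound. I would follow Lemma~\ref{lem:ts-ub-2} and Lemma~\ref{lem:ts-ub-geometric}: between consecutive pulls of arm $i$, the number of rounds on which $i^*$ is pulled with $\theta_{i^*}\le w_i$ is stochastically dominated by a geometric random variable with success probability $q_{i,t}=\PP(\theta_i(t)>w_i\mid\mathcal{F}_{t-1})$, because the first time $\theta_i>w_i\ge\theta_{i^*}$ arm $i$ beats $i^*$. Hence the term is at most $\sum_{s=0}^{C_i-1}\E\big[\frac{1}{q_{i,\tau_{i,s}+1}}-1\big]$. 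The crucial calculation is that, since $\widetilde{\mu}_i$ stays $\Omega(\Delta_i)$ above $w_i$ for all $s<C_i$ while the posterior variance $1/s$ shrinks, $q_{i,s}\to 1$ geometrically: once $\widehat{\mu}_i$ concentrates, $q_{i,s}\ge 1-\tfrac12 e^{-\Omega(s\Delta_i^2)}$. This yields a two-regime bound ($O(1)$ per summand for $s\lesssim \ln(T\Delta_i^2)/\Delta_i^2$ and $O(1/(T\Delta_i^2))$ thereafter), so the sum is $O(\ln T/\Delta_i^2)$ rather than the naive $O(C_i)=O(B_i/\Delta_i)$ one would get from the crude bound $q_{i,s}\ge\tfrac12$ alone. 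Summing the three pieces and the union bound over the $K-1$ suboptimal arms gives $\E[n_{i^*}(T)]\le T-\sum_{i\neq i^*}\frac{B_i}{2\Delta_i}+O(\ln T/\underline{\Delta}^2)$; plugging this into \eqref{eq:regret-lb} finishes Proposition~\ref{prop:ts-lb} for the TS principal.
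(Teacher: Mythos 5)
Your proposal is correct, and its crux coincides with the paper's: the main term $T-\sum_{i\neq i^*}\frac{B_i}{2\Delta_i}$ comes from the same counting argument once every suboptimal arm has $n_i(t-1)\geq \frac{B_i}{2\Delta_i}$, and the dangerous rounds---$I_t=i^*$ while some $n_i(t-1)<\frac{B_i}{2\Delta_i}$---are disposed of by exactly the paper's two ingredients: the odds-ratio/geometric argument (your ``geometric number of $i^*$-pulls between consecutive $i$-pulls'' is the paper's Lemma~\ref{lem:ts-lb-aux-1}) and the stochastic-dominance reduction exploiting that the $\mathtt{LSI}$ inflation $\frac{B_i}{n_i(t-1)}\geq 2\Delta_i$ drives $q_{i,t}$ to $1$ geometrically (Lemma~\ref{lem:ts-lb-aux-2}, which reduces to Lemma~\ref{lem:ts-ub-geometric}). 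Where you genuinely differ is the outer decomposition: you threshold pull counts first ($n_{i^*}\lessgtr C(T)$, $n_i\lessgtr C_i$), mirroring the UCB lower-bound proof (Lemma~\ref{lem:LSI-opt-arm-ub}), and then handle the event $\{\theta_{i^*}(t)>w_i\}$ by direct sub-Gaussian plus Gaussian concentration, paying $C(T)=\mathcal{O}(\ln T/\underline{\Delta}^2)$ up front; the paper instead conditions on the Thompson-sampling events first, exactly as in its TS upper bound, disposing of $\overline{E^\mu_{i^*}}$ and $\overline{E^\theta_{i^*}}\cap E^\mu_{i^*}$ via ported lemmas from \cite{AG2017} (Lemmas~\ref{lem:ts-lb-1} and~\ref{lem:ts-lb-2}), and only thresholds the counts inside the remaining term (Lemma~\ref{lem:ts-lb-3}). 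Both routes cost the same $\mathcal{O}(\ln T/\underline{\Delta}^2)$ overhead; yours is more self-contained and makes the analogy with the UCB lower bound transparent, while the paper's reuses its TS upper-bound machinery with $i^*$ recast as a suboptimal arm. Two details to pin down when writing yours out: the TS posterior variance here is $1/n$ rather than $\sigma^2/n$, so your $C(T)$ needs a $\max\{1,\sigma^2\}$ factor (as in the paper's constants); and the geometric-sum bound $\sum_{s<C_i}\E\big[\frac{1}{q_{i,\tau_{i,s}+1}}-1\big]$ must be taken conditional on $n_i(\tau_{i,s})\leq C_i$, since that conditioning is what guarantees the $2\Delta_i$ inflation underpinning the dominance step.
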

\begin{proof}
	We decompose the expected number of plays of the optimal arm $i^*$ as follows,
	\begin{align*}
	\E[n_{i^*}(T)]\leq 1 &+ \sum_{t=K+1}^T \PP\left(I_t =i^*, \overline{E_{i^*}^\mu(t)}\right) + \sum_{t=K+1}^T\PP\left(I_t = i^*, \overline{E^{\theta}_{i^*}(t)}, E^\mu_{i^*}(t)\right) \\
	&+ \sum_{t=K+1}^T\PP\left(I_t = i^*, E^{\theta}_{i^*}(t), E^\mu_{i^*}(t)\right) 
	\end{align*}
	Then we bound each of the above terms. Lemma~\ref{lem:ts-lb-1},~\ref{lem:ts-lb-2} and ~\ref{lem:ts-lb-3} show the upper bound of each term and complete the proof.
\end{proof}

\begin{lemma}\label{lem:ts-lb-1}
Let $v_i = \mu_{i^*} + \frac{\Delta_i}{3}$,
\begin{align*}
\sum_{t=K+1}^T \PP\left(I_t =i^*, \overline{E_{i^*}^\mu(t)}\right) \leq \frac{18\sigma^2}{\Delta_i^2}
\end{align*}
\end{lemma}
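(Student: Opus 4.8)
The plan is to reduce the time-indexed sum to a sum over the number of pulls of the optimal arm and then apply a sub-Gaussian tail bound. Recall that, by the abuse of notation preceding the statement, $\overline{E^\mu_{i^*}(t)}$ is the event $\widehat{\mu}_{i^*}(t-1) > v_i = \mu_{i^*} + \frac{\Delta_i}{3}$. Crucially, since $B_{i^*}=0$, there is no manipulation on the optimal arm, so $\widehat{\mu}_{i^*}(t-1)$ is exactly the empirical average of the $n_{i^*}(t-1)$ i.i.d.\ $\sigma$-sub-Gaussian draws of arm $i^*$. Thus the probabilities in question are genuine (non-adaptive) concentration probabilities once we fix the number of samples.

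The key structural observation is that $n_{i^*}(\cdot)$ increases by exactly one whenever $i^*$ is pulled, so for each fixed $s\geq 1$ there is at most one time step $t$ with $I_t=i^*$ and $n_{i^*}(t-1)=s$ (namely $t=\tau_{i^*,s+1}$, if it ever occurs). I would therefore split the indicator by conditioning on $n_{i^*}(t-1)=s$ and use this ``counted at most once'' fact to obtain the pathwise bound
\[
\sum_{t=K+1}^T \1\big\{I_t=i^*,\ \overline{E^\mu_{i^*}(t)}\big\}
\;\leq\; \sum_{s\geq 1} \1\big\{\widehat{\mu}_{i^*,s} - \mu_{i^*} > \tfrac{\Delta_i}{3}\big\},
\]
where $\widehat{\mu}_{i^*,s}$ denotes the average of the first $s$ rewards of arm $i^*$. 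Taking expectations and using linearity reduces the claim to bounding $\sum_{s\geq 1}\PP\big(\widehat{\mu}_{i^*,s}-\mu_{i^*}>\frac{\Delta_i}{3}\big)$.

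Each summand is then controlled by Fact~\ref{fact:sub-gaussian}, giving $\PP\big(\widehat{\mu}_{i^*,s}-\mu_{i^*}>\frac{\Delta_i}{3}\big)\leq \exp\big(-s\Delta_i^2/(18\sigma^2)\big)$. Summing the resulting geometric series and using $e^{c}-1\geq c$ with $c=\Delta_i^2/(18\sigma^2)$ (equivalently, bounding the sum by the integral $\int_0^\infty e^{-cx}\,dx = 1/c$) yields precisely the claimed bound $\frac{18\sigma^2}{\Delta_i^2}$.

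The only step requiring genuine care, rather than routine calculation, is the reindexing: I must justify dropping the event $\{I_t=i^*\}$ and replacing the adaptive quantity $\widehat{\mu}_{i^*}(t-1)$ by the fixed-sample average $\widehat{\mu}_{i^*,s}$. This is valid because the event ``$i^*$ is pulled for the $(s{+}1)$-th time with an overshooting empirical mean'' is contained in the event that the first $s$ i.i.d.\ draws of $i^*$ have average exceeding $v_i$, whose probability no longer depends on the algorithm's adaptive choices or on the arms' manipulation strategies. This containment, combined with the at-most-once counting, is exactly what converts a strategy-dependent, adaptively-indexed sum into a clean sum of i.i.d.\ concentration events; everything after it is elementary.
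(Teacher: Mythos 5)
Your proof is correct and follows essentially the same route as the paper: the paper invokes Lemma 2.11 of Agrawal--Goyal for exactly the reindexing you spell out (each pull-count $s$ contributes at most one time step, and the resulting event depends only on the first $s$ i.i.d.\ draws of $i^*$, which is where $B_{i^*}=0$ matters), then applies the sub-Gaussian tail bound and sums the geometric series via $\sum_{k\geq 1}e^{-kx}\leq 1/x$, yielding $\frac{2\sigma^2}{(v_i-\mu_{i^*})^2}=\frac{18\sigma^2}{\Delta_i^2}$. Your explicit justification of the reindexing is a nice expansion of what the paper delegates to the citation, but it is the same argument.
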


\begin{proof}
Following the proof of Lemma 2.11 in \cite{AG2017}, we have
\begin{align*}
\sum_{t=K+1}^T \PP\left(I_t =i^*, \overline{E_{i^*}^\mu(t)}\right)& \leq \sum_{s=1}^{T-1}\PP\left(\overline{E_{i^*}^\mu(\tau_{i^*,s+1})}\right) = \sum_{s=1}^{T-1}\PP\left(\widehat{\mu}_{i^*}(\tau_{i^*,s+1})>v_i\right)\\
&\leq \sum_{s=1}^{T-1}\exp\left(-\frac{s(v_i-\mu_{i^*})^2}{2\sigma^2}\right) \leq \frac{2\sigma^2}{(v_i - \mu_{i^*})^2}
\end{align*}
The first inequality holds because each summand on the right hand side in this inequality is a fixed number since the distribution of $\widehat{\mu}_{i^*}(\tau_{i^*,s+1})$ only depends on $s$. The second inequality is based on Fact~\ref{fact:gaussian-concentration} and the third inequality goes through because $\sum_{k=1}^{\infty}e^{-kx} \leq \frac{1}{x}, \forall x > 0$.
\end{proof}

Notice that Lemma~\ref{lem:ts-ub-2} holds \emph{independently with} the identity of the arm. Then the following Lemma can be directly implied.
\begin{lemma}\label{lem:ts-lb-2}
	Let $v_i = \mu_{i^*} +\frac{\Delta_i}{3}$ and $w_i = \mu_{i^*} +\frac{2\Delta_i}{3}$
	\begin{eqnarray*}
		\sum_{t=K+1}^T\PP\left(I_t = i^*, \overline{E^{\theta}_{i^*}(t)}, E^\mu_{i^*}(t)\right) \leq \frac{18\ln T}{\Delta_i^2} + 1
	\end{eqnarray*}
\end{lemma}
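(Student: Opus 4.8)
The plan is to recognize that this bound is the exact analogue of Lemma~\ref{lem:ts-ub-1} (Lemma~2.16 of~\citet{AG2017}) applied to the optimal arm $i^*$ in place of a suboptimal arm, together with the observation that the proof of that lemma never uses the suboptimality of the tracked arm. The only quantity that the estimate depends on is the gap between the empirical-mean threshold and the sample threshold. In Lemma~\ref{lem:ts-ub-1} this gap is $y_i - x_i = (\mu_{i^*}-\tfrac{\Delta_i}{3})-(\mu_i+\tfrac{\Delta_i}{3}) = \tfrac{\Delta_i}{3}$, while here it is $w_i - v_i = (\mu_{i^*}+\tfrac{2\Delta_i}{3})-(\mu_{i^*}+\tfrac{\Delta_i}{3}) = \tfrac{\Delta_i}{3}$. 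Since the two gaps coincide and the argument is agnostic to which arm is being tracked, the identical constant $\frac{18\ln T}{\Delta_i^2}+1$ comes out.

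Concretely, first I would reindex the time sum by the play count of arm $i^*$: every round on which $\{I_t=i^*, E^\mu_{i^*}(t), \overline{E^\theta_{i^*}(t)}\}$ occurs increments $n_{i^*}$, and between consecutive pulls of $i^*$ the empirical mean $\widehat\mu_{i^*}$ is frozen, so there is at most one contribution for each value $s=n_{i^*}(t-1)$. Hence the left-hand sum is at most $\sum_{s\geq 0}$ of the conditional probability that, at the pull taking $n_{i^*}$ from $s$ to $s+1$, one has $\widehat\mu_{i^*}(t-1)\leq v_i$ yet $\theta_{i^*}(t)>w_i$. Next I would split this sum at $L:=\frac{18\ln T}{\Delta_i^2}$: for $s\leq L$ bound the summand trivially by $1$, contributing at most $L$; for $s>L$, condition on $E^\mu_{i^*}(t)$ so that $\theta_{i^*}(t)\sim\mathcal{N}(\widehat\mu_{i^*}(t-1),1/s)$ with $\widehat\mu_{i^*}(t-1)\leq v_i$, and apply the Gaussian tail (Fact~\ref{fact:gaussian-concentration}) to obtain $\PP(\theta_{i^*}(t)>w_i\mid E^\mu_{i^*}(t))\leq \tfrac12\exp(-s(w_i-v_i)^2/2)=\tfrac12\exp(-s\Delta_i^2/18)$, whose sum over $s>L$ is $O(1)\leq 1$ by the choice of $L$.

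The step I expect to be the only real subtlety --- and the entire content of the remark preceding the statement --- is verifying that the original proof of Lemma~\ref{lem:ts-ub-1} uses the comparison arm $i^*$ only through its fresh sample, and the tracked arm only through (i) concentration of its frozen empirical mean below the lower threshold and (ii) anti-concentration of its Gaussian sample above the upper threshold. Neither property refers to the sign of $\mu-\mu_{i^*}$; both use only the numerical gap $\tfrac{\Delta_i}{3}$ and the (sub-)Gaussian tail bounds. Consequently, swapping the roles so that $i^*$ itself is the tracked arm with thresholds $v_i<w_i$ leaves every estimate unchanged, and the bound $\frac{18\ln T}{\Delta_i^2}+1$ transfers verbatim.
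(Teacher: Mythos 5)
Your proposal is correct and matches the paper's proof exactly: the paper's argument for this lemma is the one-line observation that Lemma 2.16 of \citet{AG2017} (i.e., Lemma~\ref{lem:ts-ub-1}) applies verbatim when arm $i^*$ is treated as a standard sub-optimal arm, which is precisely your reduction. Your additional check that the threshold gap $w_i - v_i = \Delta_i/3$ coincides with $y_i - x_i$ and that the argument never uses the sign of the suboptimality gap is exactly the justification the paper leaves implicit.
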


\begin{proof}
The proof of Lemma 2.16 in \cite{AG2017} can be directly applied here by regarding arm $i^*$ as a standard sub-optimal arm $i$.
\end{proof}

What remains is to bound $\sum_{t=K+1}^T\PP\left(I_t =i^*, E_{i^*}^\theta(t),E_{i^*}^\mu(t)\right)$. To this end, we show some auxiliary lemmas in the following. Lemma~\ref{lem:ts-lb-aux-1} mimics Lemma 2.8 in \cite{AG2017}, which bridges the probability that arm $i^*$ will be pulled and the probability that arm $i$ will be pulled at time $t$. Lemma~\ref{lem:ts-lb-aux-2} bounds the term $\E\left[\frac{1}{q_{i,\tau_{i,s}+1}}-1\right]$ by a reduction to the case shown in Lemma~\ref{lem:ts-ub-2}.

\begin{lemma}\label{lem:ts-lb-aux-1}
	For any instantiation $F_{t-1}$ of $\mathcal{F}_{t-1}$, let $q_{i,t} := \PP\left(\theta_i(t)>w_i\Big|F_{t-1}\right)$, we have
	\begin{eqnarray*}
		\PP\left(I_t=i^*,E_{i^*}^\theta(t),E_{i^*}^\mu(t)\Big|F_{t-1}\right) \leq \frac{1-q_{i,t}}{q_{i,t}}\PP\left(I_t=i, E_{i^*}^\theta(t),E_{i^*}^\mu(t)\Big|F_{t-1}\right)
	\end{eqnarray*}
\end{lemma}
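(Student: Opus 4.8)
The plan is to mirror the argument of Lemma~2.8 in \cite{AG2017}, but with the roles of the optimal arm $i^*$ and the suboptimal arm $i$ interchanged, since here we wish to upper bound the probability of pulling $i^*$ (rather than $i$) in terms of the competitor arm's sampling probability $q_{i,t}$. First I would observe that $E^\mu_{i^*}(t)$ is $\mathcal{F}_{t-1}$-measurable, so conditioned on a fixed instantiation $F_{t-1}$ it is a deterministic indicator; hence the factor $\1\{E^\mu_{i^*}(t)\}$ is common to both sides and I may assume it holds (otherwise both sides vanish and the inequality is trivial). It then suffices to prove $\PP(I_t=i^*,E^\theta_{i^*}(t)\mid F_{t-1})\le \frac{1-q_{i,t}}{q_{i,t}}\,\PP(I_t=i,E^\theta_{i^*}(t)\mid F_{t-1})$.

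The key step is to condition further on the samples of all arms other than $i$, writing $N=\max_{j\neq i}\theta_j(t)$ for their maximum; this conditioning determines $N$, determines whether $i^*$ achieves this maximum, and determines whether $E^\theta_{i^*}(t)=\{\theta_{i^*}(t)\le w_i\}$ holds, while leaving only the independent sample $\theta_i(t)$ random. I would then split into two cases according to whether $i^*$ attains the maximum $N$ among the non-$i$ arms. If it does not, then $\{I_t=i^*\}$ is impossible, since some other non-$i$ arm already exceeds $\theta_{i^*}(t)$, so the left conditional probability is $0$ and the inequality is immediate. If $i^*$ does attain the maximum, then on $E^\theta_{i^*}(t)$ we have $N=\theta_{i^*}(t)\le w_i$, and because ties occur with probability zero under Gaussian sampling, $\{I_t=i^*\}$ reduces to $\{\theta_i(t)\le N\}$ and $\{I_t=i\}$ reduces to $\{\theta_i(t)>N\}$.

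It then remains to compare the two conditional probabilities $\PP(\theta_i(t)\le N)$ and $\PP(\theta_i(t)>N)$. Since $N\le w_i$ on the relevant event and $q_{i,t}=\PP(\theta_i(t)>w_i\mid F_{t-1})$, monotonicity of the Gaussian tail gives $\PP(\theta_i(t)>N)\ge q_{i,t}$. Writing $\PP(\theta_i(t)\le N)=\frac{1-\PP(\theta_i(t)>N)}{\PP(\theta_i(t)>N)}\cdot\PP(\theta_i(t)>N)$ and using that $x\mapsto \frac{1-x}{x}$ is decreasing on $(0,1)$ together with $\PP(\theta_i(t)>N)\ge q_{i,t}$, I obtain the pointwise bound $\PP(I_t=i^*,E^\theta_{i^*}(t)\mid \cdot)\le \frac{1-q_{i,t}}{q_{i,t}}\,\PP(I_t=i,E^\theta_{i^*}(t)\mid \cdot)$ for each conditioning. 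Integrating over the samples of the non-$i$ arms and reinstating the deterministic factor $\1\{E^\mu_{i^*}(t)\}$ yields the lemma. The main obstacle I anticipate is the bookkeeping in the conditioning step: one must argue carefully that $\{I_t=i^*\}$ forces $i^*$ to be the argmax among the non-$i$ arms, and that precisely on this event $E^\theta_{i^*}(t)$ pins $N$ at or below $w_i$, so that $q_{i,t}$ is exactly the right comparison threshold; once this is set up correctly, the remaining content is only the monotonicity of $(1-x)/x$.
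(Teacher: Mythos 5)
Your proof is correct, and it reaches the paper's inequality by a mechanically different route. The paper (mirroring Lemma 2.8 of Agrawal--Goyal, with the roles of $i$ and $i^*$ swapped) never conditions on the non-$i$ samples: after the same reduction that lets one assume $E^\mu_{i^*}(t)$ holds, it relaxes $\{I_t=i^*\}\cap E^\theta_{i^*}(t)$ to the event $\{\theta_j(t)\le w_i,\ \forall j\}$, lower bounds $\{I_t=i\}\cap E^\theta_{i^*}(t)$ by $\{\theta_i(t)> w_i\ge \theta_j(t),\ \forall j\ne i\}$, and factors both probabilities using conditional independence of $\theta_i(t)$ from the other samples; the common factor $\PP\left(\theta_j(t)\le w_i,\ \forall j\ne i\,\big|\,E^\theta_{i^*}(t),F_{t-1}\right)$ then cancels and the ratio of the two $\theta_i$-marginals is exactly $(1-q_{i,t})/q_{i,t}$. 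You instead disintegrate over all non-$i$ samples, keep the exact pulling events with the random threshold $N=\max_{j\ne i}\theta_j(t)$, and convert $N$ into $w_i$ only at the end, via $\PP(\theta_i(t)>N)\ge q_{i,t}$ on $E^\theta_{i^*}(t)$ and monotonicity of $x\mapsto(1-x)/x$. Both arguments rest on the same two pillars --- conditional independence of $\theta_i(t)$ given $F_{t-1}$, and the fact that $E^\theta_{i^*}(t)$ pins the competition threshold at or below $w_i$ --- but yours yields a pointwise inequality before integration and makes the argmax/tie-breaking bookkeeping explicit (including the case where $i^*$ is not even the maximizer among the non-$i$ arms, which the paper's relaxation silently absorbs), whereas the paper's fixed-threshold relaxation is shorter and needs no case analysis. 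Your handling of the deterministic factor $\1\{E^\mu_{i^*}(t)\}$ and the final integration over the non-$i$ samples (legitimate since $q_{i,t}$ is $F_{t-1}$-measurable, so the ratio pulls out of the expectation) is also sound.
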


\begin{proof}
Since $E^\mu_{i^*}(t)$ is only determined by the instantiation $F_{t-1}$ of $\mathcal{F}_{t-1}$, we can assume event 	$E^\mu_{i^*}(t)$ is true without loss of generality. Then, it is sufficient to show that for any $F_{t-1}$ we have
\begin{eqnarray*}
	\PP\left(I_t = i^*\Big|E^\theta_{i^*}(t), F_{t-1}\right) \leq \frac{1-q_{i,t}}{q_{i,t}}\PP\left(I_t=i, \Big|E_{i^*}^\theta(t),F_{t-1}\right)
\end{eqnarray*}
Note, given $E_{i^*}^\theta(t)$, $I_t=i^*$ implies $\theta_j(t)\leq w_i, \forall j$, meanwhile, $\theta_i(t)$ is independent with $\theta_j(t), j\neq i$, given $\mathcal{F}_{t-1}=F_{t-1}$. Therefore, we have
\begin{align*}
\PP\left(I_t = i^*\Big|E^\theta_{i^*}(t), F_{t-1}\right)&\leq \PP\left(\theta_j(t)\leq w_i,\forall j\Big|E^\theta_{i^*}(t), F_{t-1}\right)\\
& = \PP\left(\theta_i(t)\leq w_i\Big|F_{t-1}\right)\cdot \PP\left(\theta_j(t)\leq w_i, \forall j\neq i\Big|E^\theta_{i^*}(t), F_{t-1}\right)
\end{align*}
On the other side, 
\begin{align*}
\PP\left(I_t = i\Big|E^\theta_{i^*}(t), F_{t-1}\right)&\geq \PP\left(\theta_i(t)> w_i\geq\theta_j(t),\forall j\neq i\Big|E^\theta_{i^*}(t), F_{t-1}\right)\\
& = \PP\left(\theta_i(t)> w_i\Big|F_{t-1}\right)\cdot \PP\left(\theta_j(t)\leq w_i, \forall j\neq i\Big|E^\theta_{i^*}(t), F_{t-1}\right)
\end{align*}
Thus, the above two inequalities implies the correctness of the Lemma.
\end{proof}

\begin{lemma}\label{lem:ts-lb-aux-2}
	Let $w_i=\mu_{i^*}+\frac{2\Delta_i}{3}$. For any $s\geq 1$, given $n_{i}(\tau_{i,s})\leq \frac{B_i}{2\Delta_i}$, we have
	$$
	\E\left[\frac{1}{q_{i,\tau_{i,s}+1}} - 1\Big|n_{i}(\tau_{i,s})\leq \frac{B_i}{2\Delta_i}\right]\leq \left\{
	\begin{array}{cc}
	e^{11/4\sigma^2} + \frac{\pi^2}{3}& \forall s \\
	\frac{1}{T\Delta_i}& \text{if } s\geq L_i(T)
	\end{array}
	\right.
	$$
	where $L_i(T) = \frac{72\ln(T\Delta_i^2)\cdot\max\{1, \sigma^2\}}{\Delta_i^2}$.
\end{lemma}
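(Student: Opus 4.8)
The plan is to reduce this bound to the geometric‑random‑variable argument already established in Lemma~\ref{lem:ts-ub-geometric} (the extension of Lemma 2.13 in \cite{AG2017}), by observing that the \lsi boost makes the strategic arm $i$'s posterior behave exactly like the optimal arm's posterior did in the upper‑bound analysis. The crucial first step is to record what the conditioning buys us: since $\tau_{i,s}$ is the $s$-th pull of arm $i$, we have $n_i(\tau_{i,s})=s$, so the event $n_i(\tau_{i,s})\leq \frac{B_i}{2\Delta_i}$ is simply $s\leq \frac{B_i}{2\Delta_i}$. Under \lsi the whole budget $B_i$ is spent by the first pull, hence $\beta^{(i)}_{\tau_{i,s}}=B_i$ and the sampling mean at time $\tau_{i,s}+1$ is $\widetilde{\mu}_i(\tau_{i,s})=\widehat{\mu}_i(\tau_{i,s})+\frac{B_i}{s}\geq \widehat{\mu}_i(\tau_{i,s})+2\Delta_i$, while the sampling variance is $\frac1s$.

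Next I would verify that this boosted mean sits above the threshold $w_i=\mu_{i^*}+\frac{2\Delta_i}{3}$ with exactly the $\frac{\Delta_i}{3}$ margin that drove the proof of Lemma~\ref{lem:ts-ub-geometric}. Indeed $w_i-2\Delta_i=\mu_i-\frac{\Delta_i}{3}$, so for an integer $\gamma\geq 1$ and $z=2\sigma\sqrt{\ln\gamma}$,
\begin{align*}
\PP\Big(\widetilde{\mu}_i(\tau_{i,s})+\tfrac{z}{\sqrt s}\geq w_i\Big)
&\geq \PP\Big(\widehat{\mu}_i(\tau_{i,s})\geq \mu_i-\tfrac{\Delta_i}{3}-\tfrac{z}{\sqrt s}\Big)
\geq \PP\Big(\widehat{\mu}_i(\tau_{i,s})\geq \mu_i-\tfrac{z}{\sqrt s}\Big)
\geq 1-e^{-z^2/2\sigma^2}=1-\tfrac{1}{\gamma^2},
\end{align*}
where the middle inequality only enlarges the event and the last uses Fact~\ref{fact:sub-gaussian} with $n_i(\tau_{i,s})=s$. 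This is the precise analogue of the bound $\PP(\widehat{\mu}_{i^*}+\frac{z}{\sqrt j}\geq y_i)\geq 1-\frac{1}{\gamma^2}$ used for $p_{i,\tau_{i^*,s}+1}$, with $\widetilde{\mu}_i$ playing the role of $\widehat{\mu}_{i^*}$ and $w_i$ the role of $y_i$.

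With this substitution in hand, I would let $G_s$ be the geometric random variable (defined as in Lemma~\ref{lem:ts-ub-geometric}) counting independent samples from $\mathcal{N}(\widetilde{\mu}_i(\tau_{i,s}),\frac1s)$ until one exceeds $w_i$, so that $\E[\frac{1}{q_{i,\tau_{i,s}+1}}-1]=\E[G_s]$, and then run the two estimates of Lemma~\ref{lem:ts-ub-geometric} verbatim. The universal estimate, valid for $\gamma>e^{11/4\sigma^2}$, gives $\PP(G_s<\gamma)\geq (1-\frac{1}{\gamma^2})^2\geq 1-\frac{2}{\gamma^2}$, whence $\E[G_s]=\sum_\gamma \PP(G_s\geq\gamma)\leq e^{11/4\sigma^2}+\frac{\pi^2}{3}$ for every $s$. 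For the refined bound I would invoke the same tail computation from the proof of Lemma 2.13 in \cite{AG2017}, which yields $\E[G_s]\leq \big[(1-\tfrac12 e^{-s\Delta_i^2/72})(1-e^{-s\Delta_i^2/72\sigma^2})\big]^{-1}$; substituting $s\geq L_i(T)=\frac{72\ln(T\Delta_i^2)\max\{1,\sigma^2\}}{\Delta_i^2}$ makes both factors at least $1-\frac{1}{T\Delta_i^2}$ and drives the whole expression below $\frac{1}{T\Delta_i}$.

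The main obstacle is bookkeeping rather than conceptual: I must check that the $\frac{\Delta_i}{3}$ margin genuinely survives the $2\Delta_i$ boost (this is exactly why the threshold was set at $\mu_{i^*}+\frac{2\Delta_i}{3}$ rather than nearer $\mu_{i^*}$), and that the conditioning $n_i(\tau_{i,s})\leq\frac{B_i}{2\Delta_i}$ interacts cleanly with the concentration of $\widehat{\mu}_i(\tau_{i,s})$ — in particular that the history $\mathcal{F}_{\tau_{i,s}}$ fixes $s$ and the empirical mean, so $\theta_i(\tau_{i,s}+1)$ really is a fresh Gaussian draw with the stated parameters. Once these are pinned down, every remaining step is a transcription of the already‑verified argument for $p_{i,\tau_{i^*,s}+1}$.
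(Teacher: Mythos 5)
Your proposal is correct and takes essentially the same route as the paper: both proofs reduce to Lemma~\ref{lem:ts-ub-geometric} via the observation that under \lsi{}the boost $B_i/n_i(\tau_{i,s}) \geq 2\Delta_i$ shifts the effective threshold from $w_i$ to $w_i - 2\Delta_i = \mu_i - \Delta_i/3$, recreating exactly the $\Delta_i/3$ margin used for the optimal arm (the paper phrases this shift as Gaussian stochastic dominance, $q_{i,\tau_{i,s}+1} \geq \eta_{i,\tau_{i,s}+1}$ with $\eta$ defined from the unboosted empirical mean, while you substitute the shift inline, but the computation is identical). The only blemish, which you share with the paper itself, is the refined bound: the tail estimate actually delivers $4/(T\Delta_i^2)$ as in Lemma~\ref{lem:ts-ub-geometric}, not $1/(T\Delta_i)$ (the two are incomparable unless $\Delta_i$ is large), and since the downstream Lemma~\ref{lem:ts-lb-3} consumes the $4/(T\Delta_i^2)$ form, this appears to be a typo in the lemma statement rather than a gap in your argument.
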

\begin{proof}
	We prove this Lemma by a reduction to Lemma~\ref{lem:ts-ub-geometric}.
	First, we observe $\theta_{i}(\tau_{i,s}+1)\sim \mathcal{N}\left(\widetilde{\mu}_i(\tau_{i,s}), \frac{1}{n_i(\tau_{i,s})}\right)$, where $\widetilde{\mu}_i(\tau_{i,s}) = \widehat{\mu}_i(\tau_{i,s}) + \frac{B_i}{n_i(\tau_{i,s})}$. Given $n_{i}(\tau_{i,s})\leq \frac{B_i}{\Delta_i}$, we have $\widetilde{\mu}_i(\tau_{i,s}) \geq \widehat{\mu}_i(\tau_{i,s}) + 2\Delta_i$. Let $\zeta_i(\tau_{i,s}+1)$ denote the random variable of Gaussian distribution $\mathcal{N}\left(\widehat{\mu}_i(\tau_{i,s}),\frac{1}{n_i(\tau_{i,s})}\right)$. By the fact that a Gaussian random variable $a \sim \mathcal{N}(m, \sigma^2)$ is stochastically dominated by any $b\sim \mathcal{N}(m',\sigma^2)$ when $m<m'$, we have for any $F_{t-1}$ of $\mathcal{F}_{t-1}$
	\begin{align*}
	q_{i,\tau_{i,s}+1}& = \PP\left(\theta_i(\tau_{i,s}+1)>w_i\Big|F_{t-1}\right)\geq \PP\left(\zeta_i(\tau_{i,s}+1)+2\Delta_i>w_i\Big|F_{t-1}\right)\\
	&= \PP\left(\zeta_i(\tau_{i,s}+1)>\mu_{i}-\frac{\Delta_i}{3}\Big|F_{t-1}\right) := \eta_{i, \tau_{i,s}+1}
	\end{align*}
	
	Therefore, $\E\left[\frac{1}{q_{i,\tau_{i,s}+1}} - 1\right] \leq \E\left[\frac{1}{\eta_{i,\tau_{i,s}+1}} - 1\right]$. Denote $u_i := \mu_{i} - \frac{\Delta_i}{3}$. Recall  $$p_{i,\tau_{i,s}+1} = \PP\left(\theta_{i^*}(\tau_{i^*,s}+1)>\mu_{i^*}-\frac{\Delta_i}{3}\Big|F_{t-1}\right),$$ we observe $\eta_{i,\tau_{i,s}+1}$ is analogous to $p_{i,\tau_{i,s}+1}$ in formula, when we replace $\mu_i$ and $\widehat{\mu}_{i}(\tau_{i,s}+1)$ by $\mu_{i^*}$ and $\widehat{\mu}_{i^*}(\tau_{i^*,s}+1)$ respectively (i.e. change arm $i$ by $i^*$). Recall the proof in Lemma~\ref{lem:ts-ub-2}, it only depends on the relationship between $y_i=\mu_{i^*}-\frac{\Delta_i}{3}$ and $\mu_{i^*}$, which is the same as $u_i$ and $\mu_i$ in $\eta_{i,\tau_{i,s}+1}$. Thus, the proof of Lemma~\ref{lem:ts-ub-2} can be directly applied here to bound $\E\left[\frac{1}{\eta_{i,\tau_{i,s}+1}} - 1\right]$. 
\end{proof}

\begin{lemma}\label{lem:ts-lb-3}
	\begin{align*}
	& \sum_{t=K+1}^T \PP\left(I_t =i^*, E_{i^*}^\theta(t),E_{i^*}^\mu(t)\right) \\
	 \leq ~ &~~ T-\sum_{i\neq i^*}\frac{B_i}{2\Delta_i} + \sum_{i\neq i^*} \left(\big(e^{11/4\sigma^2} + \frac{\pi^2}{3}\big)\cdot \frac{72\ln(T\Delta_i^2)\cdot\max\{1, \sigma^2\}}{\Delta_i^2} + \frac{4}{\Delta_i^2}\right)
	\end{align*}
\end{lemma}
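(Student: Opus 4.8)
The plan is to mirror the suboptimal-arm analysis behind Lemma~\ref{lem:ts-ub}, but with the roles of $i^*$ and a competing arm $i$ swapped: because every strategic arm inflates its mean by $B_i/n_i$ under \lsi, an arm $i$ that has been pulled only a few times looks strictly better than $i^*$, so here $i^*$ plays the part of the ``losing'' arm. For each $i\neq i^*$ I would introduce the threshold $C_i=\frac{B_i}{2\Delta_i}$, chosen so that whenever $n_i(t-1)\le C_i$ the manipulated mean satisfies $\widetilde{\mu}_i(t-1)=\widehat{\mu}_i(t-1)+\frac{B_i}{n_i(t-1)}\ge\widehat{\mu}_i(t-1)+2\Delta_i$. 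I then split the sum according to whether every competitor has already been pulled enough:
\[
\sum_{t=K+1}^T \PP\!\left(I_t=i^*,E^{\theta}_{i^*}(t),E^{\mu}_{i^*}(t)\right)\le (A)+(B),
\]
where $(A)=\sum_t \PP(I_t=i^*,\ \forall i\neq i^*:\ n_i(t-1)\ge C_i)$ and $(B)=\sum_t \PP(I_t=i^*,E^{\theta}_{i^*}(t),E^{\mu}_{i^*}(t),\ \exists i\neq i^*:\ n_i(t-1)< C_i)$.

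For $(A)$ I would argue exactly as in the bound on the analogous term in Lemma~\ref{lem:LSI-opt-arm-ub}: on the event that every $n_i(t-1)\ge C_i$ we have $n_{i^*}(t-1)\le (t-1)-\sum_{i\neq i^*}C_i\le T-\sum_{i\neq i^*}\frac{B_i}{2\Delta_i}$, and the number of rounds in which $i^*$ is pulled while its own count stays below this cap is at most $T-\sum_{i\neq i^*}\frac{B_i}{2\Delta_i}$, which gives the leading term. For $(B)$ I would apply a union bound over $i\neq i^*$, reducing it to controlling, for each fixed $i$, the quantity $\sum_t \PP(I_t=i^*,E^{\theta}_{i^*}(t),E^{\mu}_{i^*}(t),n_i(t-1)< C_i)$. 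Since $\{n_i(t-1)<C_i\}$ is $\mathcal{F}_{t-1}$-measurable, conditioning on $\mathcal{F}_{t-1}$ and invoking Lemma~\ref{lem:ts-lb-aux-1} lets me trade a pull of $i^*$ for a pull of $i$ at the cost of the factor $\frac{1-q_{i,t}}{q_{i,t}}$.

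The final step is the Agrawal--Goyal telescoping argument behind Lemma~\ref{lem:ts-ub-2} which, after interchanging $i^*$ and $i$, turns the sum of $\frac{1-q_{i,t}}{q_{i,t}}\PP(I_t=i\mid\mathcal{F}_{t-1})$ into $\sum_s \E\big[\frac{1}{q_{i,\tau_{i,s}+1}}-1\big]$; here the event $n_i(t-1)<C_i$ restricts the sum over epochs to $s< C_i$ and hence guarantees the hypothesis $n_i(\tau_{i,s})\le C_i$ of Lemma~\ref{lem:ts-lb-aux-2}. I would then split at $s=L_i(T)=\frac{72\ln(T\Delta_i^2)\max\{1,\sigma^2\}}{\Delta_i^2}$: the first $L_i(T)$ epochs each contribute at most $e^{11/4\sigma^2}+\frac{\pi^2}{3}$, producing the $\big(e^{11/4\sigma^2}+\frac{\pi^2}{3}\big)L_i(T)$ summand, while the tail epochs $s\ge L_i(T)$ are each controlled by the small bound of Lemma~\ref{lem:ts-lb-aux-2} and sum over at most $T$ epochs to the $\frac{4}{\Delta_i^2}$ summand. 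Adding $(A)$ and $(B)$ yields the claim. The main obstacle is the reduction in $(B)$: one must verify that the Agrawal--Goyal machinery, designed to bound pulls of a genuinely suboptimal arm, still applies after swapping the roles of $i^*$ and $i$, i.e.\ that the $2\Delta_i$ gap manufactured by the \lsi boost when $n_i\le C_i$ plays precisely the role of the true suboptimality gap, so that the separators $v_i=\mu_{i^*}+\frac{\Delta_i}{3}$ and $w_i=\mu_{i^*}+\frac{2\Delta_i}{3}$ partition the sampled values correctly and the stochastic-dominance reduction of Lemma~\ref{lem:ts-lb-aux-2} to the quantity $p_{i,\cdot}$ goes through.
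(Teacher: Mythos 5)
Your proposal matches the paper's proof essentially step for step: the same decomposition by the thresholds $C_i=\frac{B_i}{2\Delta_i}$, the same counting argument bounding the ``all competitors well-pulled'' term by $T-\sum_{i\neq i^*}\frac{B_i}{2\Delta_i}$, the same union bound plus Lemma~\ref{lem:ts-lb-aux-1} to trade pulls of $i^*$ for pulls of $i$, the same epoch-wise telescoping into $\sum_s \E\big[\tfrac{1}{q_{i,\tau_{i,s}+1}}-1\big]$, and the same split at $L_i(T)$ using Lemma~\ref{lem:ts-lb-aux-2}. The proposal is correct (and, incidentally, your use of the tail bound $\tfrac{4}{T\Delta_i^2}$ per epoch is the one consistent with the paper's final constant $\tfrac{4}{\Delta_i^2}$, whereas the stated bound in Lemma~\ref{lem:ts-lb-aux-2} has a minor typo).
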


\begin{proof}
We first decompose the target term by thresholding $n_i(t-1)$ as follows,
\begin{equation}\label{eq:ts-lb-1}
\begin{aligned}
& \sum_{t=K+1}^T \PP\left(I_t =i^*, E_{i^*}^\theta(t),E_{i^*}^\mu(t)\right)  \\
\leq~&~~ \E\left[\sum_{t=K+1}^T\1\Big\{I_t=i^*,E_{i^*}^\theta(t),E_{i^*}^\mu(t), \forall i\neq i^*, n_i(t-1)\geq \frac{B_i}{2\Delta_i}\Big\}\right]\\
&~~~~~+ \sum_{t=K+1}^T\PP\left(I_t=i^*,E_{i^*}^\theta(t),E_{i^*}^\mu(t), \exists i\neq i^*, n_i(t-1)\leq \frac{B_i}{2\Delta_i}\right)
\end{aligned}
\end{equation}

For the first term in above decomposition, it can be trivially upper bounded by $T-\sum_{i\neq i^*}\frac{B_i}{2\Delta_i}$. By union bound and Lemma~\ref{lem:ts-lb-aux-1}, we can bound the second term as follows,
\begin{align*}
&\sum_{t=K+1}^T\PP\left(I_t=i^*,E_{i^*}^\theta(t),E_{i^*}^\mu(t), \exists i\neq i^*, n_i(t-1)\leq \frac{B_i}{2\Delta_i}\right)\\
& \hspace{1cm} \leq \sum_{i\neq i^*} \sum_{t=K+1}^T\PP\left(I_t=i^*,E_{i^*}^\theta(t),E_{i^*}^\mu(t), \exists i\neq i^*, n_i(t-1)\leq \frac{B_i}{2\Delta_i}\right)\\
&\hspace{1cm}= \sum_{i\neq i^*}\sum_{t=K+1}^T\E\left[\PP\left(I_t=i^*,E_{i^*}^\theta(t),E_{i^*}^\mu(t), n_i(t-1)\leq \frac{B_i}{2\Delta_i}\Big|\mathcal{F}_{t-1}\right)\right]\\
&\hspace{1cm}= \sum_{i\neq i^*}\sum_{t=K+1}^T\E\left[\frac{1-q_{i,t}}{q_{i,t}}\cdot\PP\left(I_t=i,E_{i^*}^\theta(t),E_{i^*}^\mu(t), n_i(t-1)\leq \frac{B_i}{2\Delta_i}\Big|\mathcal{F}_{t-1}\right)\right]\\
&\hspace{1cm}\leq\sum_{i\neq i^*}\sum_{t=K+1}^T\E\left[\frac{1-q_{i,t}}{q_{i,t}}\cdot\PP\left(I_t=i,E_{i^*}^\theta(t),E_{i^*}^\mu(t)\Big| n_i(t-1)\leq \frac{B_i}{2\Delta_i}, \mathcal{F}_{t-1}\right)\right]\\
&\hspace{1cm}= \sum_{i\neq i^*}\sum_{t=K+1}^T\E\left[\frac{1-q_{i,t}}{q_{i,t}}\cdot\1\Big\{I_t=i,E_{i^*}^\theta(t),E_{i^*}^\mu(t)\Big\}\Big| n_i(t-1)\leq \frac{B_i}{2\Delta_i}\right]\\
\end{align*}
Observe that $q_{i,t} =\PP\left(\theta_i(t)>w_i\Big|\mathcal{F}_{t-1}\right)$ changes only at the time step after each pull of arm $i$. Therefore we can bound the above term by,
\begin{align*}
& \sum_{s=1}^{T-1}\E\left[\frac{1-q_{i,\tau_{i,s}+1}}{q_{i,\tau_{i,s}+1}}\cdot \sum_{t=\tau_{i,s}+1}^{\tau_{i,s+1}}\1\big\{I_t=i, E_{i^*}^\theta(t),E_{i^*}^\mu(t)\big\}\Big|  n_i(\tau_{i,s})\leq \frac{B_i}{2\Delta_i}\right] \\
\leq ~& ~~ \sum_{s=1}^{T-1}\E\left[\frac{1-q_{i,\tau_{i,s}+1}}{q_{i,\tau_{i,s}+1}}\Big| n_i(\tau_{i,s})\leq \frac{B_i}{2\Delta_i}\right] 
\end{align*}
Combining Lemma~\ref{lem:ts-lb-aux-2} and Equation~(\ref{eq:ts-lb-1}), we complete the proof.
\end{proof}

\section{Additional Simulations}\label{app:simulation}
We report our simulation results for bounded rewards in this section. Similarly, we also consider a stochastic bandit setting with three arms. The reward of each arm lies within the interval $[0,1]$. The distributions of rewards of each arm are $\mathtt{Beta}(1,1)$, $\mathtt{Beta}(2,1)$ and $\mathtt{Beta}(3,1)$ respectively. In $\vareps$-Greedy algorithm, we use a different $\vareps_t$ parameter, i.e. $\vareps_t = \min\{1, \frac{20}{t}\}$. We run simulations for the same settings as those in Section~\ref{sec:simulation} and report the results in Figure~\ref{fig:bounded-rgt-time} and~\ref{fig:bounded-rgt-budget}. These figures illustrate similar performances for bounded rewards as for unbounded rewards.

\begin{figure}[ht]
\begin{subfigure}{0.33\textwidth}
\centering
\includegraphics[scale=0.45]{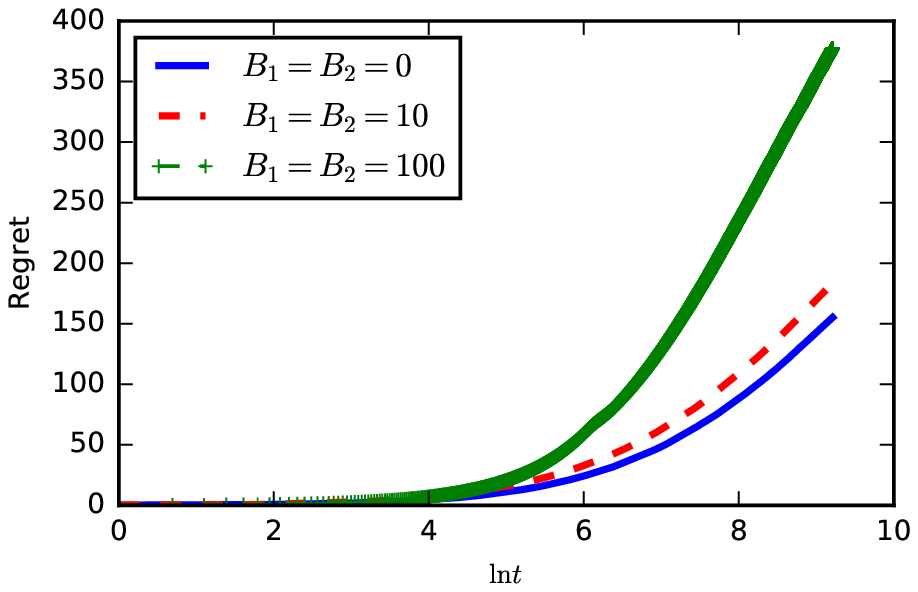}
\end{subfigure}
\begin{subfigure}{0.33\textwidth}
\centering
\includegraphics[scale=0.45]{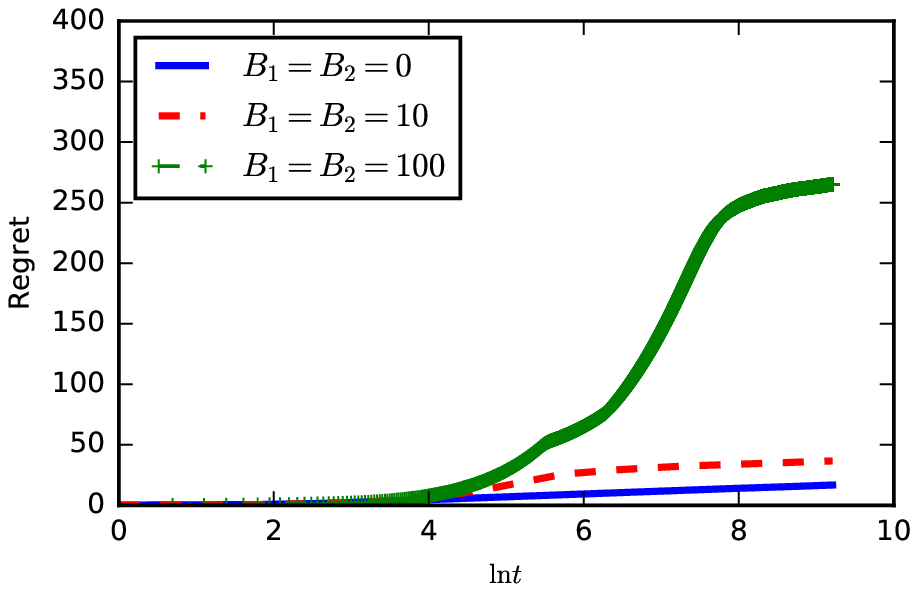}
\end{subfigure}
\begin{subfigure}{0.33\textwidth}
\centering
\includegraphics[scale=0.45]{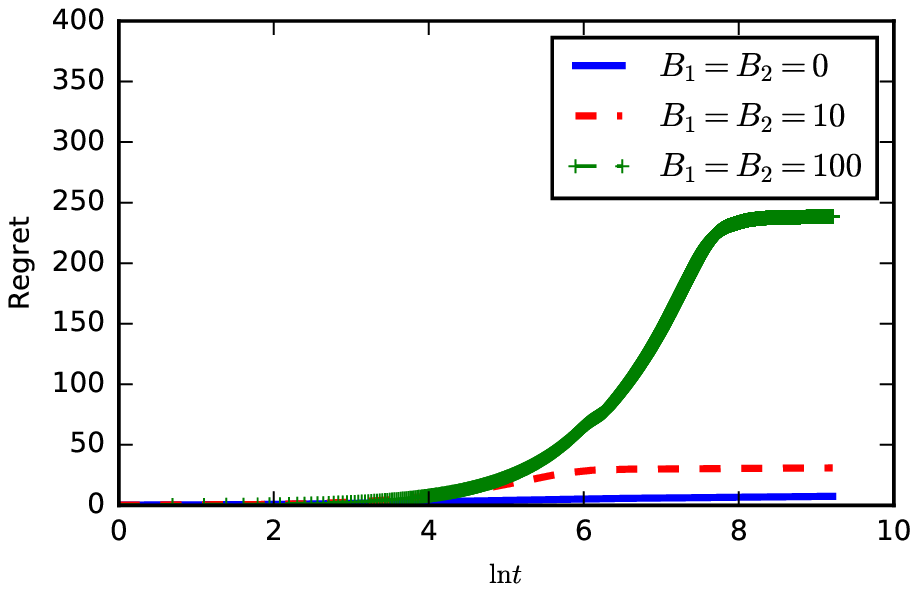}
\end{subfigure}
\caption{$[0, 1]$ bounded rewards: plots of regret with $\ln t$ for UCB principal (left), $\vareps$-Greedy principal (middle), and Thompson Sampling principal (right), as $B_1$ and $B_2$ vary. We set $B_3 = 0$ for the three algorithms.}
\label{fig:bounded-rgt-time}
\end{figure}
\begin{figure}[ht]
\begin{subfigure}{0.33\textwidth}
\centering
\includegraphics[scale=0.45]{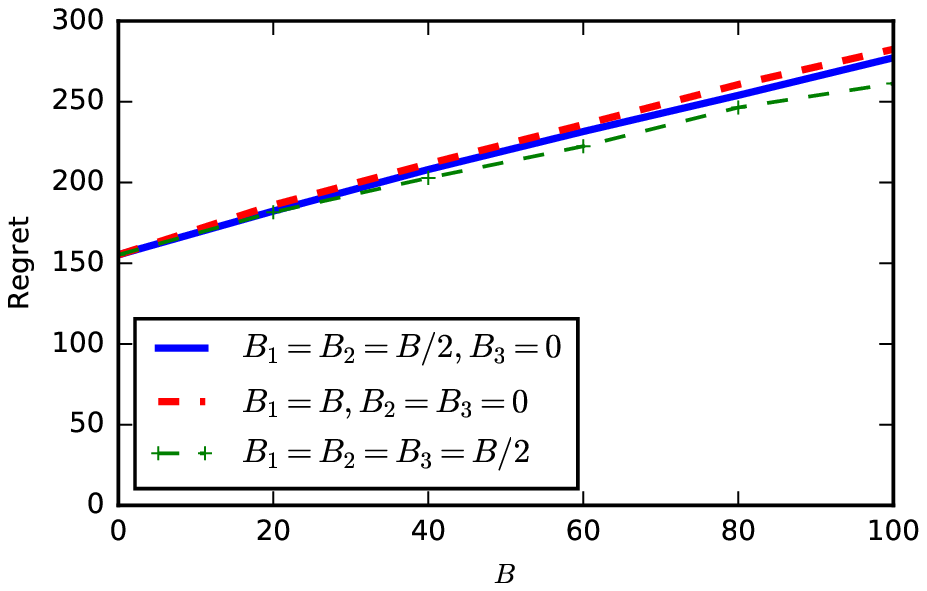}
\end{subfigure}
\begin{subfigure}{0.33\textwidth}
\centering
\includegraphics[scale=0.45]{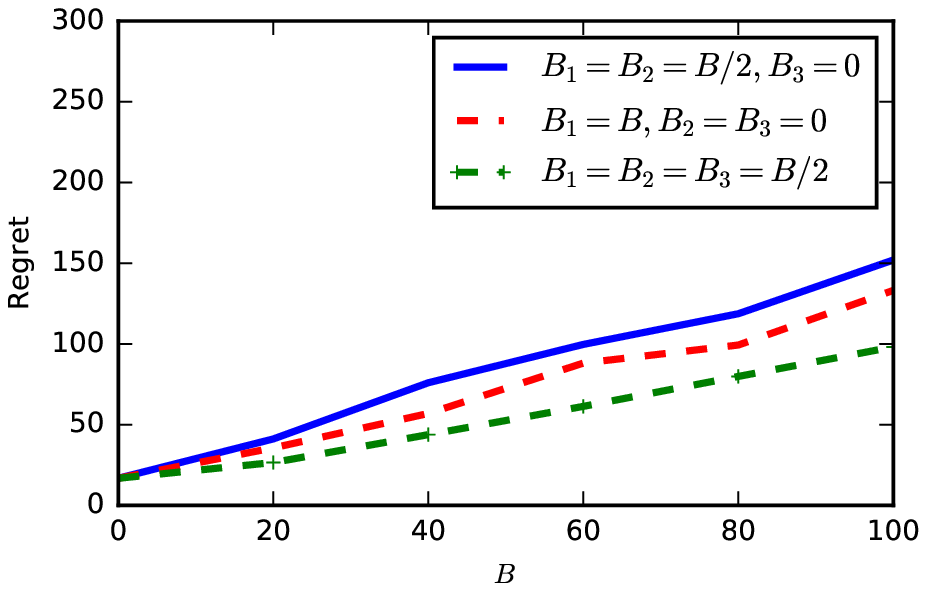}
\end{subfigure}
\begin{subfigure}{0.33\textwidth}
\centering
\includegraphics[scale=0.45]{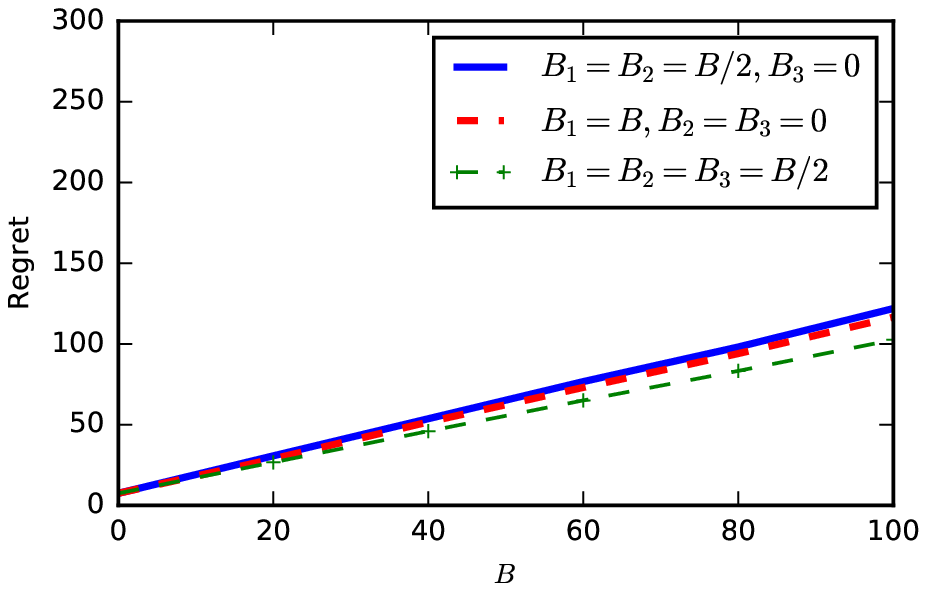}
\end{subfigure}
\caption{$[0, 1]$ bounded rewards: plots of regret with total budget $B$ of strategic arms (arm 1 and 2) for UCB principal (left), $\vareps$-Greedy principal (middle), and Thompson Sampling principal (right), as $B_i$ varies.}
\label{fig:bounded-rgt-budget}
\vspace{-10pt}
\end{figure}

\end{document}